\newif\ifarxiv
\newcommand{\dho}{\partial}
\newcommand{\cO}{\mathcal{O}}
\newtheorem{theorem}[]{Theorem}
\newtheorem*{theoreminfo*}{Theorem (Informal)}
\newtheorem*{conjecture*}{Main point}
\newtheorem{lemma}[]{Lemma}
\newcommand{\GG}[1]{}
\newcommand{\ALc}[1]{}
\newcommand{\Ltot}{L_{\mathrm{tot}}}
\newcommand{\bR}{\mathbb{R}}
\newcommand{\AutoLtwo}{\textsc{Auto$L_2$}}
\newcommand{\lexpp}[1]{\mathbb{E}_{#1}\left[}
\newcommand{\rexp}{\right]}
\title{On the training dynamics of deep networks with $L_2$ regularization}
\author{%
  Aitor Lewkowycz  \\
  Google \\ Mountain View, CA \\
  \texttt{alewkowycz@google.com}
  \And Guy Gur-Ari \\
  Google \\ Mountain View, CA \\
  \texttt{guyga@google.com} \\
}
\begin{document}

\maketitle

\begin{abstract}
We study the role of $L_2$ regularization in deep learning, and uncover simple relations between the performance of the model, the $L_2$ coefficient, the learning rate, and the number of training steps.
These empirical relations hold when the network is overparameterized. 
They can be used to predict the optimal regularization parameter of a given model.
In addition, based on these observations we propose a dynamical schedule for the regularization parameter that improves performance and speeds up training.
We test these proposals in modern image classification settings.
Finally, we show that these empirical relations can be understood theoretically in the context of infinitely wide networks.
We derive the gradient flow dynamics of such networks, and compare the role of $L_2$ regularization in this context with that of linear models. 
\end{abstract}

\section{Introduction}

Machine learning models are commonly trained with $L_2$ regularization.
This involves adding the term $\frac{1}{2} \lambda \| \theta \|_2^2$ to the loss function, where $\theta$ is the vector of model parameters and $\lambda$ is a hyperparameter.
In some cases, the theoretical motivation for using this type of regularization is clear.
For example, in the context of linear regression, $L_2$ regularization increases the bias of the learned parameters while reducing their variance across instantiations of the training data; in other words, it is a manifestation of the bias-variance tradeoff.
In statistical learning theory, a ``hard'' variant of $L_2$ regularization, in which one imposes the constraint $\| \theta \|_2 \le \epsilon$, is often employed when deriving generalization bounds.

In deep learning, the use of $L_2$ regularization is prevalent and often leads to improved performance in practical settings \citep{hinton1986}, although the theoretical motivation for its use is less clear.
Indeed, it well known that overparameterized models overfit far less than one may expect \citep{2016arXiv161103530Z}, and so the classical bias-variance tradeoff picture does not apply \citep{2017arXiv170608947N,2018arXiv181211118B,2020JSMTE..02.3401G}.
There is growing understanding that this is caused, at least in part, by the (implicit) regularization properties of stochastic gradient descent (SGD) \citep{soudry2017implicit}.
The goal of this paper is to improve our understanding of the role of $L_2$ regularization in deep learning.

\subsection{Our contribution}
\label{sec:our_contr}
We study the role of $L_2$ regularization when training over-parameterized deep networks, taken here to mean networks that can achieve training accuracy 1 when trained with SGD. 
Specifically, we consider the early stopping performance of a model, namely the maximum test accuracy a model achieves during training, as a function of the $L_2$ parameter $\lambda$.
We make the following observations based on the experimental results presented in the paper.
\begin{enumerate}
\item The number of SGD steps until a model achieves maximum performance is $t_* \approx \frac{c}{\lambda}$, where $c$ is a coefficient that depends on the data, the architecture, and all other hyperparameters.
  We find that this relationship holds across a wide range of $\lambda$ values.
\item If we train with a fixed number of steps, model performance peaks at a certain value of the $L_2$ parameter.
However, if we train for a number of steps proportional to $\lambda^{-1}$ then performance improves with decreasing $\lambda$.
In such a setup, performance becomes independent of $\lambda$ for sufficiently small $\lambda$. 
Furthermore, performance with a small, non-zero $\lambda$ is often better than performance without any $L_2$ regularization.
\end{enumerate}

Figure~\ref{fig:fig1a} shows the performance of an overparameterized network as a function of the $L_2$ parameter $\lambda$.
When the model is trained with a fixed steps budget, performance is maximized at one value of $\lambda$.
However, when the training time is proportional to $\lambda^{-1}$, performance improves and approaches a constant value as we decrease $\lambda$.

\begin{figure}[]
  \centering
  \subfloat[$\lambda$ independence]{
    \includegraphics[width=0.33\textwidth]{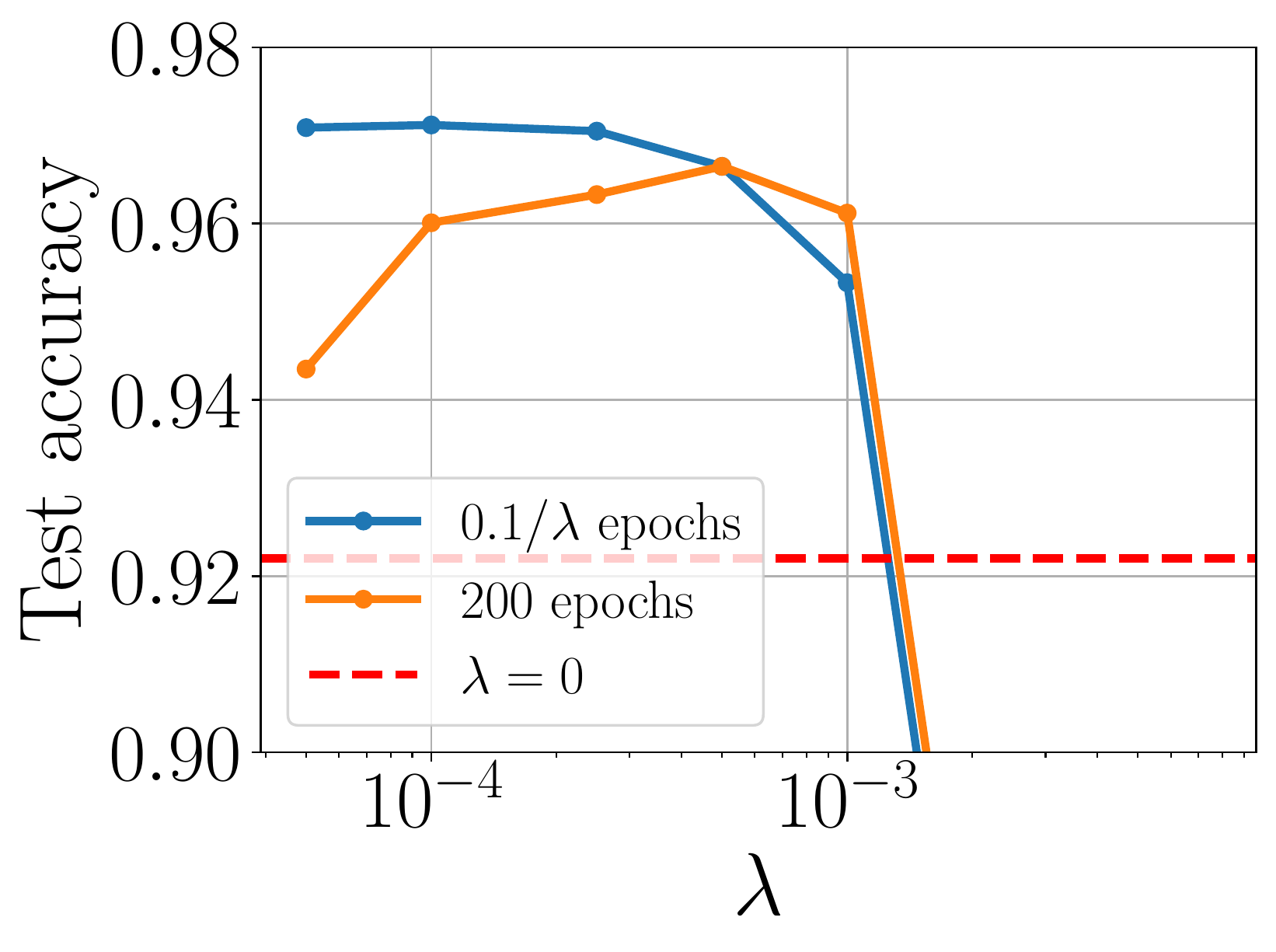}
    \label{fig:fig1a}
  }
  \subfloat[Optimal $\lambda$ prediction]{
    \includegraphics[width=0.33\textwidth]{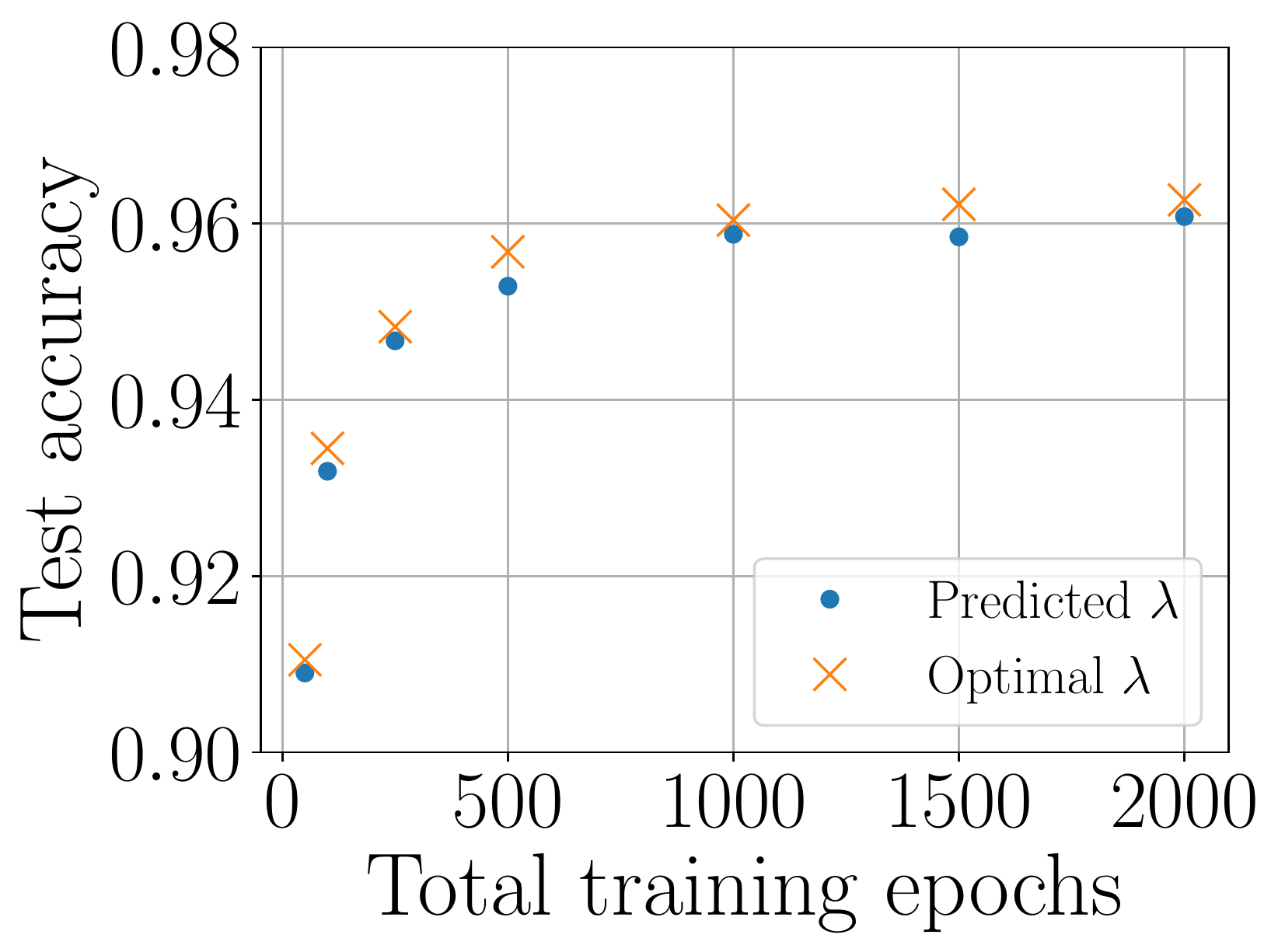}
    \label{fig:fig1b}
  }
  \subfloat[\AutoLtwo~schedule]{
    \includegraphics[width=0.33\textwidth]{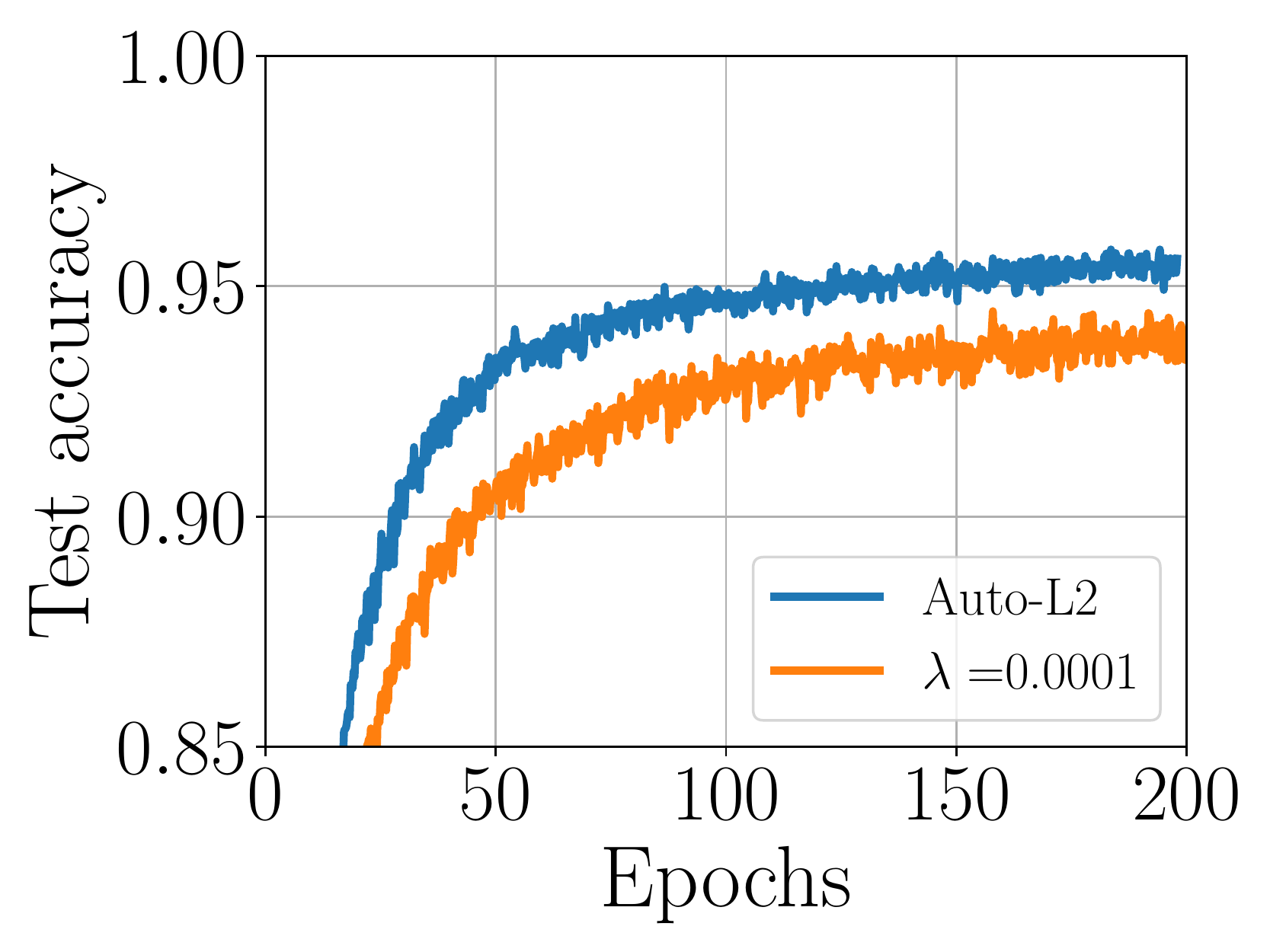}
    \label{fig:fig1c}
  }
  \caption{Wide ResNet 28-10 trained on CIFAR-10 with momentum and data augmentation. (a) Final test accuracy vs. the $L_2$ parameter $\lambda$. 
  When the network is trained for a fixed amount of epochs, optimal performance is achieved at a certain value of $\lambda$.
  But when trained for a time proportional to $\lambda^{-1}$, performance plateaus and remains constant down to the lowest values of $\lambda$ tested. This experiment includes a learning rate schedule.
  (b) Test accuracy vs. training epochs for predicted optimal $L_2$ parameter compared with the tuned parameter. 
  (c) Training curves with our dynamical $L_2$ schedule, compared with a tuned, constant $L_2$ parameter.
  }
  \label{fig:fig1}
\end{figure}

As we demonstrate in the experimental section, these observations hold for a variety of training setups which include different architectures, data sets, and optimization algorithms.
In particular, when training with vanilla SGD (without momentum), we observe that the number of steps until maximum performance depends on the learning rate $\eta$ and on $\lambda$ as $t_* \approx \frac{c'}{\eta \cdot \lambda}$.
The performance achieved after this many steps depends only weakly on the choice of learning rate.

\paragraph{Applications.}
We present two practical applications of these observations.
First, we propose a simple way to predict the optimal value of the $L_2$ parameter, based on a cheap measurement of the coefficient $c$.
Figure~\ref{fig:fig1b} compares the performance of models trained with our predicted $L_2$ parameter with that of models trained with a tuned parameter.
In this realistic setting, we find that our predicted parameter leads to performance that is within 0.4\% of the tuned performance on CIFAR-10, at a cost that is marginally higher than a single training run.
As shown below, we also find that the predicted parameter is consistently within an order of magnitude of the optimal, tuned value.

As a second application we propose \AutoLtwo, a dynamical schedule for the $L_2$ parameter. 
The idea is that large $L_2$ values achieve worse performance but also lead to faster training.
Therefore, in order to speed up training one can start with a large $L_2$ value and decay it during training (this is similar to the intuition behind learning rate schedules).
In Figure~\ref{fig:fig1c} we compare the performance of a model trained with \AutoLtwo  ~against that of a tuned but constant $L_2$ parameter, and find that \AutoLtwo ~outperforms the tuned model both in speed and in performance.

\paragraph{Learning rate schedules.}
Our empirical observations apply in the presence of learning rate schedules.
In particular, Figure~\ref{fig:fig1a} shows that the test accuracy remains approximately the same if we scale the training time as $1/\lambda$.
As to our applications, in section \ref{sec:L2opt} we propose an algorithm for predicting the optimal $L_2$ value in the presence of learning rate schedules, and the predicted value gives comparable performance to the tuned result. 
As to the \AutoLtwo~algorithm, we find that in the presence of learning rate schedules it does not perform as well as a tuned but constant $L_2$ parameter. We leave combining \AutoLtwo~with learning rate schedules to future work.


\paragraph{Theoretical contribution.}
Finally, we turn to a theoretical investigation of the empirical observations made above.
As a first attempt at explaining these effects, consider the following argument based on the loss landscape.
For overparameterized networks, the Hessian spectrum evolves rapidly during training \citep{2017arXiv170604454S,2018arXiv181204754G,2019arXiv190110159G}.
After a small number of training steps with no $L_2$ regularization, the minimum eigenvalue is found to be close to zero.
In the presence of a small $L_2$ term, we therefore expect that the minimal eigenvalue will be approximately $\lambda$.
In quadratic optimization, the convergence time is inversely proportional to the smallest eigenvalue of the Hessian
\footnote{
  In linear regression with $L_2$ regularization, optimization is controlled by a linear kernel $K = X^T X + \lambda I$, where $X$ is the sample matrix and $I$ is the identity matrix in parameter space.
Optimization in each kernel eigendirection evolves as $e^{-\gamma t}$ where $\gamma$ is the corresponding eigenvalue.
When $\lambda>0$ and the model is overparameterized, the lowest eigenvalue of the kernel will be typically close to $\lambda$, and therefore the time to convergence will be proportional to $\lambda^{-1}$. 
}
\ifarxiv
, see \cite{pmlr-v89-ali19a} for a recent discussion
\fi
.
Based on this intuition, we may then expect that convergence time will be proportional to $\lambda^{-1}$.
The fact that performance is roughly constant for sufficiently small $\lambda$ can then be explained if overfitting can be mostly attributed to optimization in the very low curvature directions 
\ifarxiv
\citep{rahaman2018spectral,whitening}.
\else
\citep{rahaman2018spectral}.
\fi
Now, our empirical finding is that the time it takes the network to reach maximum accuracy is proportional to $\lambda^{-1}$.
In some cases this is the same as the convergence time, but in other cases (see for example Figure~\ref{fig:optimalL2a}) we find that performance decays after peaking and so convergence happens later.
Therefore, the loss landscape-based explanation above is not sufficient to fully explain the effect.

To gain a better theoretical understanding, we consider the setup of an infinitely wide neural network trained using gradient flow.
We focus on networks with positive-homogeneous activations, which include deep networks with ReLU activations, fully-connected or convolutional layers, and other common components.
By analyzing the gradient flow update equations of such networks, we are able to show that the performance peaks at a time of order $\lambda^{-1}$ and deteriorates thereafter.
This is in contrast to the performance of linear models with $L_2$ regularization, where no such peak is evident.
These results are consistent with our empirical observations, and may help shed light on the underlying causes of these effects.

According to known infinite width theory, in the absence of explicit regularization, the kernel that controls network training  is constant \citep{NTK-paper}.
Our analysis extends the known results on infinitely wide network optimization, and indicates that the kernel decays in a predictable way in the presence of $L_2$ regularization.
We hope that this analysis will shed further light on the observed performance gap between infinitely wide networks which are under good theoretical control, and the networks trained in practical settings \citep{Arora-CNN,CNN-GP,wei2018regularization,lewkowycz2020large}.

\paragraph{Related works.}

$L_2$ regularization in the presence of batch-normalization \citep{bn} has been studied in \citep{laarhoven2017l2,hoffer2018norm,zhang2018mechanisms}. 
These papers discussed how the effect of $L_2$ on scale invariant models is merely of having an effective learning rate (and no $L_2$). This was made precise in \citet{li2019exponential} where they showed that this effective learning rate is $\eta_{\rm eff}=\eta e^{2 \eta \lambda t}$ (at small learning rates). 
Our theoretical analysis of large width networks will have has the same behaviour when the network is scale invariant. 
Finally, in parallel to this work, \citet{li2020reconciling} carried out a complementary analysis of the role of $L_2$ regularization in deep learning using a stochastic differential equation analysis.
Their conclusions regarding the effective learning rate in the presence of $L_2$ regularization are consistent with our  observations.

\section{Experiments}

\paragraph{Performance and time scales.}

We now turn to an empirical study of networks trained with $L_2$ regularization.
In this section we present results for a fully-connected network trained on MNIST, a Wide ResNet \citep{WRN} trained on CIFAR-10, and CNNs trained on CIFAR-10. The experimental details are in SM \ref{sec:expdetails}.
The empirical findings discussed in section \ref{sec:our_contr} hold across this variety of overparameterized setups.

Figure~\ref{fig:timedynamics} presents experimental results on fully-connected and Wide ResNet networks. 
Figure~\ref{fig:CNN} presents experiments conducted on CNNs. We find that the number of steps until optimal performance is achieved (defined here as the minimum time required to be within $.5\%$ of the maximum test accuracy) scales as $\lambda^{-1}$, as discussed in Section~\ref{sec:our_contr}. Our experiments span $6$ decades of $\eta \cdot \lambda$ (larger $\eta,\lambda$ won't train at all and smaller would take too long to train). Moreover, when we evolved the networks until they have reached optimal performance, the maximum test accuracy for smaller $L_2$ parameters did not get worse. 
We compare this against the performance of a model trained with a fixed number of epochs, reporting the maximum performance achieved during training.
In this case, we find that reducing $\lambda$ beyond a certain value does hurt performance.

While here we consider the simplified set up of vanilla SGD and no data augmentation, our observations also hold in the presence of momentum and data augmentation, see SM \ref{sec:morewrn} for more experiments. We would like to emphasize again that while the smaller $L_2$ models can reach the same test accuracy as its larger counterparts, models like WRN28-10 on CIFAR-10 need to be trained for a considerably larger number of epochs to achieve this.\footnote{The longer experiments ran for 5000 epochs while one usually trains these models for $\sim$300 epochs.}

\begin{figure}
   \subfloat[FC]{
     \includegraphics[width=0.33\textwidth]{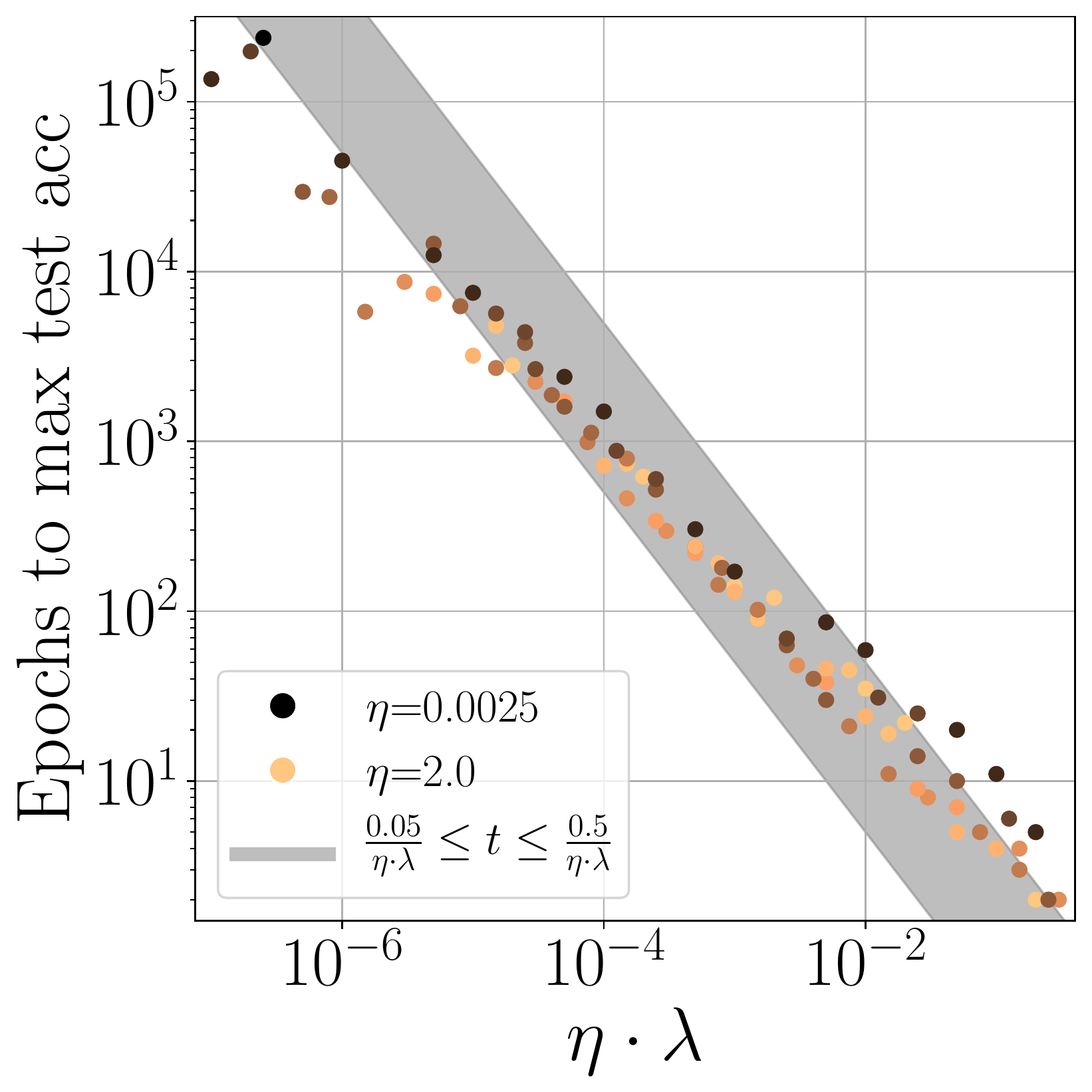}
     }
   \subfloat[FC]{
     \includegraphics[width=0.33\textwidth]{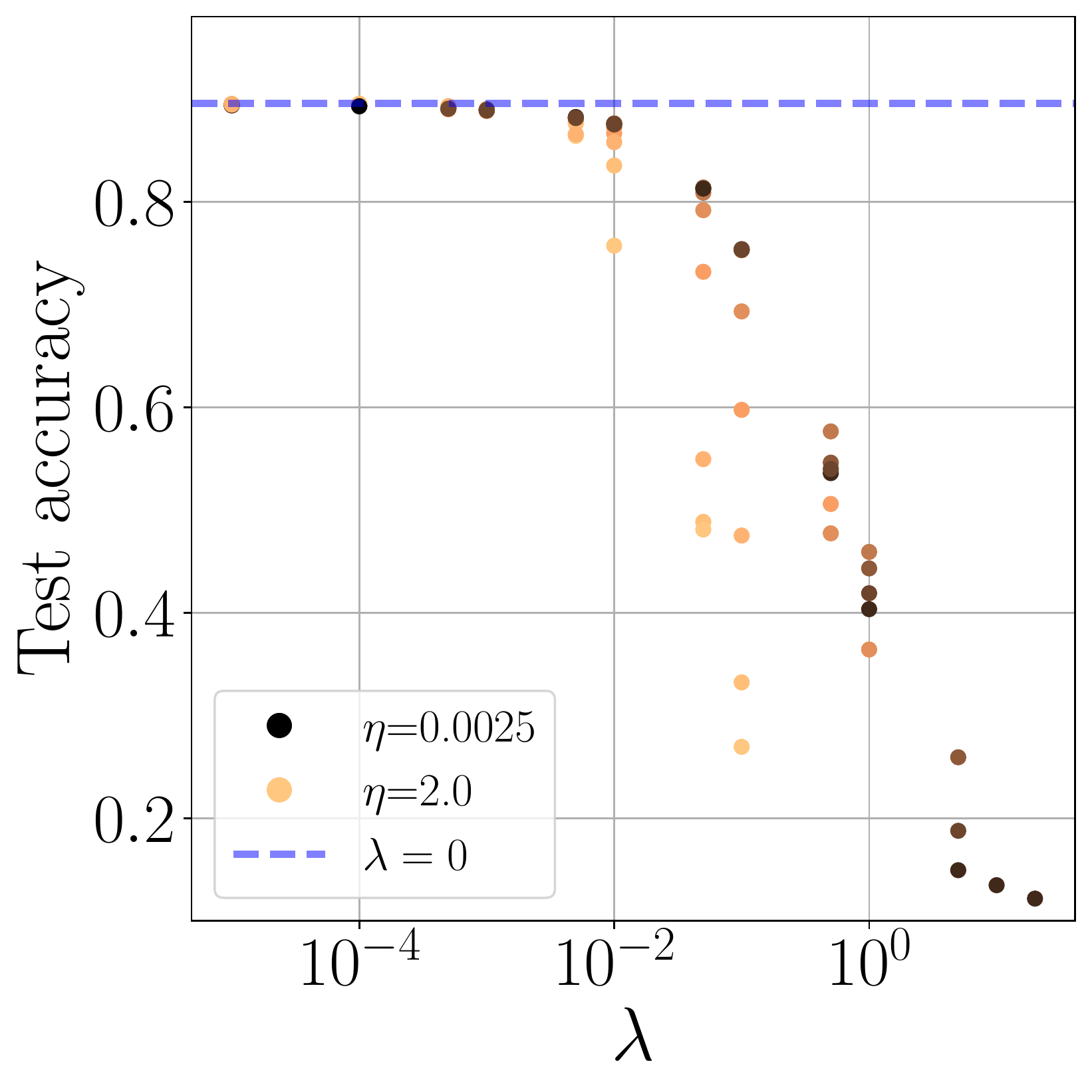}
     }
    \subfloat[FC]{
     \includegraphics[width=0.33\textwidth]{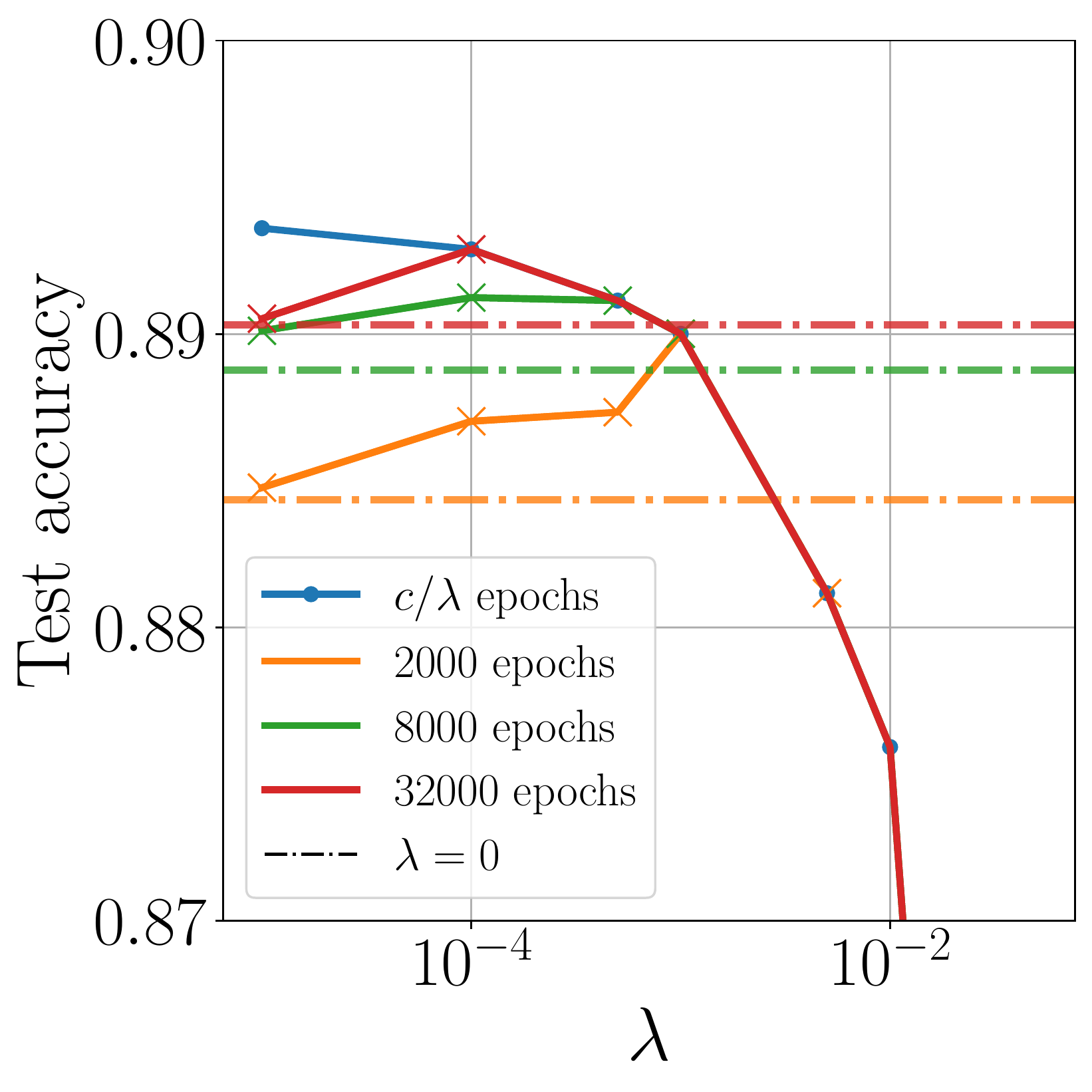}
     }
     \\
   \subfloat[WRN]{
     \includegraphics[width=0.33\textwidth]{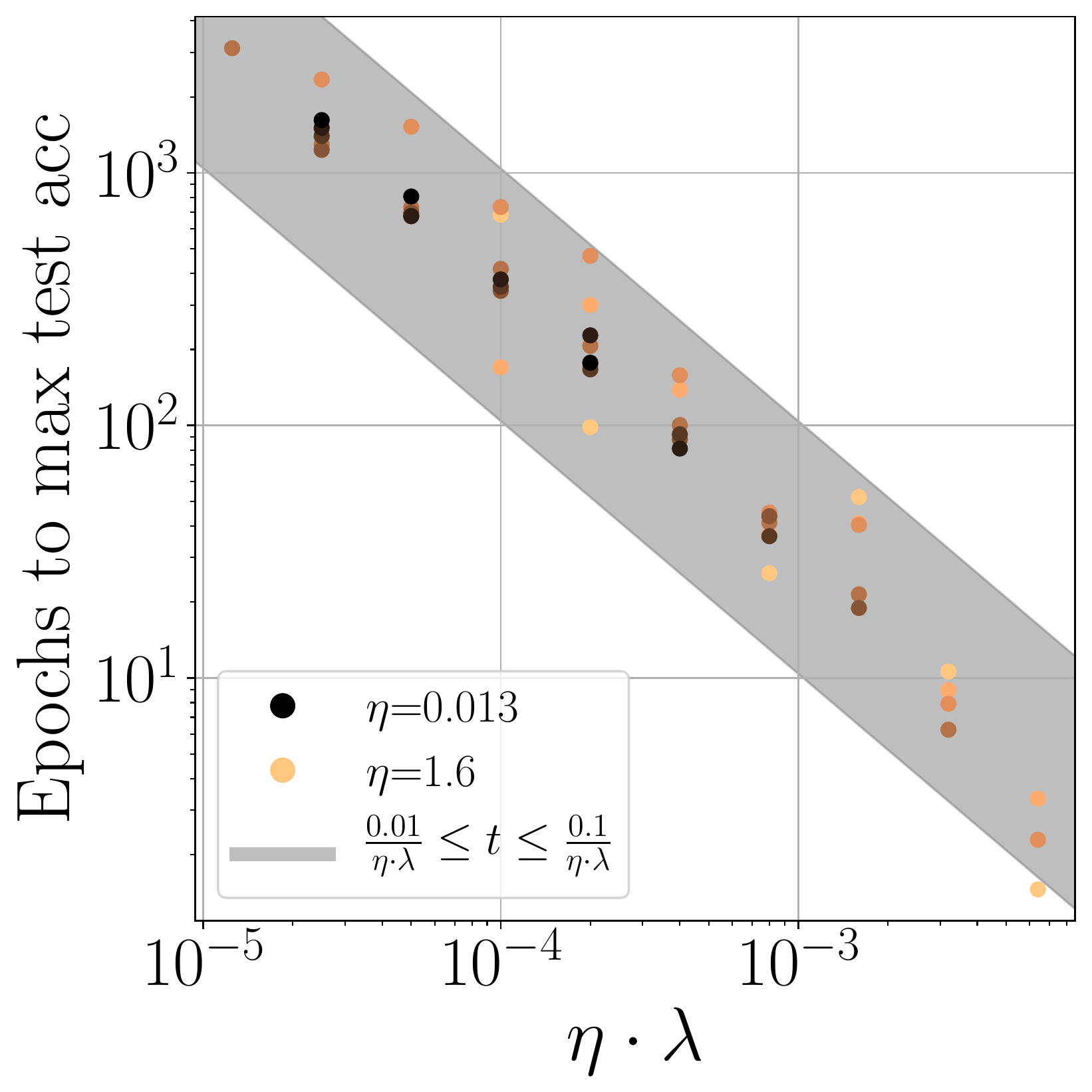}
     }
   \subfloat[WRN]{
     \includegraphics[width=0.33 \textwidth]{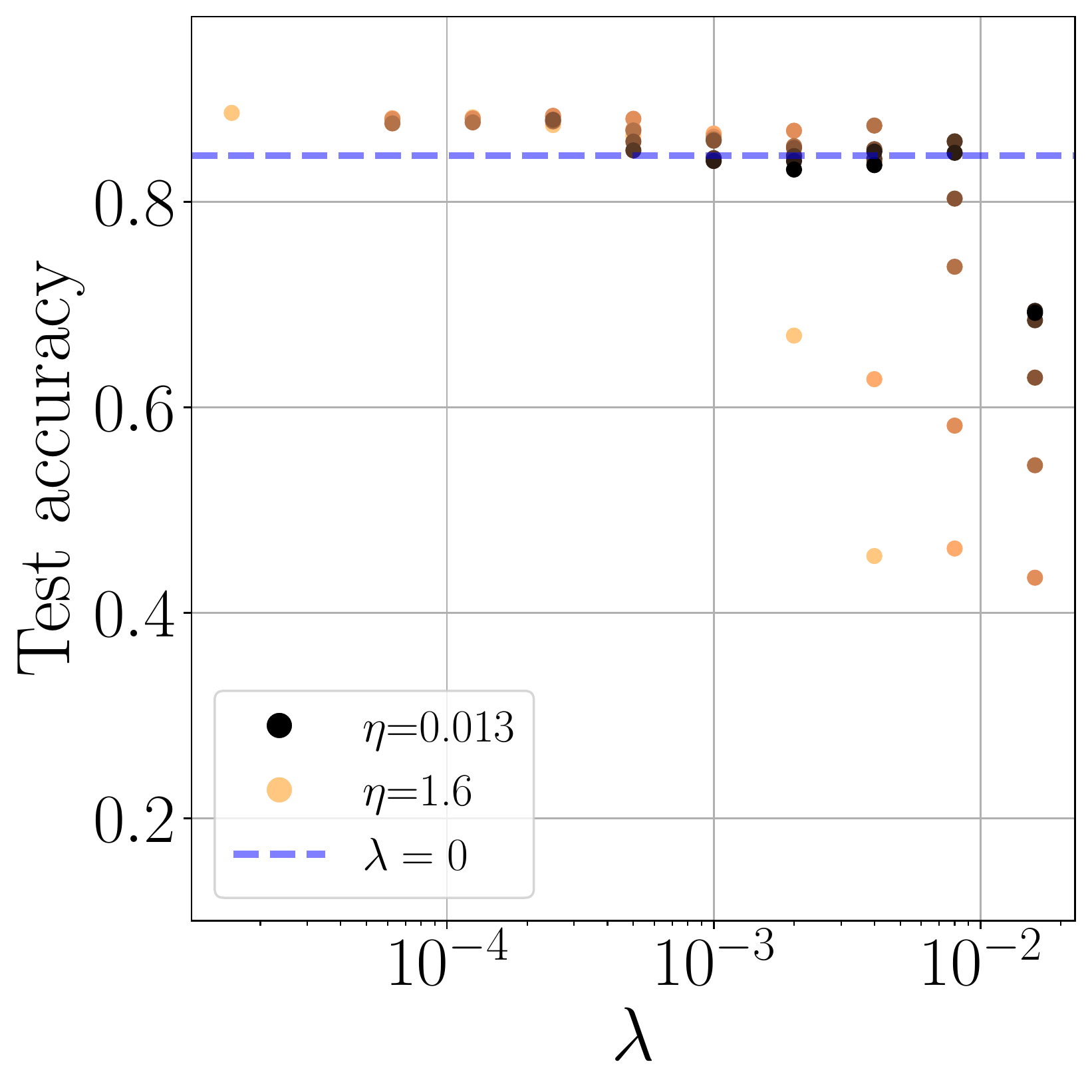}
      }
       \subfloat[WRN]{
     \includegraphics[width=0.33\textwidth]{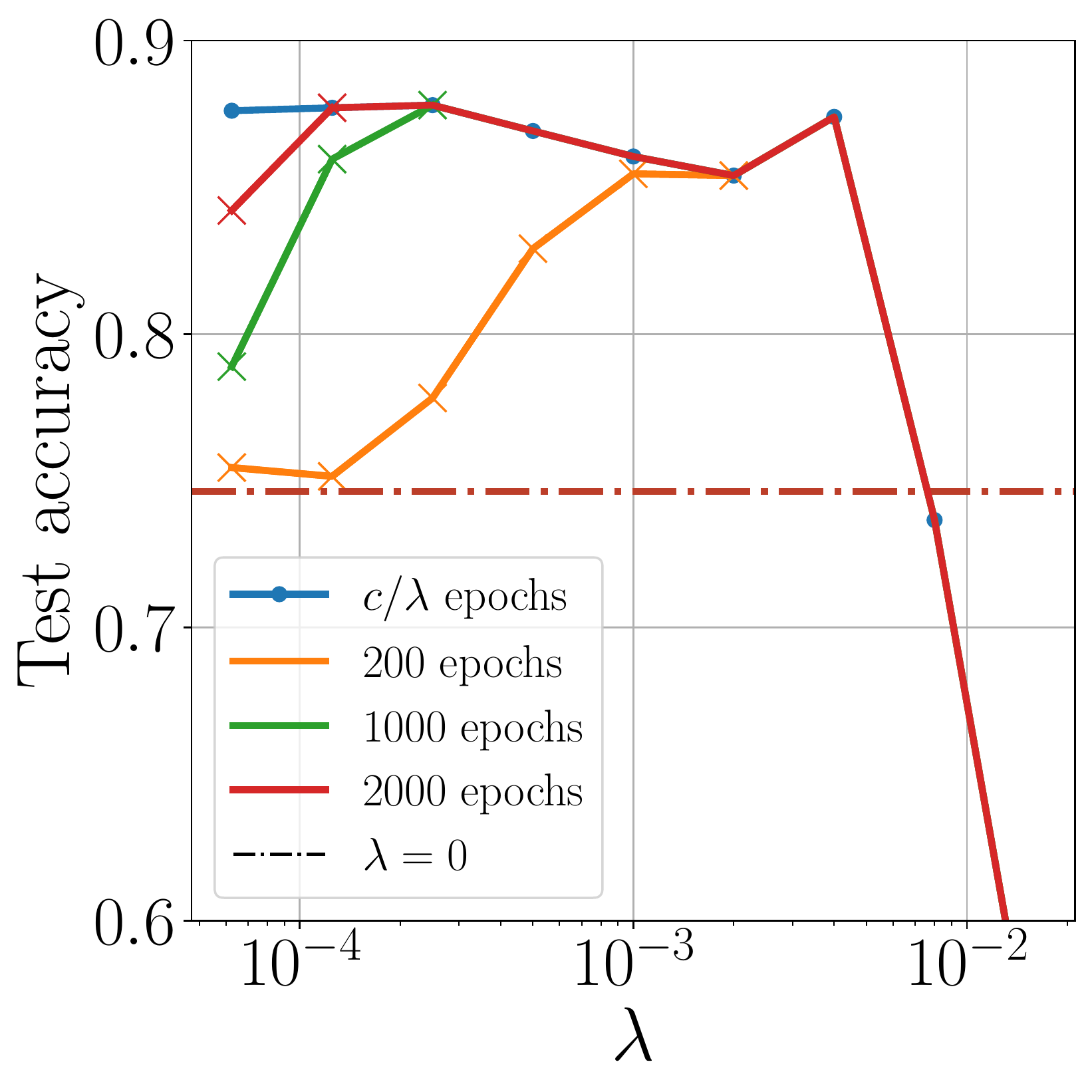}
     }

\caption{Sweep over $\eta$ and $\lambda$ illustrating how smaller $\lambda$'s require longer times to achieve the same performance. In the left, middle plots, the learning rates are logarithmically spaced between the values displayed in the legend, the specific values are in the SM \ref{sec:expdetails}. {\bf Left:} Epochs to maximum test accuracy (within $.5\%$), {\bf Middle:} Maximum test accuracy 
(the $\lambda=0$ line denotes the maximum test accuracy achieved among all learning rates), {\bf Right:} Maximum test accuracy for a fixed time budget. {\bf (a,b,c)} Fully connected 3-hidden layer neural network evaluated in $512$ MNIST samples, evolved for $t \cdot \eta \cdot \lambda = 2$.  $\eta=0.15$ in (c). {\bf (d,e,f)} A Wide Residual Network 28-10 trained on CIFAR-10 without data augmentation, evolved for $t \cdot \eta \cdot \lambda = 0.1$. In (f), $\eta=0.2$. The $\lambda=0$ line was evolved for longer than the smallest $L_2$ but there is still a gap. 
}
\label{fig:timedynamics}
\end{figure}
\begin{figure}
\subfloat[CNN No BN]{
  \includegraphics[width=0.33\textwidth]{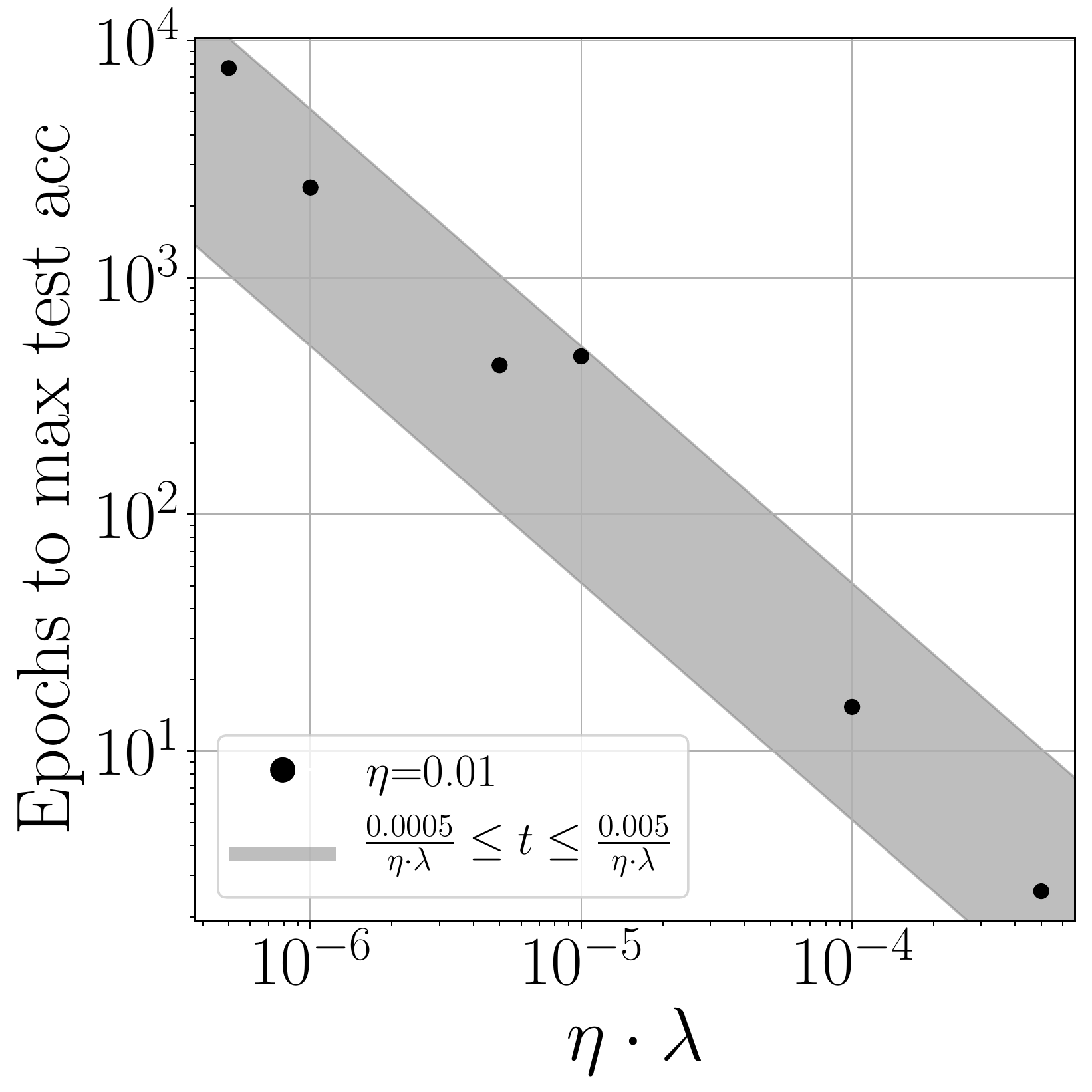}
} 
\subfloat[CNN No BN]{
  \includegraphics[width=0.33\textwidth]{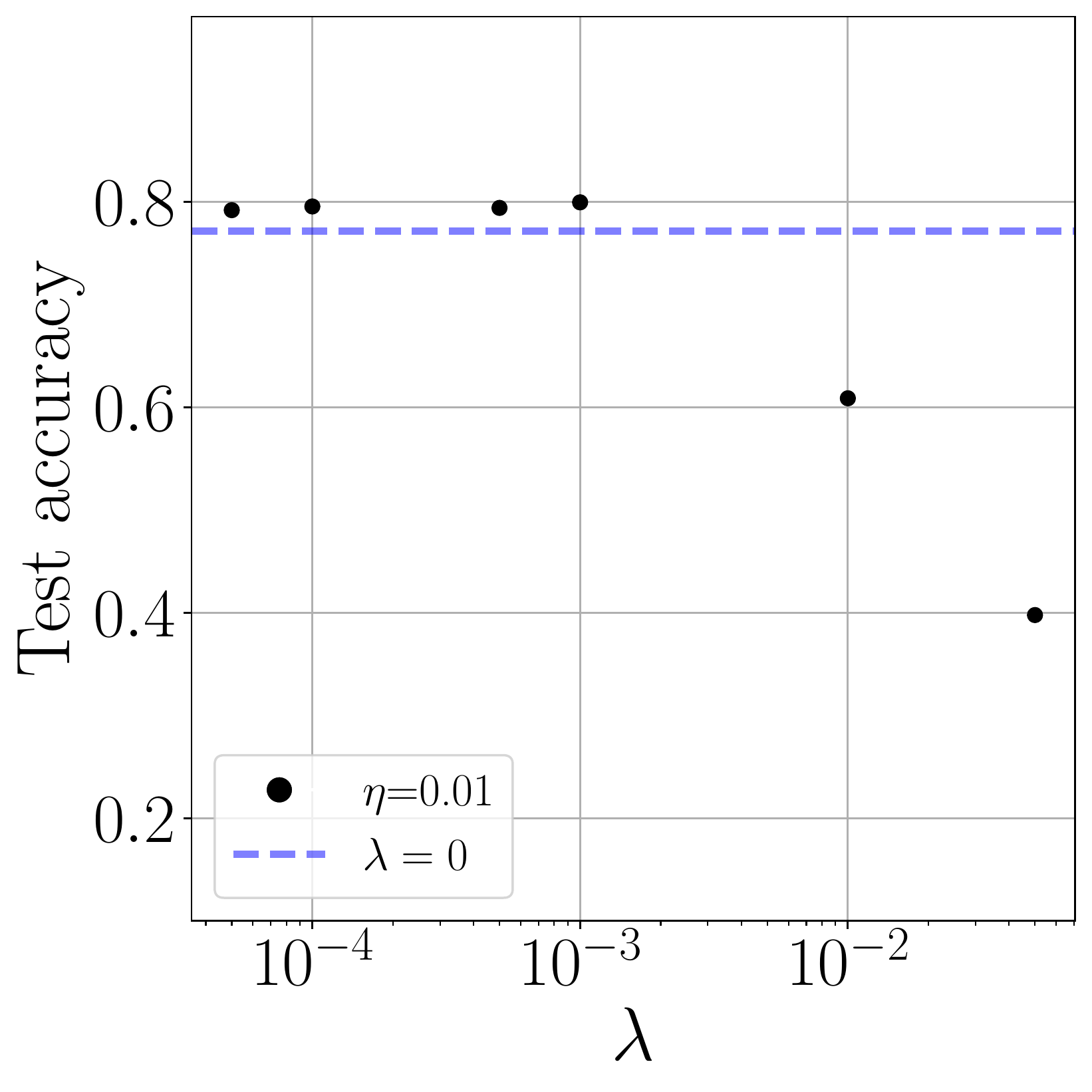}
} 
\subfloat[CNN No BN]{
  \includegraphics[width=0.33\textwidth]{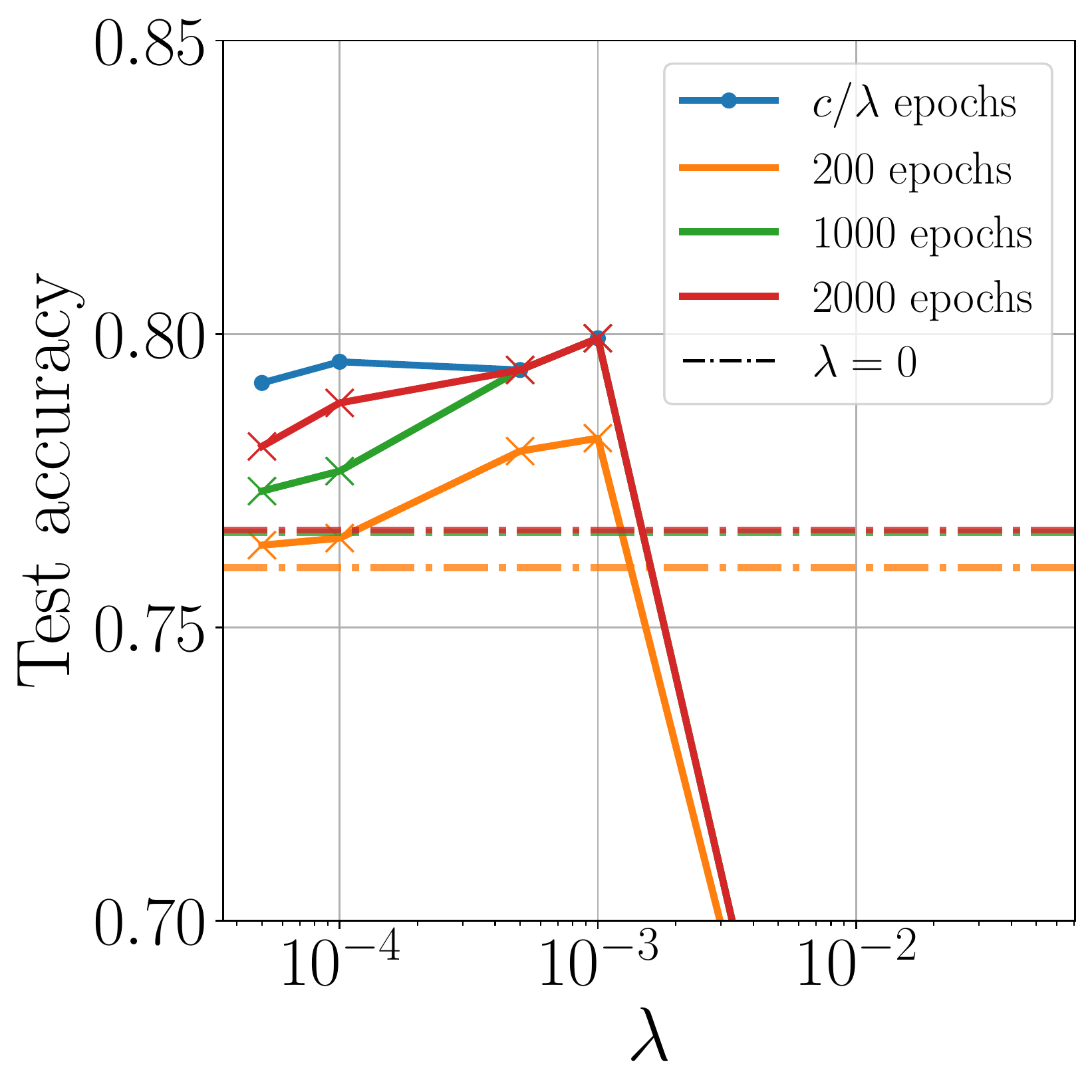}
} 
\\
\subfloat[CNN BN]{
  \includegraphics[width=0.33\textwidth]{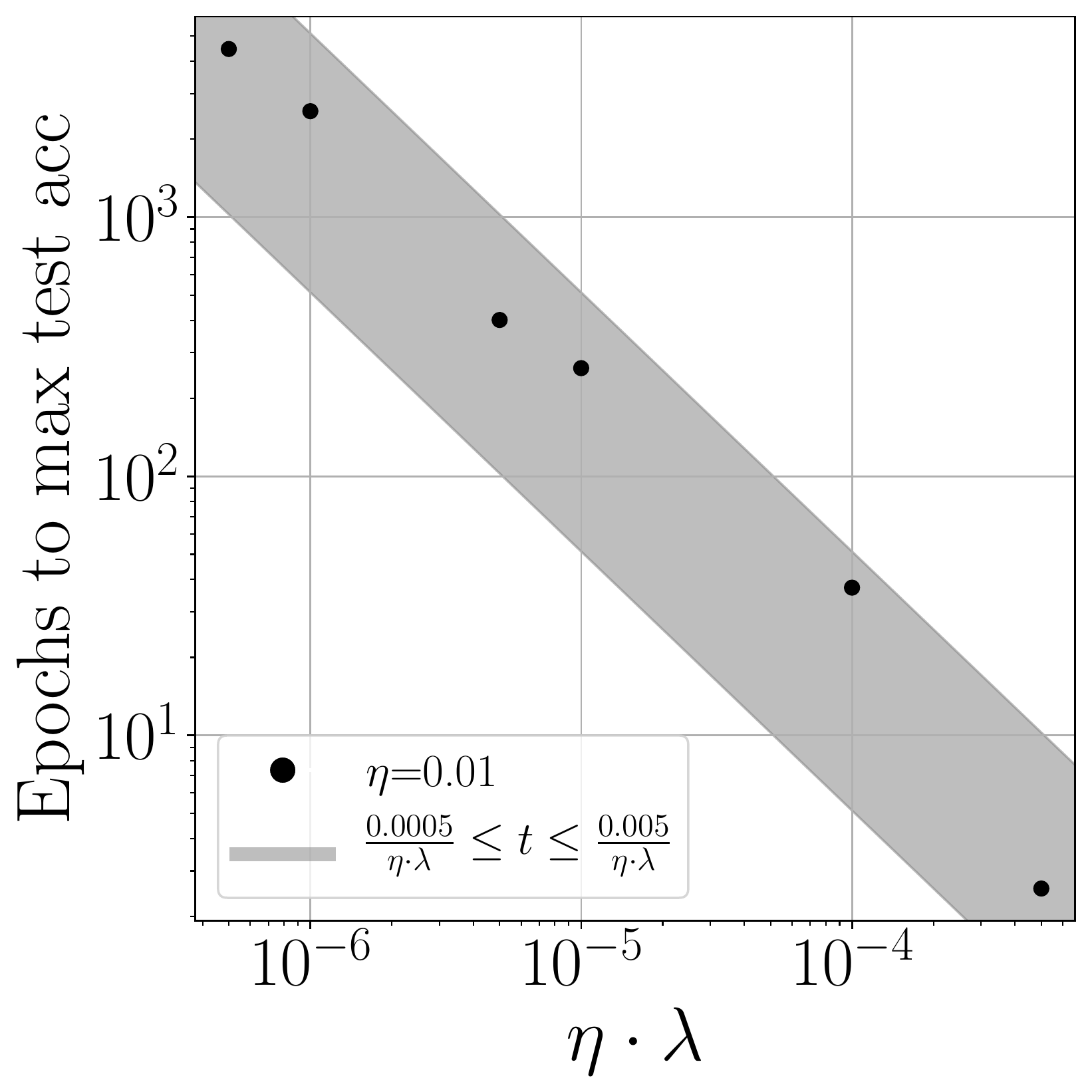}
} 
\subfloat[CNN BN]{
  \includegraphics[width=0.33\textwidth]{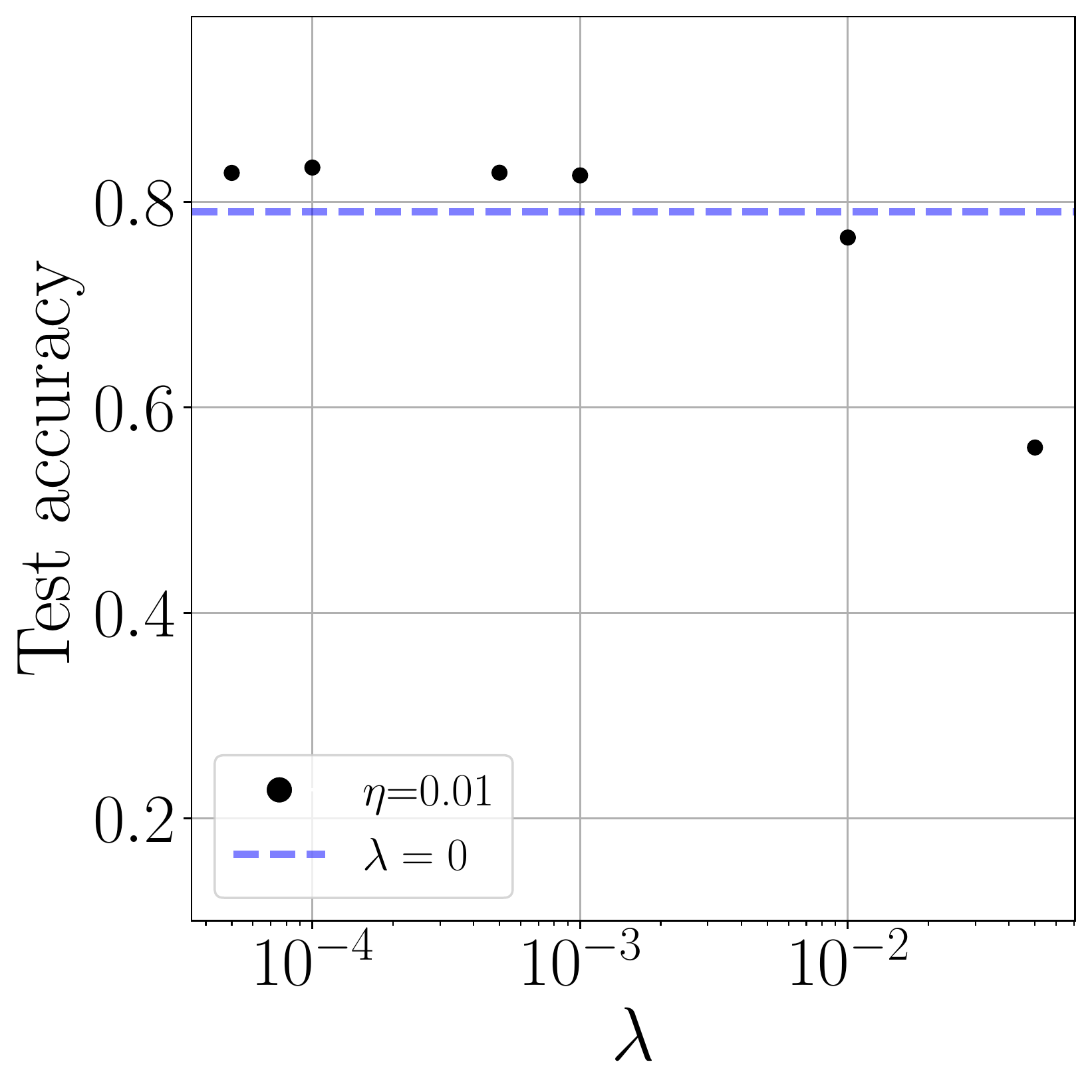}
} 
\subfloat[CNN BN]{
  \includegraphics[width=0.33\textwidth]{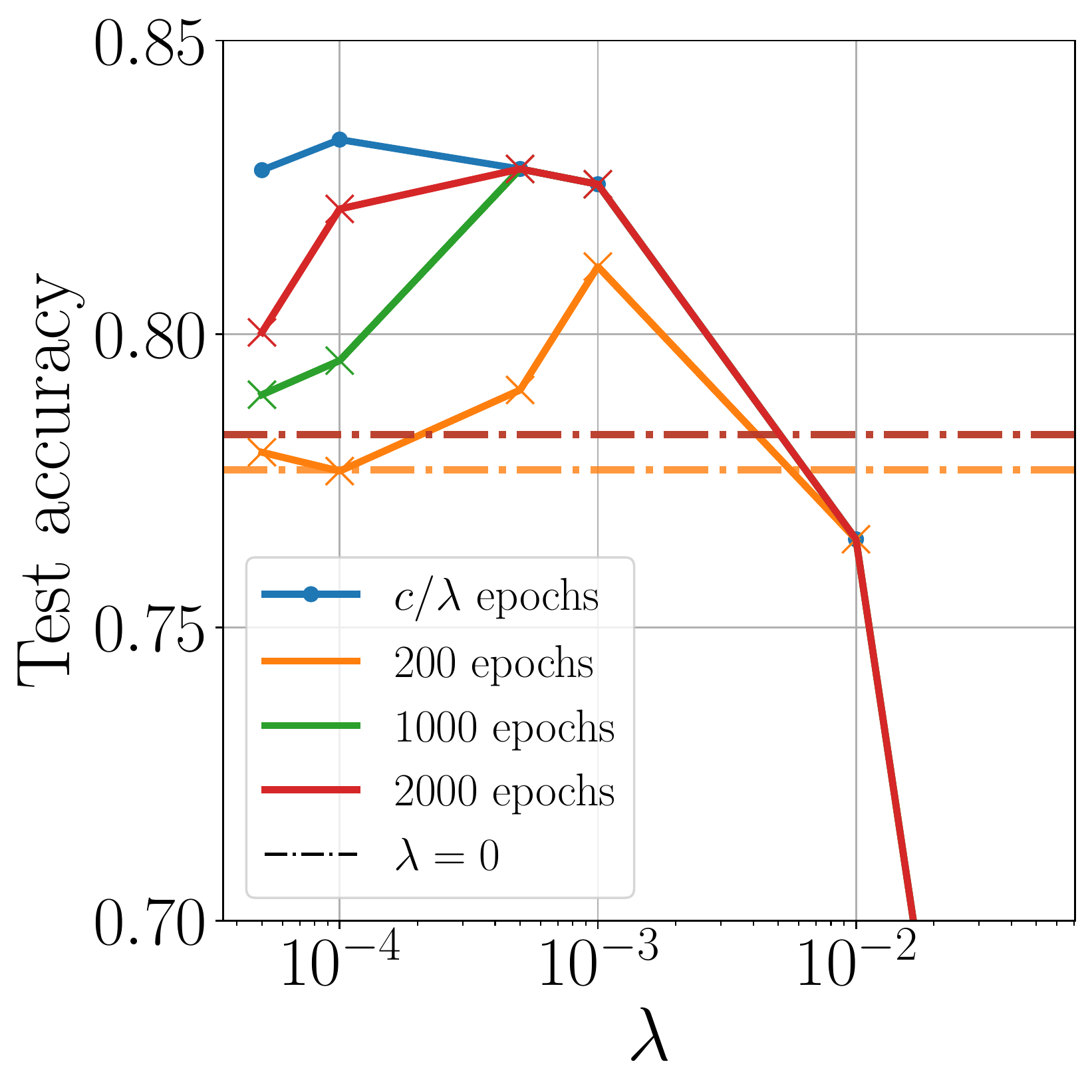}
} 
    \caption{CNNs trained with and without batch-norm with learning rate $\eta=0.01$.
    Presented results follow the same format as Figure~\ref{fig:timedynamics}.}
    \label{fig:CNN} 

\end{figure}

\paragraph{Learning rate schedules.}
\label{sec:lrsch}
So far we considered training setups that do not include learning rate schedules.
Figure~\ref{fig:fig1a} shows the results of training a Wide ResNet on CIFAR-10 with a learning rate schedule, momentum, and data augmentation. 
The schedule was determined as follows.
Given a total number of epochs $T$, the learning rate is decayed by a factor of $0.2$ at epochs $
\lbrace 0.3\cdot T,0.6\cdot T,0.9 \cdot T \rbrace$.
We compare training with a fixed $T$ against training with $T \propto \lambda^{-1}$.
We find that training with a fixed budget leads to an optimal value of $\lambda$, below which performance degrades.
On the other hand, training with $T \propto \lambda^{-1}$ leads to improved performance at smaller $\lambda$, consistent with our previous observations.

\section{Applications}

We now discuss two practical applications of the empirical observations made in the previous section.

\paragraph{Optimal $L_2$.}
\label{sec:L2opt}

We observed that the time $t_*$ to reach maximum test accuracy is proportional to $\lambda^{-1}$, which we can express as $t_{*} \approx \frac{c}{\lambda}$.
This relationship continues to hold empirically even for large values of $\lambda$.
When $\lambda$ is large, the network attains its (significantly degraded) maximum performance after a relatively short amount of training time.
We can therefore measure the value of $c$ by training the network with a large $L_2$ parameter until its performance peaks, at a fraction of the cost of a normal training run.

Based on our empirical observations, given a training budget $T$ we predict that the optimal $L_2$ parameter can be approximated by $\lambda_{\rm pred} = c/T$. This is the smallest $L_2$ parameter such that model performance will peak within training time $T$.
Figure~\ref{fig:fig1b} shows the result of testing this prediction in a realistic setting: a Wide ResNet trained on CIFAR-10 with momentum$ =0.9$ , learning rate $\eta=0.2$ and data augmentation.
The model is first trained with a large $L_2$ parameter for 2 epochs in order to measure $c$, and we find $c \approx 0.0066$, see figure \ref{fig:optimalL2a}.
We then compare the tuned value of $\lambda$ against our prediction for training budgets spanning close to two orders of magnitude, and find excellent agreement: the predicted $\lambda$'s have a performance which is rather close to the optimal one. Furthermore, the tuned values are always within an order of magnitude of our predictions see figure \ref{fig:optimalL2c}. 

So far we assumed a constant learning rate.
In the presence of learning rate schedules, one needs to adjust the prediction algorithm.
Here we address this for the case of a piecewise-constant schedule.
For compute efficiency reasons, we expect that it is beneficial to train with a large learning rate as long as accuracy continues to improve, and to decay the learning rate when accuracy peaks.
Therefore, given a fixed learning rate schedule, we expect the optimal $L_2$ parameter to be the one at which accuracy peaks at the time of the first learning rate decay.
Our prediction for the optimal parameter is then $\lambda_{\rm pred} = c/T_1$, where $T_1$ is the time of first learning rate decay, and the coefficient $c$ is measured as before with a fixed learning rate.
In our experiments, this prediction is consistently within an order of magnitude of the optimal parameter, and gives comparable performance.
For example, in the case of Figure~\ref{fig:fig1a} with $T=200$ epochs and $T_1 = 0.3T$, we find $\lambda_{\rm pred} \approx 0.0001$ (leading to test accuracy $0.960$), compared with the optimal value $0.0005$ (with test accuracy $0.967$).


\begin{figure}[ht!]
  \centering
    \subfloat[]{\includegraphics[width=0.33 \textwidth]{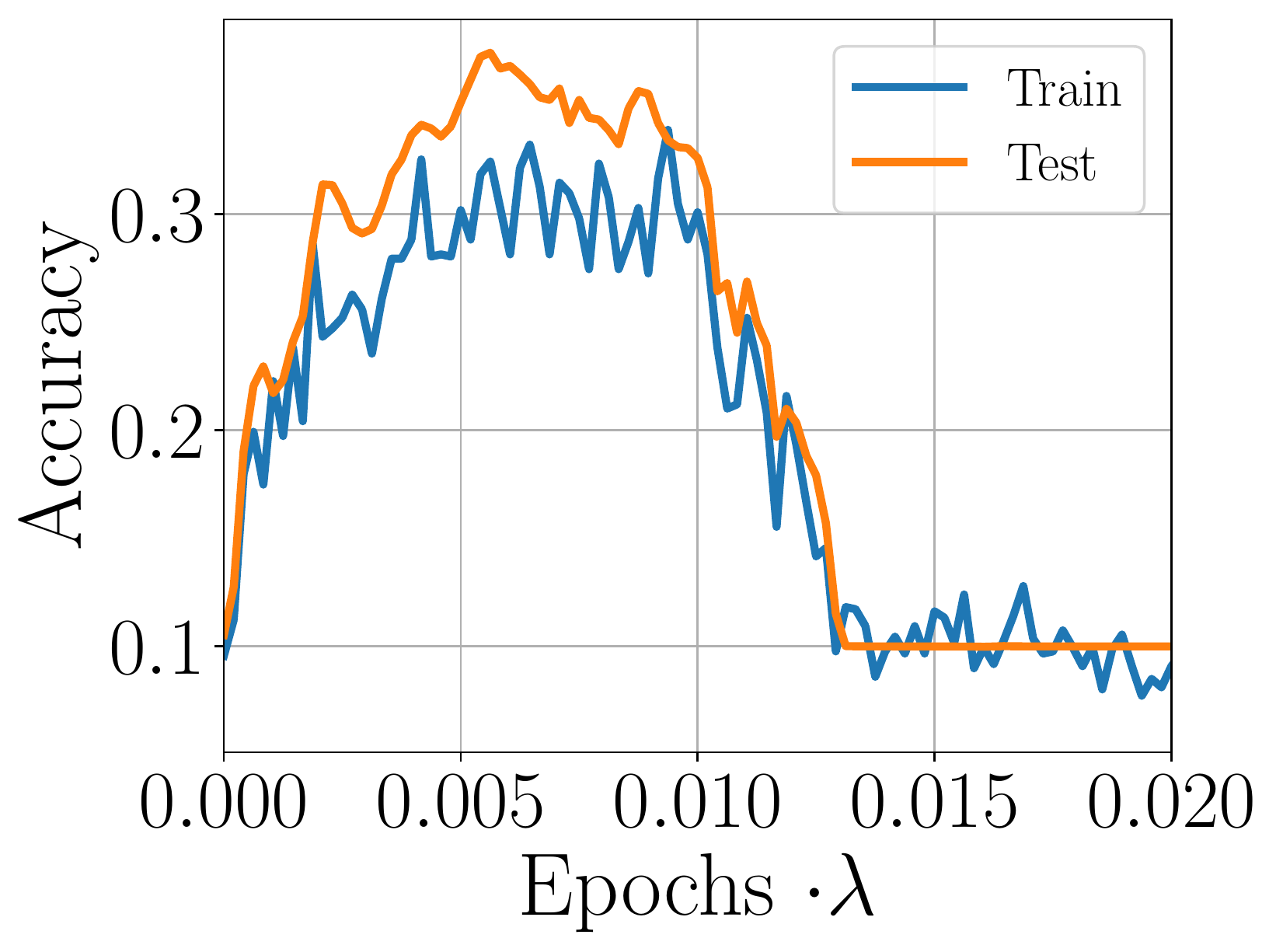}
      \label{fig:optimalL2a} }
   \subfloat[]{\includegraphics[width=0.33\textwidth]
   {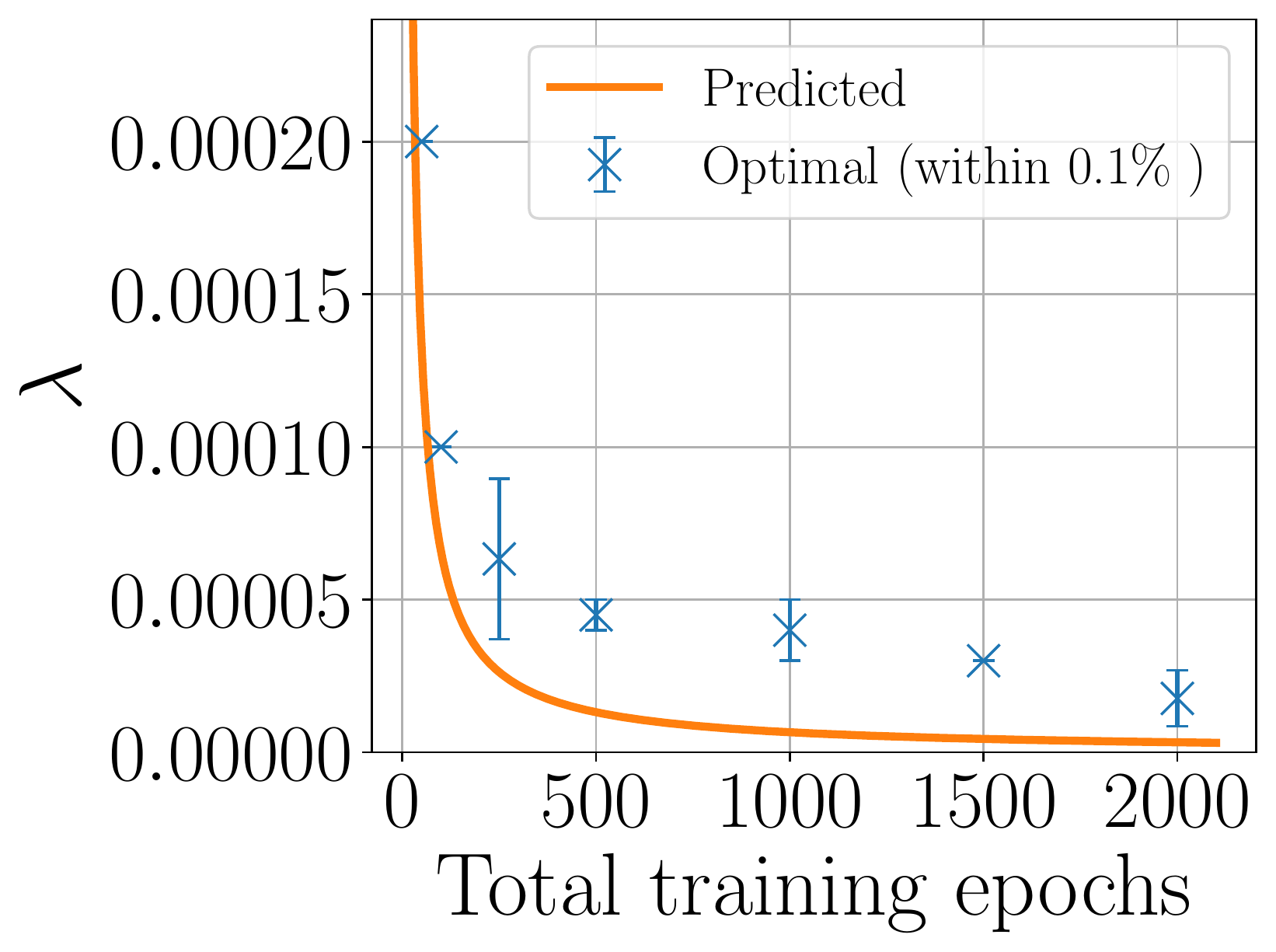}

\label{fig:optimalL2c} }
\caption{Wide ResNet trained with momentum and data augmentation. (a) We train the model with a large $L_2$ parameter $\lambda=0.01$ for 2 epochs and measure the coefficient $c=t_* \cdot \lambda \approx 0.0066$, representing the approximate point along the $x$ axis where accuracy is maximized. 
(b) Optimal (tuned) $\lambda$ values compared with the theoretical prediction.
The error bars represent the spread of values that achieve within $0.1\%$ of the optimal test accuracy.
}
\label{fig:optimalL2}
\end{figure}

\paragraph{\AutoLtwo: Automatic $L_2$ schedules.}
\label{sec:L2sch}

We now turn to another application, based on the observation that models trained with larger $L_2$ parameters reach their peak performance faster.
It is therefore plausible that one can speed up the training process by starting with a large $L_2$ parameter, and decaying it according to some schedule.
Here we propose to choose the schedule dynamically by decaying the $L_2$ parameter when performance begins to deteriorate.
See SM \ref{sec:AutoL2} for further details.

\AutoLtwo~ is a straightforward implementation of this idea: We begin training with a large parameter, $\lambda=0.1$, and we decay it by a factor of 10 if either the empirical loss (the training loss without the $L_2$ term) or the training error increases.
To improve stability, immediately after decaying we impose a refractory period during which the parameter cannot decay again.
Figure \ref{fig:fig1c}
compares this algorithm against the model with the optimal $L_2$ parameter.
We find that \AutoLtwo~trains significantly faster and achieves superior performance. See SM \ref{sec:AutoL2} for other architectures. 

In other experiments we have found that this algorithm does not yield improved results when the training procedure includes a learning rate schedule.
We leave the attempt to effectively combine learning rate schedules with $L_2$ schedules to future work.

\section{Theoretical results}

We now turn to a theoretical analysis of the training trajectory of networks trained with $L_2$ regularization.
We focus on infinitely wide networks with positively-homogeneous activations.
Consider a network function $f : \bR^d \to \bR$ with model parameter $\theta\in\bR^p$. 
The network initialized using NTK parameterization \citep{NTK-paper}: the initial parameters are sampled i.i.d. from $\mathcal{N}(0,1)$.
The model parameters are trained using gradient flow with loss $\Ltot = L + \frac{\lambda}{2} \|\theta\|_2^2$, where $L = \sum_{(x,y)\in S} \ell(x,y)$ is the empirical loss, $\ell$ is the sample loss, and $S$ is the training set of size $N_{\rm samp}$.

We say that the network function is $k$-homogeneous if $f_{\alpha \theta}(x) = \alpha^k f_\theta(x)$ for any $\alpha > 0$.  
As an example, a fully-connected network with $L$ layers and ReLU or linear activations is $L$-homogeneous. Networks made out of convolutional, max-pooling or batch-normalization layers are also $k$-homogeneous.\footnote{Batch normalization is often implemented with an $\epsilon$ parameter meant to prevent numerical instabilities. Such networks are only approximately homogeneous.} 
See \citet{li2019exponential} for a discussion of networks with homogeneous activations.

\citet{NTK-paper} showed that when an infinitely wide, fully-connected network is trained using gradient flow (and without $L_2$ regularization), its network function obeys the differential equation $\frac{df}{dt}(x) = - \sum_{x' \in S} \Theta_0(x,x') \ell'(x')$, where $t$ is the gradient flow time and $\Theta_t(x,x') = \nabla_\theta f_t(x)^T \nabla_\theta f_t(x)$ is the Neural Tangent Kernel (NTK).

\citet{feynman-diagrams} presented a conjecture that allows one to derive the large width asymptotic behavior of the network function, the Neural Tangent Kernel, as well as of combinations involving higher-order derivatives of the network function.
\ifarxiv
The conjecture was shown to hold for networks with polynomial activations \citep{aitken2020}, and has been verified empirically for commonly used activation functions.
\fi
In what follows, we will assume the validity of this conjecture.
The following is our main theoretical result.
\begin{theorem}
\label{thm1}
Consider a $k$-homogeneous network, and assume that the network obeys the correlation function conjecture of \cite{feynman-diagrams}.
In the infinite width limit, the network function $f_t(x)$ and the kernel $\Theta_t(x,x')$ evolve according to the following equations at training time $t$.
\begin{align}
    \frac{df_t(x)}{dt} &= - 
    e^{-2(k-1)\lambda t}
    \sum_{(x',y') \in S} \Theta_0(x,x')
    \frac{\dho \ell(x',y')}{\dho f_t}
    - \lambda k f_t(x)
    \label{eq:mainthm}
    \,, \\
    \frac{d\Theta_t(x,x')}{dt} &= -2 (k-1) \lambda \Theta_t(x,x') \,.
    \label{eq:mainthm_theta}
\end{align}

\end{theorem}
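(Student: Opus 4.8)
The plan is to obtain both equations directly from the gradient flow equation $\dot\theta = -\nabla_\theta \Ltot = -\nabla_\theta L - \lambda\theta$, using two exact (finite-width) consequences of $k$-homogeneity of $f$. Writing $h(\theta) := f_\theta(x)$, the relation $h(\alpha\theta) = \alpha^k h(\theta)$ gives Euler's identity $\theta^T \nabla_\theta f_\theta(x) = k\, f_\theta(x)$, and also shows that the vector field $\nabla_\theta f_\theta(x)$ is $(k-1)$-homogeneous; applying Euler's identity to it yields $\nabla_\theta^2 f_\theta(x)\,\theta = (k-1)\nabla_\theta f_\theta(x)$. These are the only two places where homogeneity enters.

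First I would differentiate the network function: $\frac{df_t(x)}{dt} = \nabla_\theta f_t(x)^T \dot\theta$. Using $\nabla_\theta L = \sum_{(x',y')\in S} \frac{\dho \ell(x',y')}{\dho f_t}\, \nabla_\theta f_t(x')$ and contracting with $\nabla_\theta f_t(x)$ produces $\sum_{(x',y')} \Theta_t(x,x')\, \frac{\dho \ell(x',y')}{\dho f_t}$ with $\Theta_t(x,x') = \nabla_\theta f_t(x)^T \nabla_\theta f_t(x')$, while the $-\lambda\theta$ term contributes $-\lambda\, \nabla_\theta f_t(x)^T \theta$, which by Euler's identity equals $-\lambda k f_t(x)$. (Without homogeneity this last term would not simplify — this is precisely why the hypothesis is needed.) This is already \eqref{eq:mainthm} once $\Theta_t$ is replaced by its explicit time dependence.

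Next I would differentiate the kernel. Since $\frac{d}{dt}\nabla_\theta f_t(x) = \nabla_\theta^2 f_t(x)\, \dot\theta$, we get $\frac{d\Theta_t(x,x')}{dt} = (\nabla_\theta^2 f_t(x)\dot\theta)^T \nabla_\theta f_t(x') + \nabla_\theta f_t(x)^T (\nabla_\theta^2 f_t(x')\dot\theta)$. Splitting $\dot\theta$, the $-\lambda\theta$ piece gives, via $\nabla_\theta^2 f_t(x)\theta = (k-1)\nabla_\theta f_t(x)$, exactly $-2(k-1)\lambda\, \Theta_t(x,x')$. The $-\nabla_\theta L$ piece gives a sum of terms of the schematic form $\nabla_\theta f_t(x)^T \nabla_\theta^2 f_t(x')\, \nabla_\theta f_t(x'')$ (and $x \leftrightarrow x'$); these are precisely the higher-derivative correlation functions governed by the conjecture of \citet{feynman-diagrams}, and in the infinite width limit they are suppressed relative to $\Theta_t$ and vanish — the same mechanism by which the NTK is frozen when $\lambda = 0$. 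This gives \eqref{eq:mainthm_theta}; integrating it yields $\Theta_t = e^{-2(k-1)\lambda t}\Theta_0$, and substituting back into the $f$-equation completes the proof.

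The main obstacle is this last step. The homogeneity identities are exact and elementary and produce the $-\lambda k f_t$ and $-2(k-1)\lambda\Theta_t$ terms for free; the genuine work is the large-width power counting showing that the data-gradient contribution to $\frac{d\Theta_t}{dt}$ is $o(1)$, which is where one must invoke the classification of correlators of \citet{feynman-diagrams} and argue that it remains applicable along the entire $L_2$-regularized trajectory, along which the overall scale of $\theta$ — and hence of $f_t$ and $\Theta_t$ — is itself evolving.
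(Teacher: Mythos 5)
Your derivation follows the same route as the paper's: Euler's identity $\theta^T\nabla_\theta f = k f$ supplies the $-\lambda k f_t$ term, its once-differentiated version $\nabla_\theta^2 f\,\theta=(k-1)\nabla_\theta f$ supplies the $-2(k-1)\lambda\Theta_t$ term, and the data-gradient contribution to $\dot\Theta_t$ is the cubic correlator $T_t(x,x')=-\sum_{x''}\partial_{\mu\nu}f(x)\,\partial_\mu f(x')\,\partial_\nu f(x'')\,\ell'(x'')$ that must be argued away at infinite width. The two identities you use are exactly the $m=0$ and $m=1$ cases of the paper's Lemma on homogeneous functions (which is proved there for all $m$, and the general case is actually needed below).

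The genuine gap is precisely the step you flag and then leave as an assertion: you say the cubic correlators vanish ``by the same mechanism by which the NTK is frozen when $\lambda=0$,'' but the correlation-function conjecture controls expectations at initialization, whereas here $T_t$ must be shown to vanish along the entire $L_2$-regularized trajectory, whose flow field contains the extra term $-\lambda\theta$. The paper closes this with a separate lemma: if a correlator $C$ is $\mathcal{O}(n^{-1})$ by the cluster-graph power counting, then so is $dC/dt$ under the regularized flow. The proof splits $\dot\theta$ into two pieces. The $-\nabla_\theta L$ piece inserts one extra factor $\partial_\nu f(x')\,\ell'(x')$ into the expectation, which adds a vertex to the cluster graph and flips the parity of one component; tracking $m\mapsto m+1$, $n_e\mapsto n_e\pm1$, $n_o\mapsto n_o\mp1$ shows the order parameter $n_e+(n_o-m)/2$ does not increase, so the new correlator is still $\mathcal{O}(n^{-1})$. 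The $-\lambda\theta$ piece is where the general-$m$ homogeneity identity enters: $\sum_\nu\theta^\nu\partial_{\mu_1\cdots\mu_a\nu}f=(k-a)\partial_{\mu_1\cdots\mu_a}f$ converts this insertion into the \emph{same} correlator times an $n$-independent factor $-(k-a)\lambda$ --- which is also why the drift in the overall scale of $\theta$ that you worry about at the end is harmless: it never creates a new graph vertex. Iterating gives $\mathbb{E}[d^mT_0/dt^m]=\mathcal{O}(n^{-1})$ for all $m$ with $\mathcal{O}(n^{-2})$ variance, hence every Taylor coefficient of $T_t$ at $t=0$ vanishes in probability as $n\to\infty$, the $\Theta$-equation closes to $\dot\Theta_t=-2(k-1)\lambda\Theta_t$, and substituting $\Theta_t=e^{-2(k-1)\lambda t}\Theta_0$ into the $f$-equation finishes the argument. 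Without some version of this inductive power counting your proof is incomplete at its central step.
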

The proof hinges on the following equation, which holds for $k$-homogeneous functions: $\sum_\mu \theta_\mu \partial_\mu \partial_{\nu_1} \cdots \partial_{\nu_m} f(x) = (k-m) \partial_{\nu_1} \cdots \partial_{\nu_m} f(x)$.
This equation allows us to show that the only effect of $L_2$ regularization at infinite width is to introduce simple terms proportional to $\lambda$ in the gradient flow update equations for both the function and the kernel.

We refer the reader to the SM for the proof.
We mention in passing that the case $k=0$ corresponds to a scaling-invariant network function which was studied in \citet{li2019exponential}.
In this case, training with $L_2$ term is equivalent to training with an exponentially increasing learning rate.

For commonly used loss functions, and for $k>1$, we expect that the solution obeys $\lim_{t\to\infty} f_t(x) = 0$.
We will prove that this holds for MSE loss, but let us first discuss the intuition behind this statement.
At late times the exponent in front of the first term in \eqref{eq:mainthm} decays to zero, leaving the approximate equation $\frac{df(x)}{dt} \approx - \lambda k f(x)$ and leading to an exponential decay of the function to zero.
Both the explicit exponent in the equation, and the approximate late time exponential decay, suggest that this decay occurs at a time $t_{\rm decay} \propto \lambda^{-1}$.
Therefore, we expect that the minimum of the empirical loss to occur at a time proportional to $\lambda^{-1}$, after which the bare loss will increase because the function is decaying to zero. 
We observe this behaviour empirically for wide fully-connected networks and for Wide ResNet in the SM. 

\ifarxiv
Furthermore, notice that if we include the $k$ dependence, the decay time scale is approximately $t_{\rm decay} \propto (k \lambda)^{-1}$.
Models with a higher degree of homogeneity (for example deeper fully-connected networks) will converge faster.
\fi

We now focus on MSE loss and solve the gradient flow equation \eqref{eq:mainthm} for this case.
\begin{theorem}\label{thm:mse}
Let the sample loss be $\ell(x,y) = \frac{1}{2} (f(x) - y)^2$, and assume that $k\ge 2$.
Suppose that, at initialization, the kernel $\Theta_0$ has eigenvectors $\hat{e}_a \in \bR^{N_{\rm samp}}$ with corresponding eigenvalues $\gamma_a$.
Then during gradient flow, the eigenvalues evolve as $\gamma_a(t) = \gamma_a e^{-2(k-1)\lambda t}$ while the eigenvectors are static.
Suppose we treat $f \in \bR^{N_{\rm samp}}$ as a vector defined on the training set.
Then each mode of the function, $f_a := (\hat{e}_a)^T f \in \bR$, evolves independently as
\begin{align}
    f_a(x;t) &= e^{
    \frac{\gamma_a(t)}{2(k-1)\lambda} - k\lambda t
    }
    \Bigg\{ 
    e^{-\frac{\gamma_a}{2(k-1)\lambda}} f_a(x;0)
    + \gamma_a y_a
    \int_0^t \! dt' \, \exp \left[ 
    - \frac{\gamma_a(t')}{2(k-1)\lambda} - (k-2) \lambda t'
    \right]
    \Bigg\} 
    \,. \label{eq:fmodes}
\end{align}
Here, $y_a := (\hat{e}_a)^T y$.
At late times, $\lim_{t\to\infty} f_t(x) = 0$ on the training set.
\end{theorem}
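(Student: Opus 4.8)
The plan is to specialize \eqref{eq:mainthm} to MSE loss and then solve the resulting linear system mode by mode. For $\ell(x,y) = \tfrac12 (f(x)-y)^2$ one has $\dho\ell/\dho f_t = f_t(x')-y'$, so, regarding $f$ and $y$ as vectors in $\bR^{N_{\rm samp}}$ indexed by the training set, \eqref{eq:mainthm} becomes the linear ODE
\begin{equation}
\frac{df}{dt} = -e^{-2(k-1)\lambda t}\,\Theta_0\,(f-y) - \lambda k\, f \,.
\end{equation}
Equation \eqref{eq:mainthm_theta} integrates to $\Theta_t = e^{-2(k-1)\lambda t}\Theta_0$; since $\Theta_0$ is a symmetric positive-semidefinite Gram matrix it admits a time-independent orthonormal eigenbasis $\{\hat e_a\}$ with eigenvalues $\gamma_a\ge 0$, and the eigenvalues scale as $\gamma_a(t)=\gamma_a e^{-2(k-1)\lambda t}$, which is the first assertion of the theorem.

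Next I would project the ODE onto this basis. Setting $f_a = \hat e_a^{T} f$, $y_a = \hat e_a^{T} y$ and using $\Theta_0\hat e_a=\gamma_a\hat e_a$, the coupled system diagonalizes into the scalar equations
\begin{equation}
\dot f_a = -\bigl(\gamma_a(t)+\lambda k\bigr) f_a + \gamma_a(t)\, y_a \,.
\end{equation}
This is a first-order linear ODE, which I would solve with the integrating factor $\mu(t)=\exp\bigl(\int_0^t(\gamma_a(s)+\lambda k)\,ds\bigr)$. The only computation needed is $\int_0^t\gamma_a(s)\,ds = \frac{\gamma_a-\gamma_a(t)}{2(k-1)\lambda}$, which is where the hypothesis $k\ge 2$ (in particular $k\neq 1$) first enters. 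Substituting into $f_a(t)=\mu(t)^{-1}\bigl(f_a(0)+\int_0^t\mu(t')\gamma_a(t')y_a\,dt'\bigr)$ and simplifying the exponents — using $\lambda k - 2(k-1)\lambda = -(k-2)\lambda$ to collapse the factor $\gamma_a(t')\,e^{\lambda k t'}$ appearing in the integrand and pull out a bare $\gamma_a$ — reproduces \eqref{eq:fmodes}.

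Finally, for the late-time statement I would argue mode by mode. Since $k\ge 2$ we have $k-1\ge 1$, so $\gamma_a(t)\to 0$ and the prefactor $e^{\gamma_a(t)/(2(k-1)\lambda)-k\lambda t}$ decays like $e^{-k\lambda t}$. Inside the braces, the first term is constant in $t$, and in the integrand both $e^{-\gamma_a(t')/(2(k-1)\lambda)}\le 1$ (using $\gamma_a\ge 0$) and $e^{-(k-2)\lambda t'}\le 1$, so the integral is $O(1)$ when $k>2$ and $O(t)$ in the borderline case $k=2$. In either case $f_a(t)=O\bigl(t\,e^{-k\lambda t}\bigr)\to 0$, and hence $f_t=\sum_a f_a\hat e_a\to 0$ on the training set. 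I expect the only delicate point to be this borderline case: for $k=2$ the time integral genuinely diverges linearly, so one must observe that the exponential prefactor still dominates; everything else is routine linear-ODE manipulation and careful bookkeeping of exponents to match the stated closed form.
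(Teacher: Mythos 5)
Your proposal is correct and follows essentially the same route as the paper: diagonalize in the (static) eigenbasis of $\Theta_0$, reduce to the scalar linear ODE $\dot f_a = -(\gamma_a(t)+k\lambda)f_a + \gamma_a(t)y_a$, and obtain \eqref{eq:fmodes} — the paper merely verifies the closed form by substitution where you derive it via the integrating factor, and your exponent bookkeeping ($\int_0^t\gamma_a(s)\,ds = \frac{\gamma_a-\gamma_a(t)}{2(k-1)\lambda}$ and $k\lambda-2(k-1)\lambda=-(k-2)\lambda$) checks out. The late-time argument is also the paper's: bound the integrand by $1$ using $k\ge 2$ and $\gamma_a\ge 0$, so the bracket grows at most linearly while the prefactor decays as $e^{-k\lambda t}$, which correctly handles the borderline case $k=2$.
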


The properties of the solution \eqref{eq:fmodes} depend on whether the ratio $\gamma_a/\lambda$ is greater than or smaller than 1, as illustrated in Figure~\ref{fig:l2_mse_modes}.
When $\gamma_a / \lambda > 1$, the function approaches the label mode $y_{\rm mode} = y_a$ at a time that is of order $1/\gamma_a$.
This behavior is the same as that of a linear model, and represents ordinary learning.
Later, at a time of order $\lambda^{-1}$ the mode decays to zero as described above; this late time decay is not present in the linear model.
Next, when $\gamma_a / \lambda < 1$ the mode decays to zero at a time of order $\lambda^{-1}$, which is the same behavior as that of a linear model.

\begin{figure}
    \centering
    \subfloat[][2-layer network]{
    \includegraphics[width=0.33\textwidth]{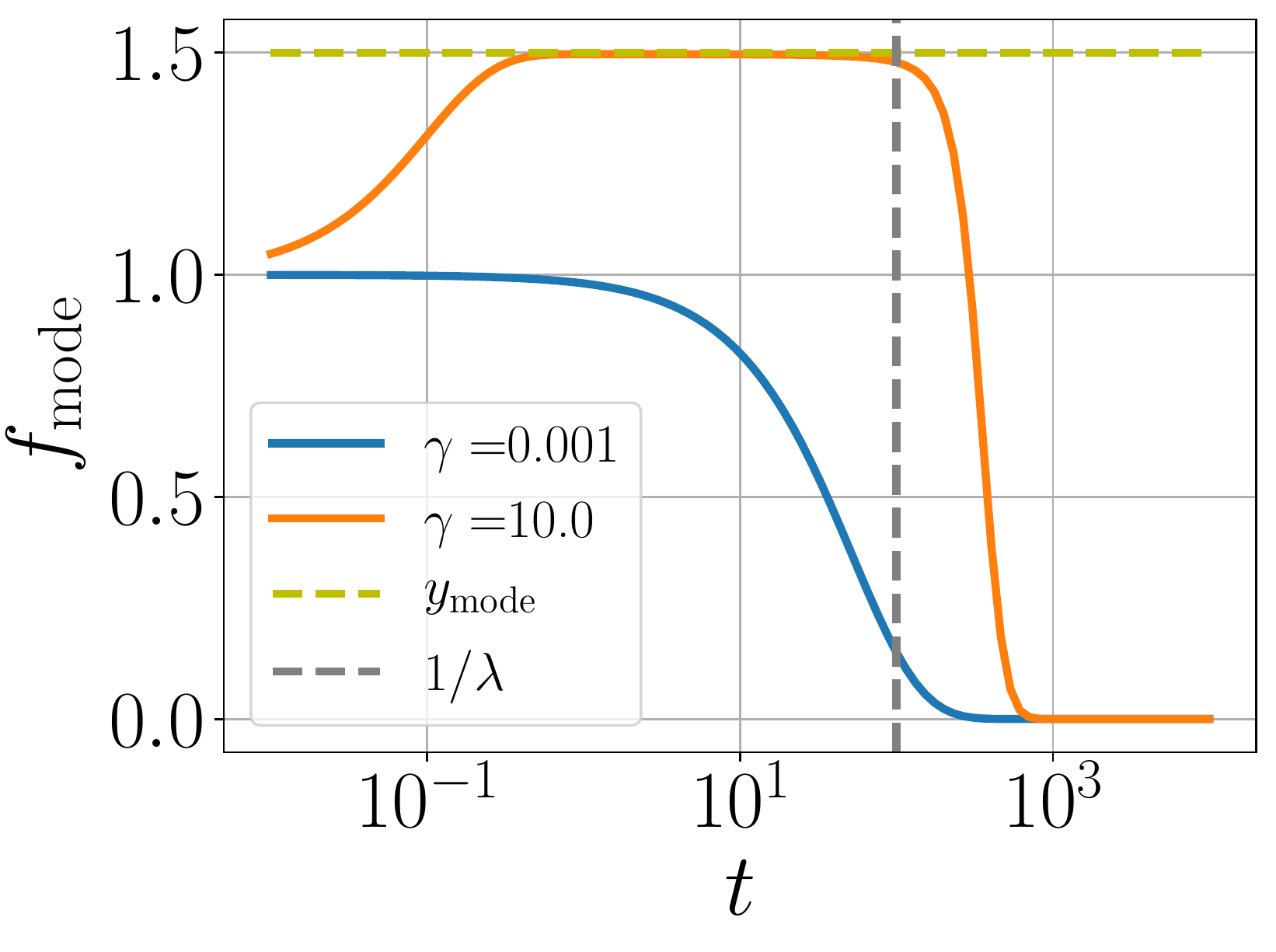}
    }
    \subfloat[][linear model]{
    \includegraphics[width=0.33\textwidth]{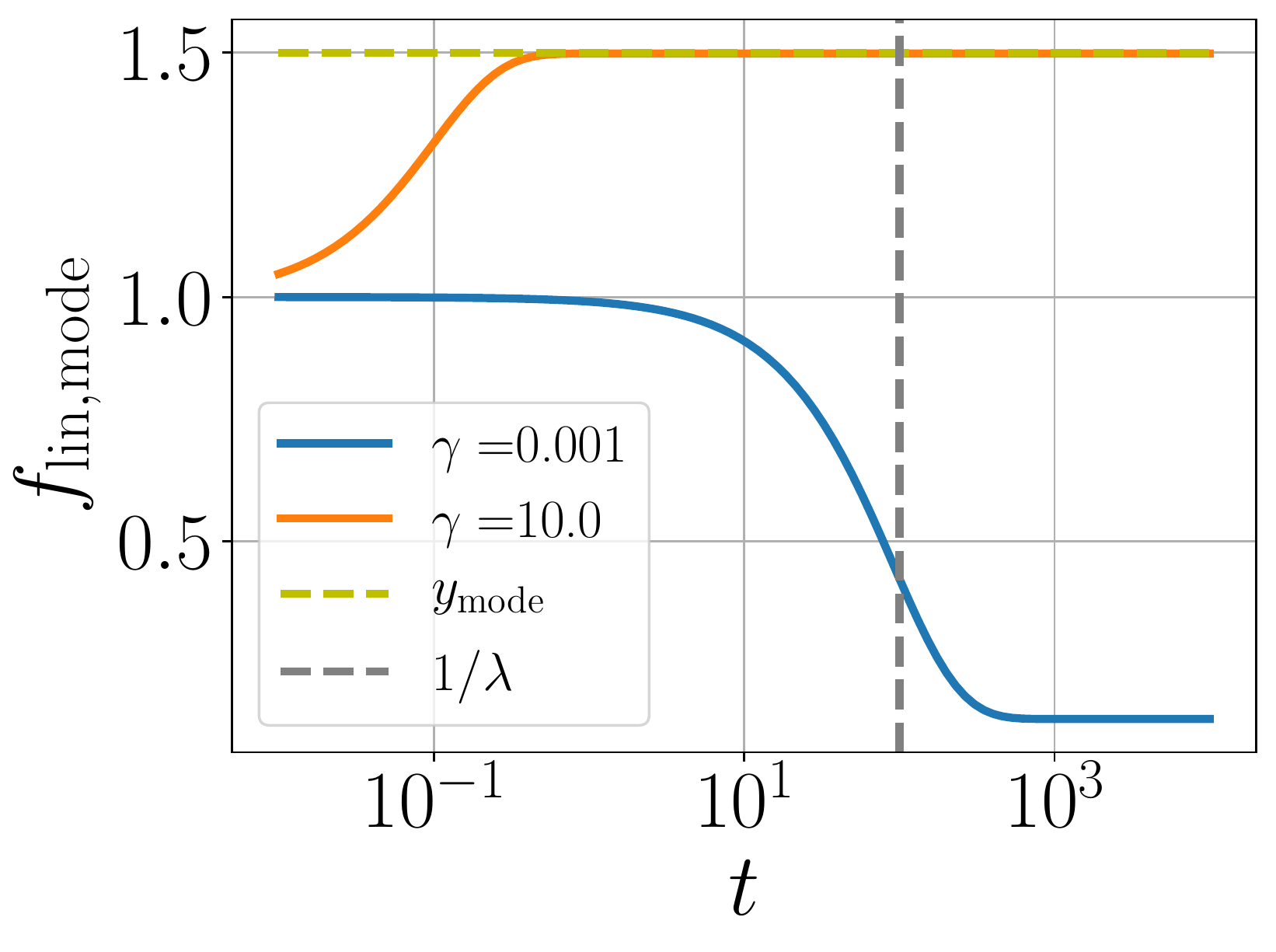}
    }
    \subfloat[][losses]{
    \includegraphics[width=0.33\textwidth]{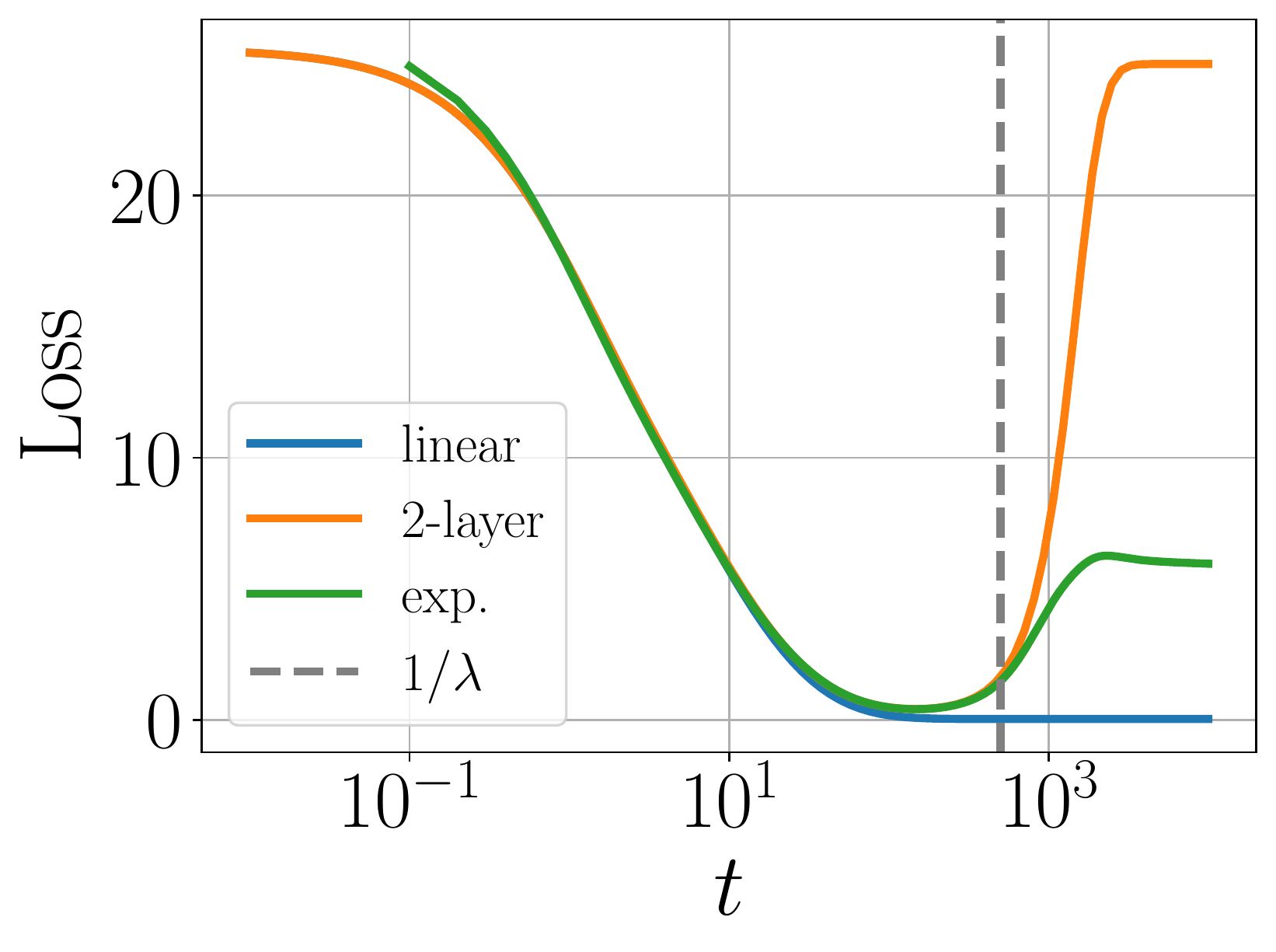}
    \label{fig:thloss}
    }
    \vspace*{-2mm}
    \caption{
    (a) The theoretical evolution of an infinitely wide 2-layer network with $L_2$ regularization ($k=2$, $\lambda=0.01$).
    Two modes are shown, representing small and large ratios $\gamma/\lambda$. 
    (b) The same, for a linear model ($k=1$).
    (c) Training loss vs. time for a wide network trained on a subset of MNIST with even/odd labels, with $\lambda=0.002$. We compare the kernel evolution with gradient descent for a  2-layer ReLU network.
      The blue and orange curves are the theoretical predictions when setting $k=1$ and $k=2$ in the solution \eqref{eq:fmodes}, respectively.
    The green curve is the result of a numerical experiment where we train a 2-layer ReLU network with gradient descent.
    We attribute the difference between the green and orange curves at late times to finite width effects.
 }
    \label{fig:l2_mse_modes}
\end{figure}

\paragraph{Generalization of wide networks with $L_2$.}
It is interesting to understand how $L_2$ regularization affects the generalization performance of wide networks.
This is well understood for the case of linear models, which correspond to $k=1$ in our notation, to be an instance of the bias-variance tradeoff.
In this case, gradient flow converges to the function $f_*(x) = \Theta(x,X) (\Theta + \lambda I)^{-1}(X,X) Y$, where $X \in \bR^{N_{\rm samp} \times d}$ are the training samples, $Y \in \bR^{N_{\rm samp}}$ is are the labels, and $x \in \bR^d$ is any input.
When $\lambda=0$, the solution is highly sensitive to small perturbations in the inputs that affect the flat modes of the kernel, because the kernel is inverted in the solution.
In other words, the solution has high variance.
Choosing $\lambda > 0$ reduces variance by lifting the low kernel eigenvalues and reducing sensitivity on small perturbations, at the cost of biasing the model parameters toward zero. \ifarxiv While a linear model is the prototypical case of $k=1$, the previous late time solution $f_*(x)$ is valid for any $k=1$ model. In particular, any homogeneous model that has batch-normalization in the pre-logit layer will satisfy this property. It would be interesting to understand the generalization properties of such these models based on these solutions. 
\fi

Let us now return to infinitely wide networks.
These behave like linear models with a fixed kernel when $\lambda=0$, but as we have seen when $\lambda > 0$ the kernel decays exponentially.
Nevertheless, we argue that this decay is slow enough such that the training dynamics follow that of the linear model (obtained by setting $k=1$ in eq. \eqref{eq:mainthm}) up until a time of order $\lambda^{-1}$, when the function begins decaying to zero.
This can be seen in Figure~\ref{fig:thloss}, which compares the training curves of a linear and a 2-layer network using the same kernel.
We see that the agreement extends until the linear model is almost fully trained, at which point the 2-layer model begins deteriorating due to the late time decay.
Therefore, if we stop training the 2-layer network at the loss minimum, we end up with a trained and regularized model. It would be interesting to understand how the generalization properties of this model with decaying kernel differ from those of the linear model.

\paragraph{Finite-width network.}

Theorem \ref{thm1} holds in the strict large width, fixed $\lambda$ limit for NTK parameterization. 
At large but finite width we expect \eqref{eq:mainthm} to be a good description of the training trajectory at early times, until the kernel and function because small enough such that the finite-width corrections become non-negligible.
Our experimental results imply that this approximation remains good until after the minimum in the loss, but that at late times the function will not decay to zero; see for example Figure~\ref{fig:thloss}.
See the SM for further discussion for the case of deep linear models.
We reserve a more careful study of these finite width effects to future work.  

\section{Discussion}
In this work we consider the effect of $L_2$ regularization on overparameterized networks.
We make two empirical observations: (1) The time it takes the network to reach peak performance is proportional to $\lambda$, the $L_2$ regularization parameter, and (2) the performance reached in this way is independent of $\lambda$ when $\lambda$ is not too large. 
We find that these observations hold for a variety of overparameterized training setups; see the SM for some examples where they do not hold. We expect the peak performance to depend on $\lambda$ and $\eta$, but not on other quantities such as the initialization scale. 
We verify this empirically in SM \ref{sec:different_init}.

Motivated by these observations, we suggest two practical applications.
The first is a simple method for predicting the optimal $L_2$ parameter at a given training budget.
The performance obtained using this prediction is close to that of a tuned $L_2$ parameter, at a fraction of the training cost.
The second is \AutoLtwo, an automatic $L_2$ parameter schedule.
In our experiments, this method leads to better performance and faster training when compared against training with a tuned $L_2$ parameter.
We find that these proposals work well when training with a constant learning rate; we leave an extension of these methods to networks trained with learning rate schedules to future work.

We attempt to understand the empirical observations by analyzing the training trajectory of infinitely wide networks trained with $L_2$ regularization.
We derive the differential equations governing this trajectory, and solve them explicitly for MSE loss.
The solution reproduces the observation that the time to peak performance is of order $\lambda^{-1}$.
This is due to an effect that is specific to deep networks, and is not present in linear models: during training, the kernel (which is constant for linear models) decays exponentially due to the $L_2$ term.

\newcommand{\acktext}{
The authors would like to thank Yasaman Bahri, Ethan Dyer, Jaehoon Lee, Behnam Neyshabur, and Sam Schoenholz for useful discussions. We especially thank Behnam for encouraging us to use our scaling law observations to come up with a schedule for the $L_2$ parameter.
\ifarxiv
\else
The authors have nothing to disclose.
\fi
}

\ifarxiv
\section*{Acknowledgments}
\acktext
\else
\begin{ack}
\acktext
\end{ack}
\fi

\ifarxiv
\else
\section*{Broader Impact}
This work does not present any foreseeable societal consequence.
\fi

\bibliography{references}

\newpage
\setcounter{equation}{0}
\setcounter{figure}{0}
\setcounter{table}{0}
\setcounter{page}{1}
\setcounter{section}{0}

\renewcommand{\theequation}{S\arabic{equation}}
\renewcommand{\thefigure}{S\arabic{figure}}
\renewcommand{\thetable}{S\arabic{table}}

\section*{Supplementary material}
\appendix
\section{Experimental details}
\label{sec:expdetails}
We are using JAX \citep{jax2018github}. 

All the models except for section \ref{sec:mse} have been trained with Softmax loss normalized as ${\cal L}(\lbrace x,y \rbrace_B) =\frac{1}{2 k |B|}\sum_{(x,y) \in B,i} y_i\log p_i(x), p_i(x)=\frac{e^{f^i(x)}}{\sum_j e^{f^j(x)}}$, where $k$ is the number of classes and $y^i$ are one-hot targets. 

All experiments that compare different learning rates and  $L_2$ parameters use the same seed for the weights at initialization and we consider only one such initialization (unless otherwise stated) although we have not seen much variance in the phenomena described. We will be using standard normalization with LeCun initialization $W \sim {\cal N}(0,\frac{\sigma_w^2}{N_{in}}),b \sim {\cal N}(0,\sigma_b^2)$. 

Batch Norm: we are using  JAX's Stax implementation of Batch Norm which doesn't keep track of training batch statistics for test mode evaluation. 

Data augmentation: denotes flip, crop and mixup.

We consider 3 different networks:
\begin{itemize}
    \item WRN: Wide Resnet 28-10 \citep{WRN}  with has batch-normalization and batch size $1024$ (per device batch size of $128$), $\sigma_w=1,\sigma_b=0$. Trained on CIFAR-10.
    \item FC: Fully connected, three hidden layers with width $2048$ and ReLU activation and batch size $512$,$\sigma_w=\sqrt{2},\sigma_b=0$. Trained on 512 samples of MNIST.
    \item CNN: We use the following architecture: 
$\text{Conv}_{1}(300) \rightarrow \text{Act} \rightarrow \text{Conv}_{2}(300) \rightarrow \text{Act} \rightarrow \text{MaxPool((6,6), 'VALID')} \rightarrow \text{Conv}_{1}(300) \rightarrow \text{Act} \rightarrow \text{Conv}_{2}(300) \rightarrow \text{MaxPool((6,6), 'VALID')} \rightarrow \text{Flatten()} \rightarrow \text{Dense}(500) \rightarrow \text{Dense}(10)$. $\text{Dense}(n)$ denotes a fully-connected layer with output dimension $n$. $\text{Conv}_{1}(n), \text{Conv}_{2}(n)$ denote convolutional layers with 'SAME' or 'VALID' padding and $n$ filters, respectively; all convolutional layers use $(3,3)$ filters.  MaxPool((2,2), 'VALID') performs max pooling with 'VALID' padding and a (2,2) window size. Act denotes the activation: `(Batch-Norm $\rightarrow$) ReLU ' depending on whether we use Batch-Normalization or not. We use batch size $128$, $\sigma_w=\sqrt{2},\sigma_b=0$. Trained on CIFAR-10 without data augmentation.
\end{itemize}

The WRN experiments are run on v3-8 TPUs and the rest on P100 GPUs. 

Here we describe the particularities of each figure. Whenever we report performance for a given time budget, we report the maximum performance during training which does not have to happen at the end of training.

\textbf{Figure \ref{fig:fig1a} } WRN trained using momentum$=0.9$, data augmentation and a learning rate schedule where $\eta(t=0)=0.2$ and then decays $\eta \rightarrow 0.2 \eta $ at $\lbrace 0.3\cdot T,0.6\cdot T,0.9 \cdot T \rbrace$, where $T$ is the number of epochs. 
We compare training with a fixed $T=200$ training budget, against training with $T(\lambda) = 0.1/\lambda$.
This was chosen so that $T(0.0005)=200$.

\textbf{Figures \ref{fig:fig1b}, \ref{fig:optimalL2}, \ref{fig:optimalL22c}.} WRN trained using momentum$=0.9$, data augmentation and $\eta=0.2$ for $\lambda \in (5\cdot 10^{-6} , 10^{-5}, 5 \cdot 10^{-5}, 0.0001, 0.0002, 0.0004, 0.001, 0.002)$. The predicted $\lambda$ performance of \ref{fig:fig1b} was computed at $\lambda=0.0066/T \in( 0.000131,6.56\cdot 10^{-5},2.63\cdot 10^{-5},1.31\cdot 10^{-5},6.56\cdot 10^{-6},4.38\cdot 10^{-6},3.28\cdot 10^{-6}
) $ for $T \in (50,100,250,500,1000,1500,2000)$ respectively.

\textbf{Figures \ref{fig:fig1c},\ref{fig:L2schedule}.} WRN trained using momentum$=0.9$, data augmentation and $\eta=0.2$, evolved for $200$ epochs. The \AutoLtwo~ algorithm is written explicitly in SM \ref{sec:AutoL2} and make measurements every 10 steps.

\textbf{Figure \ref{fig:timedynamics}a,b,c.} FC trained using SGD  $\frac{2}{\eta \lambda}$ epochs with learning rate and $L_2$ regularizations  $\eta \in (0.0025, 0.01, 0.02, 0.025, 0.03, 0.05, 0.08, 0.15, 0.3, 0.5, 1, 1.5, 2, 5, 10, 25, 50)$, $\lambda \in (
0, 10^{-5}, 0.0001, 0.0005, 0.001, 0.005, 0.01, 0.05, 0.1, 0.5, 1, 5, 10, 20, 50, 100)$. The  $\lambda=0$ model was evolved for $10^6/\eta$ epochs which is more than the smallest $\lambda$.

\textbf{Figure \ref{fig:timedynamics}d,e,f.} WRN trained using SGD without data augmentation for $\frac{0.1}{\eta \lambda}$ epochs for the following hyperparameters $\eta \in (0.0125, 0.025,  0.05,   0.1,    0.2,    0.4,    0.8,    1.6  ), \lambda \in (0, 1.5625 \cdot 10^{-5}, 6.25 \cdot 10^{-5}, 1.25 \cdot 10^{-4}, 2.5 \cdot 10^{-4}, 5 \cdot 10^{-4}, 10^{-3}, 2 \cdot 10^{-3}, 4 \cdot 10^{-3}, 8 \cdot 10^{-3}, 0.016)$, as long as the total number of epochs was $\le 4000$ epochs (except for $\eta=0.2,\lambda=6.25 \cdot 10^{-5}$ which was evolved for $8000$ epochs). We evolved the $\lambda=0$ models for $10000$ epochs.  

\textbf{Figure \ref{fig:cifarfc}.} Fully connected depth $3$ and width $64$ trained on CIFAR-10 with batch size $512$, $\eta=0.1$ and cross-entropy loss.

\textbf{Figure \ref{fig:imagenet}.} ResNet-50 trained on ImageNet with batch size 8192, using the implementation in \texttt{https://github.com/tensorflow/tpu}.

  \textbf{Figure \ref{fig:l2_mse_modes}}
  (a,b)   plots $f_t$ in equation \ref{eq:fmodes} with $k=2$ (for 2-layer) and $k=1$ (for linear), for different values of $\gamma$ and $\lambda=0.01$. (c) The empirical kernel of a $2-$layer ReLU network of width 5,000 was evaluated on $200$-samples of MNIST with even/odd labels. The linear, $2-$layer curves come from evolving equation \ref{eq:mainthm} with the previous kernel and setting $k=1,k=2$, respectively . The experimental curve comes from training the 2-layer ReLU network with width $10^5$ and learning rate $\eta=0.01$ (the time is $\text{step} \times \eta$).

\textbf{Figure \ref{fig:CNN}a,b,c.} CNN without BN trained using SGD for $\frac{0.01}{\eta \lambda}$ epochs for the following hyperparameters $\eta=0.01, \lambda \in (0, 5 \cdot 10^{-5}, 0.0001, 0.0005, 0.001, 0.01, 0.05, 0.1, 0.25, 0.5, 1, 2 )$. with $\lambda=0$ was evolved for $21000$ epochs.

\textbf{Figure \ref{fig:CNN}d,e,f.} CNN with BN trained using SGD for a time $\frac{0.01}{\eta \lambda}$ for the following hyperparameters $\eta=0.01, \lambda=
0, 5 \cdot 10^{-5}, 0.0001, 0.0005, 0.001, 0.01, 0.05, 0.1, 0.25, 0.5, 1, 2$. The model with $\lambda=0$ was evolved for $9500$ epochs, which goes beyond where all the other $\lambda$'s have peaked.
 
\textbf{Figure \ref{fig:catapult}.} FC trained using SGD and MSE loss for  $\frac{1}{\eta \lambda}$ epochs and the following hyperparameters $\eta \in (0.001, 0.005, 0.01, 0.02, 0.035, 0.05, 0.15, 0.3), \lambda \in (
10^{-5}, 0.0001, 0.0005, 0.001, 0.005, 0.01, 0.05, 0.1, 0.5, 1, 5, 50, 100)$. For $\lambda=0$, it was trained for $10^5/\eta$ epochs.

\textbf{Rest of SM figures.} Small modifications of experiments in previous figures, specified explicitly in captions. 

\section{Details of theoretical results}

In this section we prove the main theoretical results.
We begin with two technical lemmas that apply to $k$-homogeneous network functions, namely network functions $f_\theta(x)$ that obey the equation $f_{a\theta}(x) = a^k f_\theta(x)$ for any input $x$, parameter vector $\theta$, and $a>0$.

\begin{lemma}\label{lemma:ibp}
Let $f_\theta(x)$ be a $k$-homogeneous network function.
Then $\sum_\mu \theta_\mu \partial_\mu \partial_{\nu_1} \cdots \partial_{\nu_m} f(x) = (k-m) \partial_{\nu_1} \cdots \partial_{\nu_m} f(x)$.
\end{lemma}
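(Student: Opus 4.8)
The plan is to reduce the statement to the classical Euler identity for homogeneous functions and then differentiate it $m$ times. First I would recall that $k$-homogeneity, $f_{a\theta}(x) = a^k f_\theta(x)$, differentiated with respect to $a$ at $a=1$, gives the Euler relation $\sum_\mu \theta_\mu \partial_\mu f(x) = k f(x)$. This is the $m=0$ base case. The idea for general $m$ is that applying $\partial_{\nu_1}$ to this identity produces $\sum_\mu \partial_{\nu_1}(\theta_\mu \partial_\mu f) = k \partial_{\nu_1} f$, and the left side is $\sum_\mu \delta_{\mu\nu_1}\partial_\mu f + \sum_\mu \theta_\mu \partial_\mu \partial_{\nu_1} f = \partial_{\nu_1} f + \sum_\mu \theta_\mu \partial_\mu \partial_{\nu_1} f$; rearranging gives $\sum_\mu \theta_\mu \partial_\mu \partial_{\nu_1} f = (k-1)\partial_{\nu_1} f$, which is the claim for $m=1$.

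Then I would set up an induction on $m$. Suppose the claim holds for some $m$, i.e. $\sum_\mu \theta_\mu \partial_\mu \partial_{\nu_1}\cdots\partial_{\nu_m} f = (k-m)\partial_{\nu_1}\cdots\partial_{\nu_m} f$. Apply $\partial_{\nu_{m+1}}$ to both sides. On the left, the product rule yields $\sum_\mu \delta_{\mu\nu_{m+1}} \partial_\mu \partial_{\nu_1}\cdots\partial_{\nu_m} f + \sum_\mu \theta_\mu \partial_\mu \partial_{\nu_1}\cdots\partial_{\nu_m}\partial_{\nu_{m+1}} f$, where the first sum collapses to $\partial_{\nu_1}\cdots\partial_{\nu_{m+1}} f$. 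On the right we get $(k-m)\partial_{\nu_1}\cdots\partial_{\nu_{m+1}} f$. Rearranging gives $\sum_\mu \theta_\mu \partial_\mu \partial_{\nu_1}\cdots\partial_{\nu_{m+1}} f = (k-m-1)\partial_{\nu_1}\cdots\partial_{\nu_{m+1}} f = (k-(m+1))\partial_{\nu_1}\cdots\partial_{\nu_{m+1}} f$, completing the induction.

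There is no real obstacle here — the argument is a routine induction once the Euler identity is in hand. The only points that warrant a word of care are: (i) justifying the interchange of the $a$-derivative and the evaluation (smoothness of $f_\theta$ in $\theta$, which holds for the network functions considered, at least away from the non-differentiability locus of ReLU — this is the standard caveat for homogeneous network analyses and I would simply note it); and (ii) being careful that $\theta_\mu$ is treated as an independent coordinate so that $\partial_{\nu}\theta_\mu = \delta_{\mu\nu}$, which is what generates the ``$-1$ per derivative'' shift. I would present the $m=0$ Euler identity, then state the inductive step compactly as above.
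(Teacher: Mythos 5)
Your proposal is correct and follows essentially the same route as the paper's proof: the Euler identity from differentiating $f_{a\theta}(x)=a^k f_\theta(x)$ at $a=1$ as the base case, followed by induction on $m$ using the product rule and $\partial_{\nu}\theta_\mu=\delta_{\mu\nu}$ to generate the shift from $k-m$ to $k-(m+1)$. The added remarks on smoothness and on treating $\theta_\mu$ as independent coordinates are sensible but do not change the argument.
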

\begin{proof}
We prove by induction on $m$.
For $m=0$, we differentiate the homogeneity equation with respect to $a$.
\begin{align}
    0 &= \left. \frac{\dho}{\dho a} \right|_{a=1} \left( f_{a\theta}(x) - a^k f_\theta(x) \right)
    = \sum_\mu \frac{\dho f(x)}{\dho \theta_\mu} \theta_\mu - k f_\theta(x) \,.
\end{align}
For $m>0$,
\begin{align}
    \sum_\mu \theta_\mu \partial_\mu \partial_{\nu_1} \cdots \partial_{\nu_m} f(x) &=
    \partial_{\nu_m} \left( \sum_\mu \theta_\mu \partial_{\mu} \partial_{\nu_1} \cdots \partial_{\nu_{m-1}} f(x) \right) 
    - \sum_\mu \left( \partial_{\nu_m} \theta_\mu \right) \partial_{\mu} \partial_{\nu_1} \cdots \partial_{\nu_{m-1}} f(x)
    \cr &= \dho_{\nu_m} (k-m+1) \partial_{\nu_1} \cdots \partial_{\nu_{m-1}} f(x)
    - \sum_\mu \delta_{\mu\nu_m} \partial_{\nu_m} \theta_\mu \partial_{\mu} \partial_{\nu_1} \cdots \partial_{\nu_{m-1}} f(x)
    \cr
    &= (k-m) \partial_{\nu_1} \cdots \partial_{\nu_{m-1}} \dho_{\nu_m} f(x) \,.
\end{align}

\end{proof}

\begin{lemma}\label{lemma:asymp}
Consider a $k$-homogeneous network function $f_\theta(x)$, and a correlation function $C(x_1,\dots,x_m)$ that involves derivative tensors of $f_\theta$.
Let $L = \sum_{x \in S} \ell(x) + \frac{1}{2} \lambda \| \theta \|_2^2$ be a loss function, where $S$ is the training set and $\ell$ is the sample loss.
We train the network using gradient flow on this loss function, where the update rule is $\frac{d\theta^\mu}{dt} = - \frac{dL}{d\theta^\mu}$.
If the conjecture of \citet{feynman-diagrams} holds, and if the conjecture implies that $C = \cO(n^{-1})$ where $n$ is the width, then $\frac{dC}{dt} = \cO(n^{-1})$ as well.
\end{lemma}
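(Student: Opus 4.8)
The plan is to differentiate $C$ along the gradient flow and to split the result into a piece coming from the empirical loss $\sum_{x\in S}\ell(x)$ and a piece coming from the $L_2$ term $\frac{\lambda}{2}\|\theta\|_2^2$. The loss piece is exactly what $\frac{dC}{dt}$ is under \emph{unregularized} gradient flow, whose large-width behaviour is already controlled by the conjecture of \citet{feynman-diagrams}; the genuinely new ingredient is the $L_2$ piece, and the whole point is that Lemma~\ref{lemma:ibp} collapses it into an expression proportional to $C$ itself.

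Concretely, I would write $C$ as a finite sum of full index contractions of products of derivative tensors $T^{(i)}=\partial_{\nu^{(i)}_1}\cdots\partial_{\nu^{(i)}_{r_i}}f_\theta(x_i)$, so that $C$ is multilinear in the factors $T^{(i)}$; for a tensor $U$ carrying the same indices as $T^{(i)}$, let $C[T^{(i)}\mapsto U]$ denote the result of substituting $U$ for the $i$-th factor. The gradient flow update is $\frac{d\theta^\mu}{dt}=-\sum_{x'\in S}\frac{\partial\ell(x')}{\partial f}\,\partial_\mu f(x')-\lambda\,\theta^\mu$, so by the chain rule $\frac{dT^{(i)}}{dt}=\sum_\mu\frac{d\theta^\mu}{dt}\,\partial_\mu T^{(i)}$ and hence $\frac{dC}{dt}=A+B$, where
\begin{align}
A &:= -\sum_i\sum_{x'\in S}\frac{\partial\ell(x')}{\partial f}\;C\!\left[\,T^{(i)}\mapsto\sum_\mu\big(\partial_\mu f(x')\big)\big(\partial_\mu T^{(i)}\big)\right], \\
B &:= -\lambda\sum_i C\!\left[\,T^{(i)}\mapsto\sum_\mu\theta^\mu\,\partial_\mu T^{(i)}\right].
\end{align}
(If $C$ is defined with an expectation over the initialization, one differentiates under it, which changes nothing below.)

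For $B$, apply Lemma~\ref{lemma:ibp} factor by factor: since $f_\theta$ is $k$-homogeneous, $\sum_\mu\theta^\mu\,\partial_\mu T^{(i)}=\sum_\mu\theta^\mu\,\partial_\mu\partial_{\nu^{(i)}_1}\cdots\partial_{\nu^{(i)}_{r_i}}f_\theta(x_i)=(k-r_i)\,T^{(i)}$, and then multilinearity gives $C[T^{(i)}\mapsto(k-r_i)T^{(i)}]=(k-r_i)\,C$; summing, $B=-\lambda\big(\sum_i(k-r_i)\big)\,C$, a fixed constant times $\lambda C$, hence $\cO(n^{-1})$ since $C=\cO(n^{-1})$. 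For $A$, observe that $A$ is precisely $\frac{dC}{dt}$ along unregularized gradient flow: it is a finite sum over the training set and over the factors $i$ of correlation functions again built from derivative tensors of $f_\theta$ (each with one extra input $x'$ and one extra derivative on the $i$-th factor), with coefficients $\partial\ell(x')/\partial f$ that are $\cO(1)$ in the width. By the conjecture of \citet{feynman-diagrams} — specifically by the rules for how the gradient-flow time derivative acts on the diagrammatic representation of a correlator, which is the mechanism behind the constancy of the NTK at infinite width — each such correlator has a large-width scaling no larger than that of $C$, hence $\cO(n^{-1})$. Combining, $\frac{dC}{dt}=A+B=\cO(n^{-1})$.

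I expect the only real obstacle to be the claim about $A$: that the correlators generated by the loss-gradient term inherit the $\cO(n^{-1})$ scaling of $C$. This does not follow from Lemma~\ref{lemma:ibp} and must be read off from the structure of the conjecture rather than argued by hand — but it is exactly the ingredient one already relies on for unregularized infinite-width dynamics, so once the hypothesis grants the conjecture (and the scaling of $C$), the only genuinely new bookkeeping is the evaluation of $B$, which Lemma~\ref{lemma:ibp} closes cleanly.
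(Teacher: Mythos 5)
Your decomposition of $\frac{dC}{dt}$ into a loss-gradient piece $A$ and an $L_2$ piece $B$ is exactly the paper's, and your treatment of $B$ via Lemma~\ref{lemma:ibp} — each factor $\sum_\mu \theta^\mu \partial_\mu T^{(i)} = (k-r_i)\,T^{(i)}$, so $B$ is an $n$-independent multiple of $\lambda C$ — matches the paper's argument verbatim. The problem is the term $A$, and you have correctly diagnosed it yourself: you assert that ``the rules for how the gradient-flow time derivative acts on the diagrammatic representation'' give the bound, but the conjecture of \citet{feynman-diagrams} contains no such rules. It only assigns an asymptotic exponent $n_e + (n_o - m)/2$ to a \emph{static} correlation function in terms of its cluster graph. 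The content of this lemma, beyond Lemma~\ref{lemma:ibp}, is precisely the verification that the correlators appearing in $A$ do not have a worse exponent than $C$ — and that verification is the step you leave open.

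The paper closes it with a short but essential piece of cluster-graph bookkeeping. Replacing $\partial_{\mu_1\dots\mu_a} f(x_b)$ by $-\sum_{\nu,x'} \partial_{\mu_1\dots\mu_a\nu} f(x_b)\,\partial_\nu f(x')\,\ell'(x')$ inserts one new vertex ($\partial_\nu f(x')$), contracted by a new index $\nu$ into the component containing $x_b$. This sends $m \mapsto m+1$ and flips the parity of that component's size, so $n_e \mapsto n_e \pm 1$ and $n_o \mapsto n_o \mp 1$; hence the controlling combination changes as $n_e + (n_o-m)/2 \mapsto n_e + (n_o-m)/2 \pm 1 \mp \tfrac12 - \tfrac12$, which is never an increase. (The $\ell'(x')$ factor leaves $n_e$ and $n_o - m$ unchanged and so is harmless, consistent with your treating it as an $\cO(1)$ coefficient.) Since the exponent does not increase, each correlator in $A$ remains $\cO(n^{-1})$, and the lemma follows. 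Your proposal needs this computation — or an equivalent one — to be a proof rather than a reduction of the lemma to an unproved instance of itself.
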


\begin{proof}
The cluster graph of $C$ has $m$ vertices; we denote by $n_e$ ($n_o$) the number of even (odd) components in the graph (we refer the reader to \citet{feynman-diagrams} for a definition of the cluster graph and other terminology used in this proof).
By assumption, $n_e + (n_o - m)/2 \le -1$.

We can write the correlation function as $C(x_1,\dots,x_m) = \sum_{\rm indices} \lexpp{\theta} \dho f(x_1) \cdots \dho f(x_m) \rexp$, where $\lexpp{\theta} \cdot \rexp$ is a mean over initializations, $\dho f(x)$ is shorthand for a derivative tensor of the form $\dho_{\mu_1\dots\mu_a} f(x) := \frac{\dho^a f(x)}{\dho \theta^{\nu^1} \cdots \dho \theta^{\nu^a}}$ for some $a$, and the sum is over all the free indices of the derivative tensors.
Then $\frac{dC}{dt} = \sum_{b=1}^m C_b$, where $C_b := \lexpp{\theta} \dho f(x_1) \cdots \frac{d \dho f(x_b)}{dt} \cdots \dho f(x_m) \rexp$.
To bound the asymptotic behavior of $dC/dt$ it is therefore enough to bound the asymptotics of each $C_b$.

Notice that each $C_b$ is obtained from $C$ by replacing a derivative tensor $\dho f$ with $d (\dho f)/dt$ inside the expectation value.
Let us see how this affects the cluster graph.
For any derivative tensor $\dho_{\mu_1\dots\mu_a} f(x) := \dho^a f(x) / \dho \theta^{\nu^1} \cdots \dho \theta^{\nu^a}$, we have
\begin{align}
    \frac{d}{dt} \dho_{\mu_1\dots\mu_a} f(x) &=
    \sum_\nu \dho_{\mu_1\dots\mu_a \nu} f(x) \frac{d\theta^\nu}{dt}
    \cr &= - \sum_\nu \dho_{\mu_1\dots\mu_a \nu} f(x) \left[
    \sum_{x' \in S} \dho_\nu f(x') \ell'(x') + \lambda \theta^\nu
    \right]
   \nonumber \\
    &= - \sum_{\nu,x'} \dho_{\mu_1\dots\mu_a \nu} f(x) \dho_\mu f(x') \ell'(x')
    - (k-a) \lambda \dho_{\mu_1\dots\mu_a} f(x) \,. \label{eq:rule}
\end{align}
In the last step we used lemma~\ref{lemma:ibp}.
We now compute how replacing the derivative tensor $\dho f$ by each of the terms in the last line of \eqref{eq:rule} affects the cluster graph, and specifically the combination $n_e + (n_o - m)/2$.

The second term is equal to the original derivative tensor up to an $n$-independent factor, and therefore does not change the asymptotic behavior.
For the first term, the $\ell'$ factor leaves $n_e$ and $n_o - m$ invariant so it will not affect the asymptotic behavior.
The additional $\dho_\mu f$ factor increases the number of vertices in the cluster graph by 1, namely it changes $m \mapsto m+1$.
In addition, it increases the size of the graph component of $\dho_{\mu_1 \dots \mu_a} f(x)$ by 1, therefore either turning an even sized component into an odd sized one or vice versa.
In terms of the number of components, it means we have $n_e \mapsto n_e \pm 1, n_o \mapsto n_o \mp 1$. Therefore, $n_e + (n_o - m)/2 \mapsto n_e + (n_o - m)/2 \pm 1 \mp \frac{1}{2} - \frac{1}{2} \le n_e + (n_o - m)/2 \le 1$.
Therefore, it follows from the conjecture that $C_b = \cO(n^{-1})$ for all $b$, and then $dC/dt = \cO(n^{-1})$.
\end{proof}

We now turn to the proof of Theorems~\ref{thm1} and \ref{thm:mse}.
\begin{proof}[Proof (Theorem~\ref{thm1})]
A straightforward calculation leads to the following gradient flow equations for the network function and kernel.
\begin{align}
    \frac{d f_t(x)}{dt} &= - \sum_{x'\in S} \Theta_t(x,x') \ell'(x') - k \lambda f_t(x) \,, \label{eq:dfc} \\
    \frac{d \Theta_t(x,x')}{dt} &=
    - 2 (k-1) \lambda \Theta_t(x,x') 
    + T_t(x,x') + T_t(x',x)
    \,, \\
    T_t(x,x') &= - \sum_{x''\in S} \dho_{\mu\nu} f_t(x) \dho_\mu f_t(x') \dho_\nu f_t(x'') \ell'(x'') \,.
\end{align}
Here $\ell' = d\ell/df$.
In deriving these we used the gradient flow update $\frac{d\theta^\mu}{dt} = - \frac{\dho L}{\dho \theta^\mu}$ and Lemma~\ref{lemma:ibp}.
It was shown in \citet{feynman-diagrams} that $\lexpp{\theta} T_0 \rexp = \cO(n^{-1})$.
If then follows from Lemma~\ref{lemma:asymp} that $\lexpp{\theta} \frac{d^m T_0}{dt^m} \rexp = \cO(n^{-1})$ for all $m$, where the expectation value is taken at initialization.
Furthermore, the results of \citet{feynman-diagrams} imply that $\mathrm{Var}\! \left[ \frac{d^m T_0}{dt^m} \right] = \cO(n^{-2})$ and therefore $\frac{d^m T_0}{dt^m} \underset{p} \rightarrow 0$.\footnote{See appendix D in \citet{feynman-diagrams}.}
In the strict infinite width limit we can therefore neglect the $T$ contribution in the following equation, and write
\begin{align}
    \frac{d^m \Theta_0(x,x')}{dt^m} &= [- 2 (k-1) \lambda]^m \Theta_0(x,x') \,,\quad
    m=0,1,\dots\,.
\end{align}
The solution of this set of equations (labelled by $m$) is the same as for the any-time equation $\frac{d}{dt} \Theta_t(x,x') = -2 (k-1) \lambda \Theta_t(x,x')$, and the solution is given by
\begin{align}
    \Theta_t(x,x') = e^{-2 (k-1) \lambda t} \Theta_0(x,x') \,.
\end{align}
\end{proof}

\begin{proof}[Proof (Theorem~\ref{thm:mse})]
The evolution of the kernel eigenvalues, and the fact that its eigenvectors do not evolve, follow immediately from \eqref{eq:mainthm_theta}.
The solution \eqref{eq:fmodes} can be verified directly by plugging it into \eqref{eq:mainthm} after projecting the equation on the eigenvector $\hat{e}_a$.
Finally, the fact that the function decays to zero at late times can be seen from \eqref{eq:fmodes} as follows.
From the assumption $k \ge 2$, notice that $\exp\! \left[ 
    - \frac{\gamma_a(t')}{2(k-1)\lambda} - (k-2) \lambda t'
    \right] \le 1$ when $t' \ge 0$.
    Therefore, we can bound each mode as follows.
\begin{align}
    |f_a(x;t)| &\le
    e^{-k\lambda t} \left[
    e^{
    \frac{(\gamma_a(t)-\gamma_a)}{2(k-1)\lambda} 
    }
    |f_a(x;0)|
    + |\gamma_a y|
    e^{
    \frac{\gamma_a(t)}{2(k-1)\lambda} 
    }
    \int_0^t \! dt' 
    \right] \,.
\end{align}
Therefore, $\lim_{t\to\infty} |f_a(x;t)| = 0$.
\end{proof}

For completeness we now write down the solution \eqref{eq:fmodes} in functional form, for $x \in S$ in the training set.
\begin{align}
    f_t(x) &= e^{-k \lambda t}
    \Bigg\{ 
    \sum_{x' \in S}
    \exp \left[
    \frac{\Theta_t-\Theta_0}{2(k-1)\lambda}
    \right] \! (x,x') \,
    f_0(x')
    \cr &\quad \qquad \quad
    +
    \sum_{x',x'' \in S}
    \int_0^t \! dt' \, 
    e^{- (k-2) \lambda t'}
    \exp \left[ 
    \frac{\Theta_t-\Theta_{t'}}{2(k-1)\lambda} 
    \right]\! (x,x') \, \Theta_0(x',x'') \, y(x'')
    \Bigg\} 
    \,. \cr
    \Theta_t(x,x') &= e^{-2(k-1)\lambda t} \Theta_0(x,x')
    \label{eq:theta_evolve} \,.
\end{align}
Here, $\exp(\cdot)$ is a matrix exponential, and $\Theta_t$ is a matrix of size $N_{\rm samp} \times N_{\rm samp}$.

\subsection{Deep linear fixed point analysis}

\newcommand{\cev}[1]{\reflectbox{\ensuremath{\vec{\reflectbox{\ensuremath{#1}}}}}}

Let's consider a deep linear model $f(x)=\beta W_L....W_0.x$, with $\beta=n^{-L/2}$ for NTK normalization and $\beta=1$ for standard normalization. The gradient descent equation will be:

\begin{eqnarray}
    \Delta W_{a b}^l = -\eta \lambda W_{a b}^l-\eta \beta \vec{W}_{a}^{l+1} \cev{W}_{b \alpha}^{l-1} \sum_{(x,y)\in S} \ell'(x,y) x_{\alpha}
    \end{eqnarray}
where we defined:
\begin{equation}
     \vec{W}^{l} \equiv W^L...W^l , \cev{W}_{\alpha}^{l} \equiv W^l...W_{\alpha}^0
\end{equation}
Evolution will stop when the fixed point ( $\Delta W=0$) is reached:
    \begin{eqnarray}
         W^{L>l>0}_{a b}= \vec{W}^{l+1}_{a} \hat{W}^{l-1}_{b}; W^{L}_a=\hat{W}_a^{L-1}; W^{0}_{a \alpha}=\vec{W}_a^{1} \hat{W}^{-1}_{\alpha}
         \\
          \hat{W}^{l-1}_{b} \equiv -\frac{\beta}{\lambda} \cev{{W}}^{l-1}_{b \alpha}\sum_{(x,y)\in S} \ell'(x,y) x_{\alpha} ; \cev{W}_{a \alpha}^{-1}=\delta_{a \alpha}
  \end{eqnarray}
  
 Furthermore note that:
  \begin{eqnarray}
  \vec{W}^l.\hat{W}^{l-1}=-\frac{1}{\lambda}\sum_{(x,y)\in S} \ell'(x,y) f(x) =\tilde{f}
\end{eqnarray}

Now, we would like to show that, at the fixed point
\begin{equation}
        \vec{W}^l_a=\tilde{f}^{L-l} \hat{W}^{l-1}_a
\end{equation}

This follows from induction:
\begin{eqnarray}
\vec{W}^L=W^L=\hat{W}^{L-1} \\
\vec{W}^l=\vec{W}^{l+1} (\vec{W}^{l+1} \hat{W}^{l-1})=\tilde{f}^{L-l-1} \hat{W}^{l}  (\vec{W}^{l+1} \hat{W}^{l-1})=\tilde{f}^{L-l} \hat{W}^{l-1}
\end{eqnarray}
Which has a trivial solution if $\tilde{f}=0$. Let's assume that it is non-trivial. If we contract the previous equation with $\hat{W}^{l-1}$ we get:
\begin{equation}
    \tilde{f}=\tilde{f}^{L-l} ||\hat{W}^{l-1}||^{2} 
    \end{equation}
    We can finally set $l=0$ and simplify:
    \begin{equation}
    \frac{\lambda^{L+1}}{\beta^2}
    =[- \sum_{(x,y)\in S} \ell'(x,y) f(x)]^{L-1}   \sum_{(x,y),(x',y')\in S} \ell'(x,y) \ell'(x',y') x.x'
\end{equation}

At large $n$, to obtain a non-trivial fixed point $f(x)$ should be finite as $n \rightarrow \infty$. From the previous equation, this implies that $\frac{\lambda^{L+1}}{\beta^2}= \theta(n^{0})$. In NTK normalization $\beta^2=n^{-L}$, for $\lambda = \theta(n^{-\frac{L}{L+1}})$, we will get a non-trivial ($f(x) \not = 0$) fixed point. This also implies that these corrections will be important for $\lambda = \theta(n^0)$ in standard normalization since there $\beta=1$. Note that if $\frac{\lambda^{L+1}}{\beta^2} \not = \Omega(n^0)$, we expect that we get the $\lambda=0$ solution $\ell'(x,y)=0$.

We can be very explicit if we consider $L=1$ and one sample with $x,y=1,1$ for MSE loss. The fixed point has a logit:
\begin{equation}
    \lambda \sqrt{n}=f-1
\end{equation}
which is only different from $0,1$ for fixed $\lambda^2 n$. 

\newpage
\section{More on experiments}

\subsection{Training accuracy = 1 scale}

We can see how the time it takes to rech training accuracy $1$ depends very mildly on $\lambda$, and for small enough learning rates it scales like $1/\eta$.

\begin{figure}[ht!]
\centering
\subfloat[FC]{
  \includegraphics[width=0.33\textwidth]{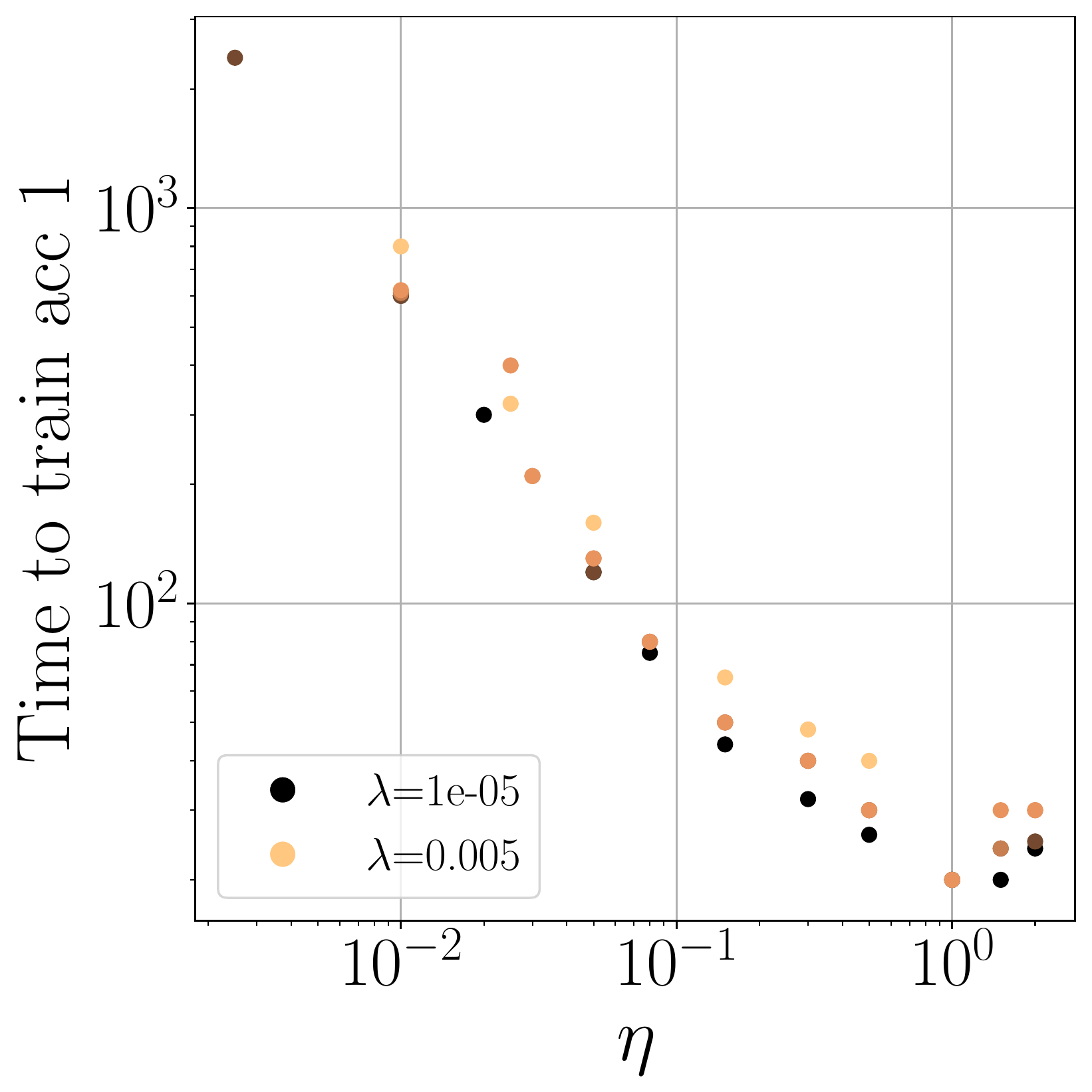}
} 
\subfloat[WRN]{
  \includegraphics[width=0.33\textwidth]{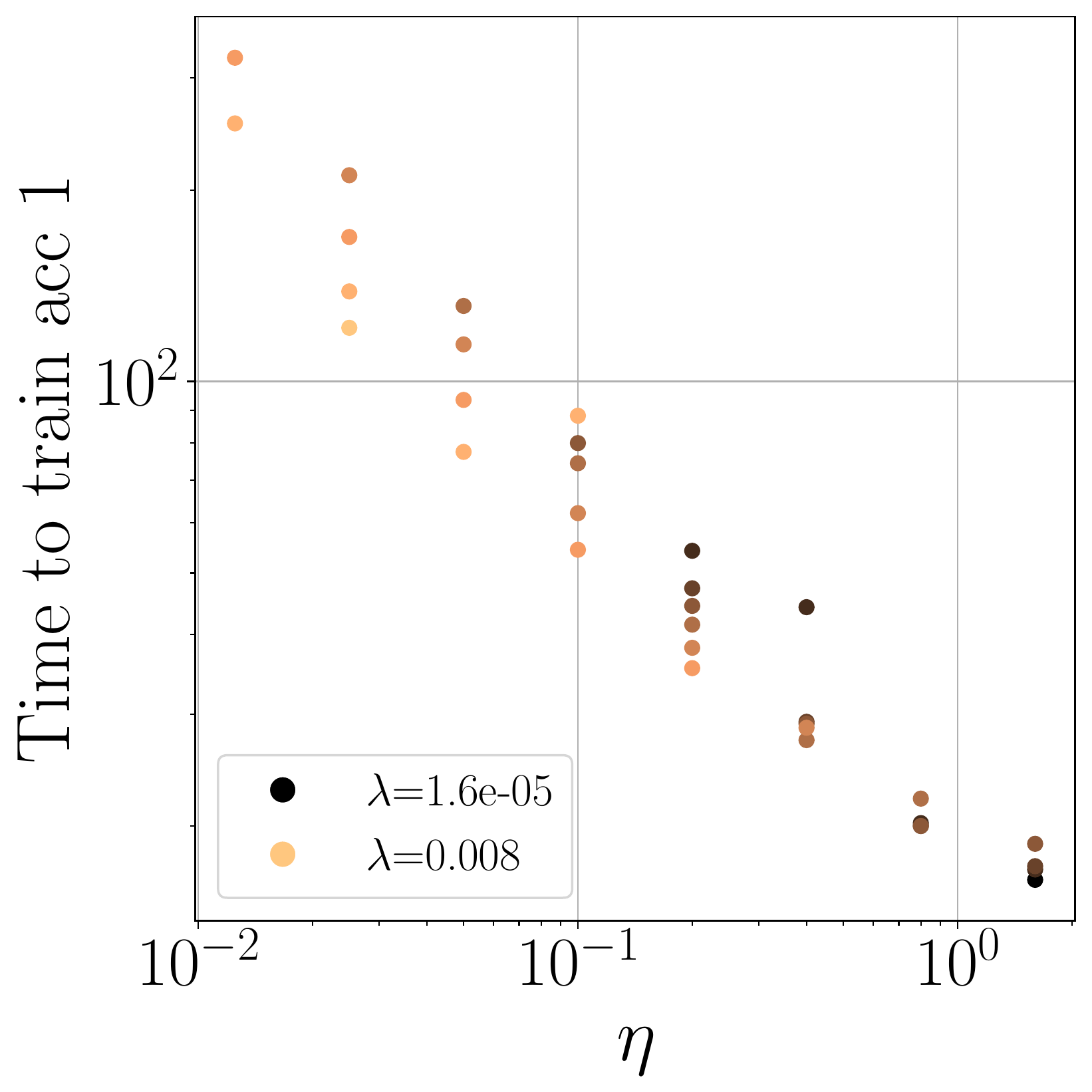}
} 
    \caption{Training accuracy vs learning rate the setup of figure \ref{fig:timedynamics}. The specific values for the $\eta,\lambda$ sweeps are in \ref{sec:expdetails}. }
    
\end{figure}

\subsection{ More WRN experiments}
\label{sec:morewrn}
We can also study the previous in the presence of momentum and data augmentation. These are the experiments that we used in figure \ref{fig:optimalL2}, evolved until convergence. As discussed before, in the presence of momentum the $t_{*}$ depends on $\eta$, so we will fixed the learning rate $\eta=0.2$.

\begin{figure}[H]

  \subfloat[]
  {
    \includegraphics[width=0.33 \textwidth]{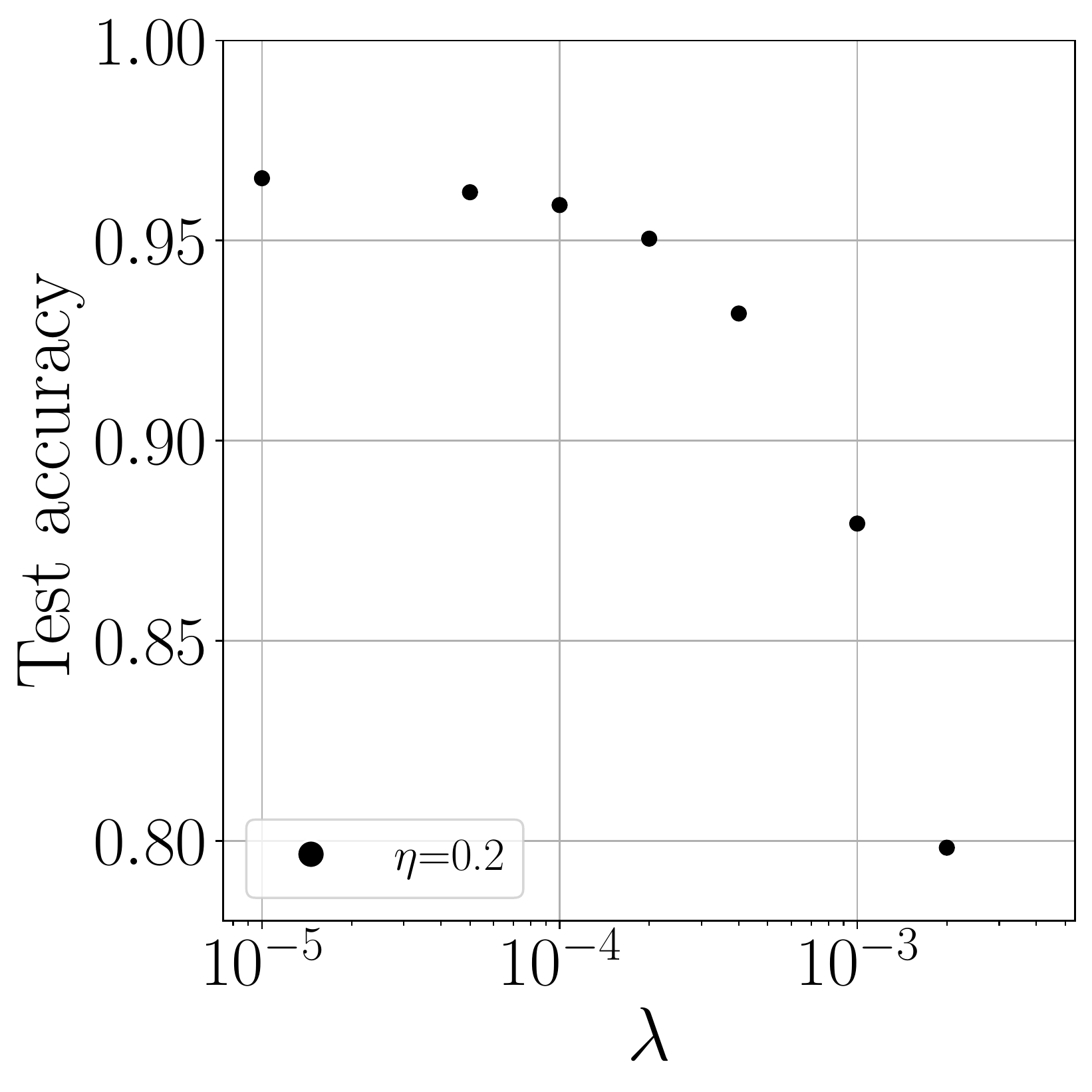}
     }
  \subfloat[]
  {
    \includegraphics[width=0.33 \textwidth]{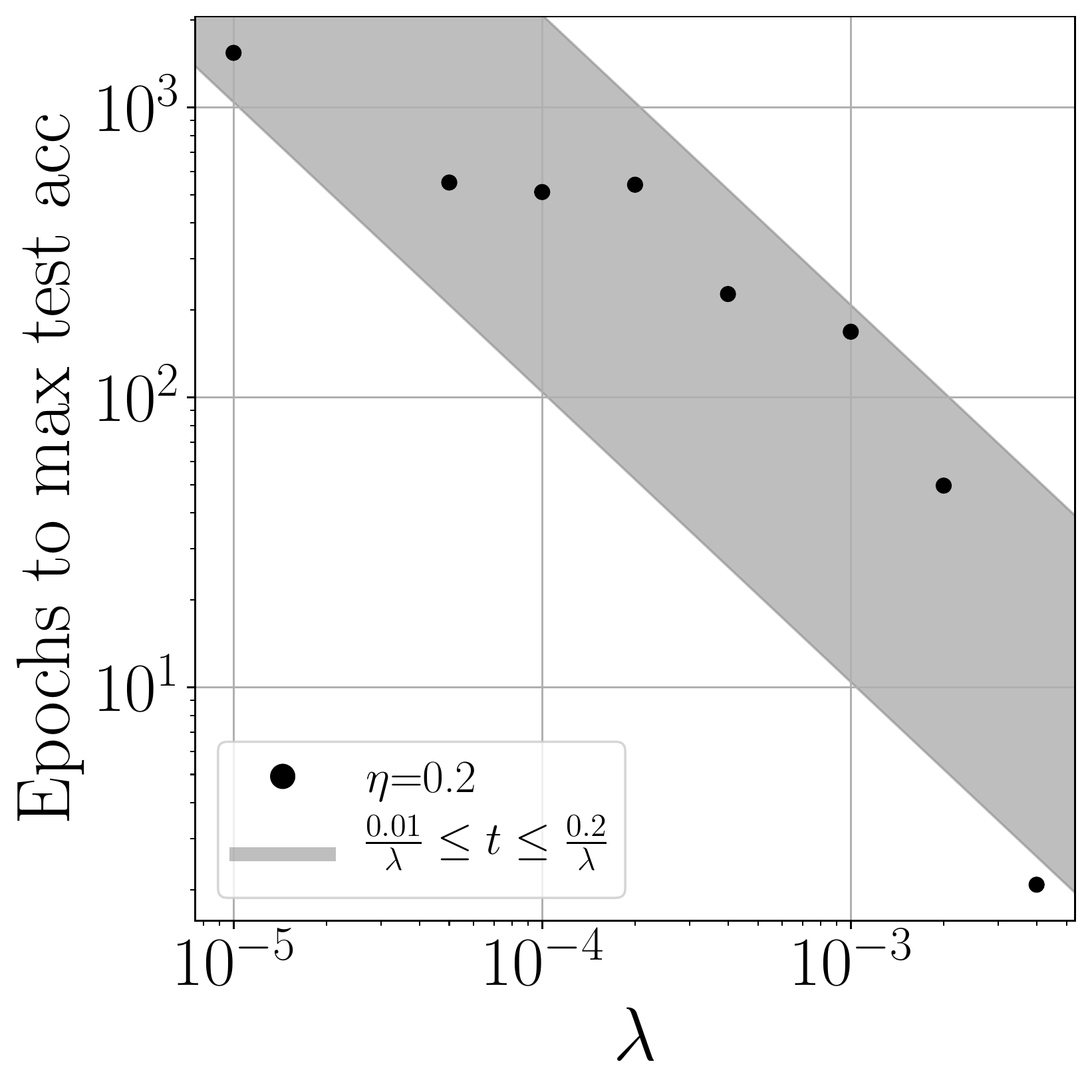}
     }
   \subfloat[]
   {
    \includegraphics[width=0.33 \textwidth]{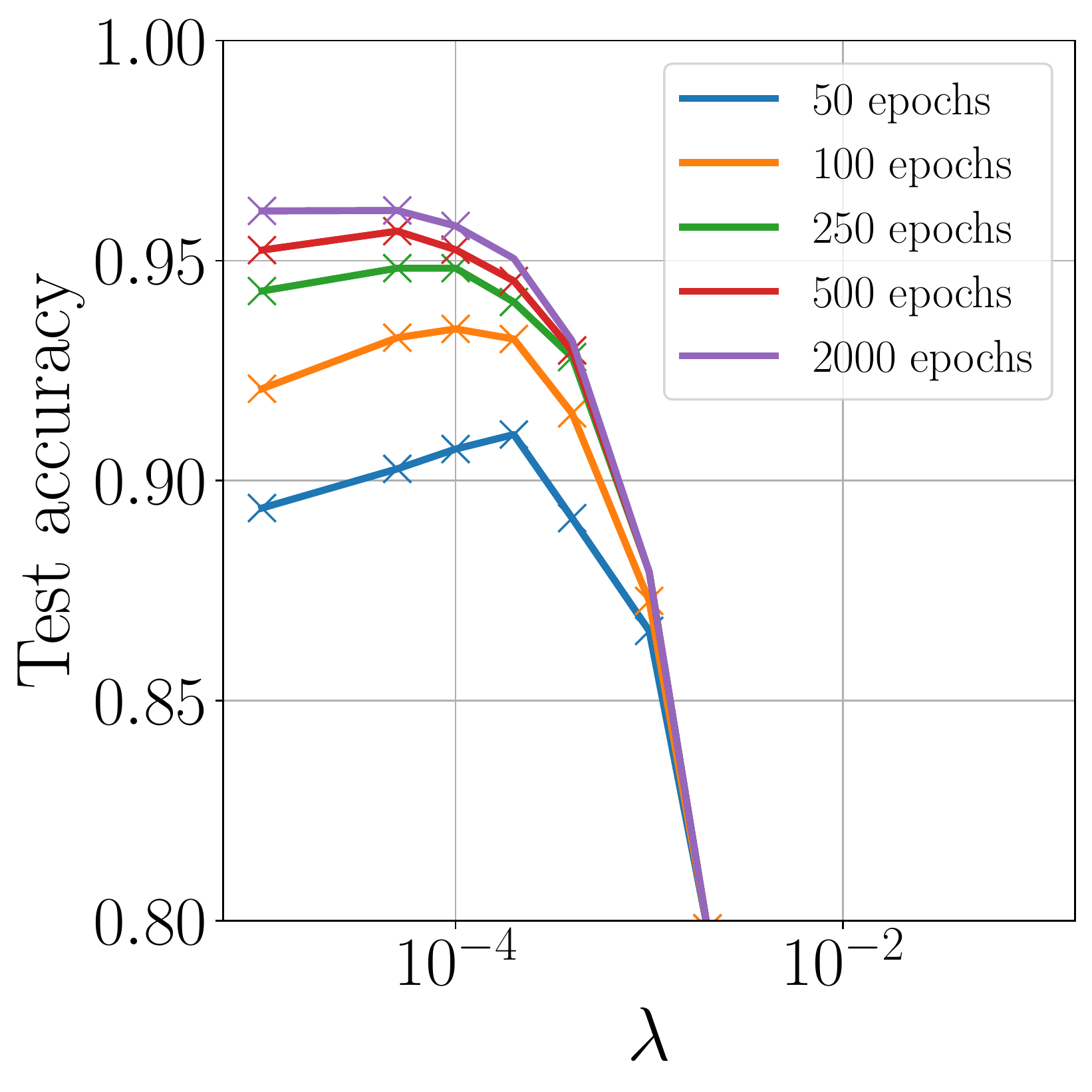}
     }
    \caption{WRN 28-10 with momentum and data augmentation trained with a fixed learning rate.} 
    
\end{figure}

\subsection{More on optimal $L_2$}
\label{sec:moreoptL2}
Here we give more details about the  optimal $L_2$ prediction of section \ref{sec:L2opt}. Figure \ref{fig:optimalL2SM} illustrates how performance changes as a function of $\lambda$ for different time budgets with the predicted $\lambda$ marked with a dashed line.
If one wanted to be more precise, from figure \ref{fig:timedynamics} we see that while the scaling works across $\lambda$'s, generally lower $\lambda$'s have a scaling $\sim 2$ times higher than the larger $\lambda$'s. One could try to get a more precise prediction by multiplying $c$ by two, $c_{\text{small} \lambda}\sim 2 c_{\text{large} \lambda}$, see figure \ref{fig:optimalL22c}. We reserve a more detailed analysis of this more fine-grained prescription for the future. 
\begin{figure}[ht!]
  \centering
  \subfloat{\includegraphics[width=0.33 \textwidth]{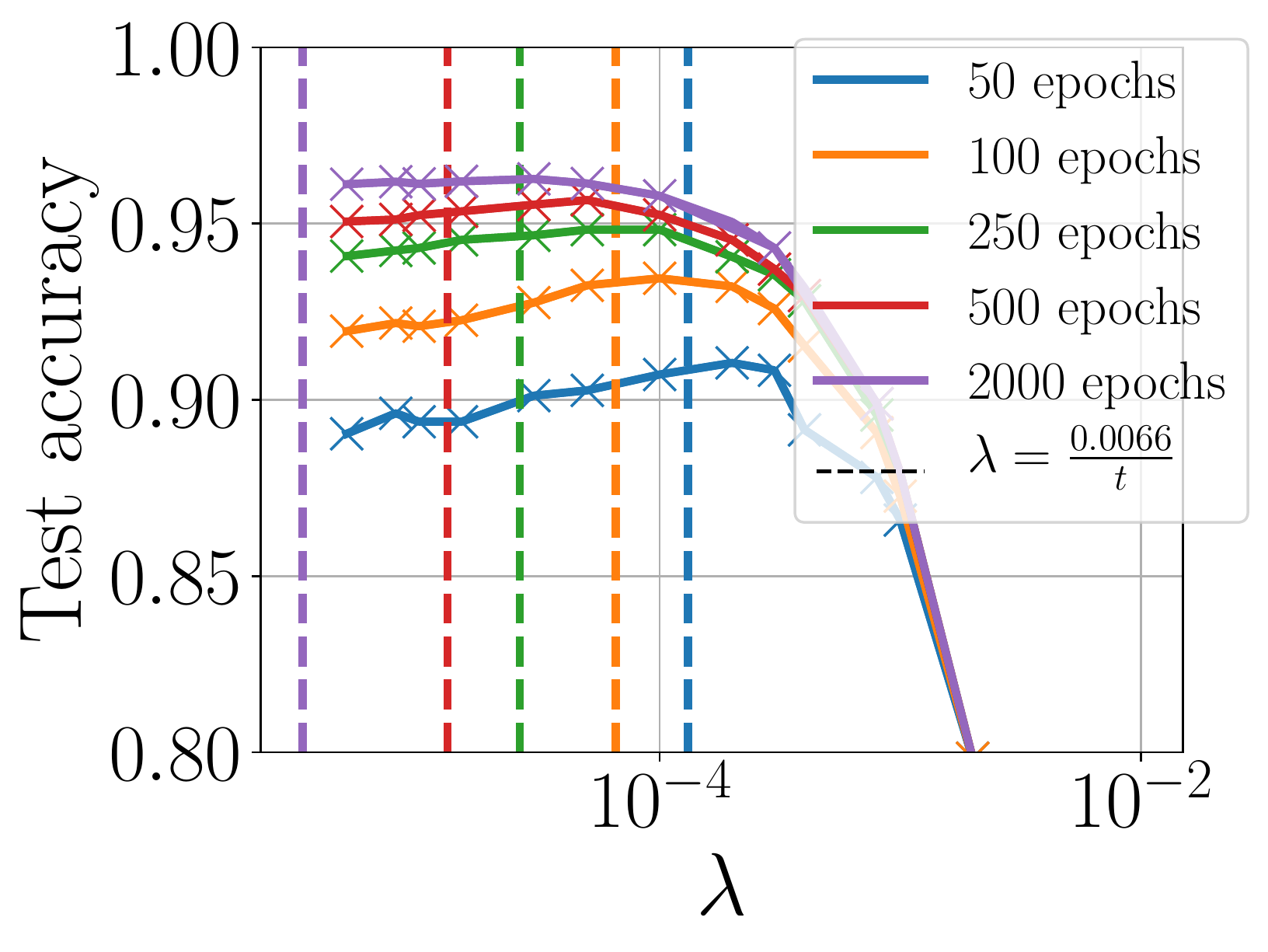} }
\caption{WRN trained with momentum and data augmentation.  Given a number of epochs, we compare the maximum test accuracy as a function of $L_2$ and compare it with the smallest $L_2$ with the predicted one. We see that this gives us the optimal $\lambda$ within an order of magnitude.}
\label{fig:optimalL2SM}
\end{figure}

\begin{figure}[H]
  \centering
    \subfloat[]{\includegraphics[width=0.33 \textwidth]{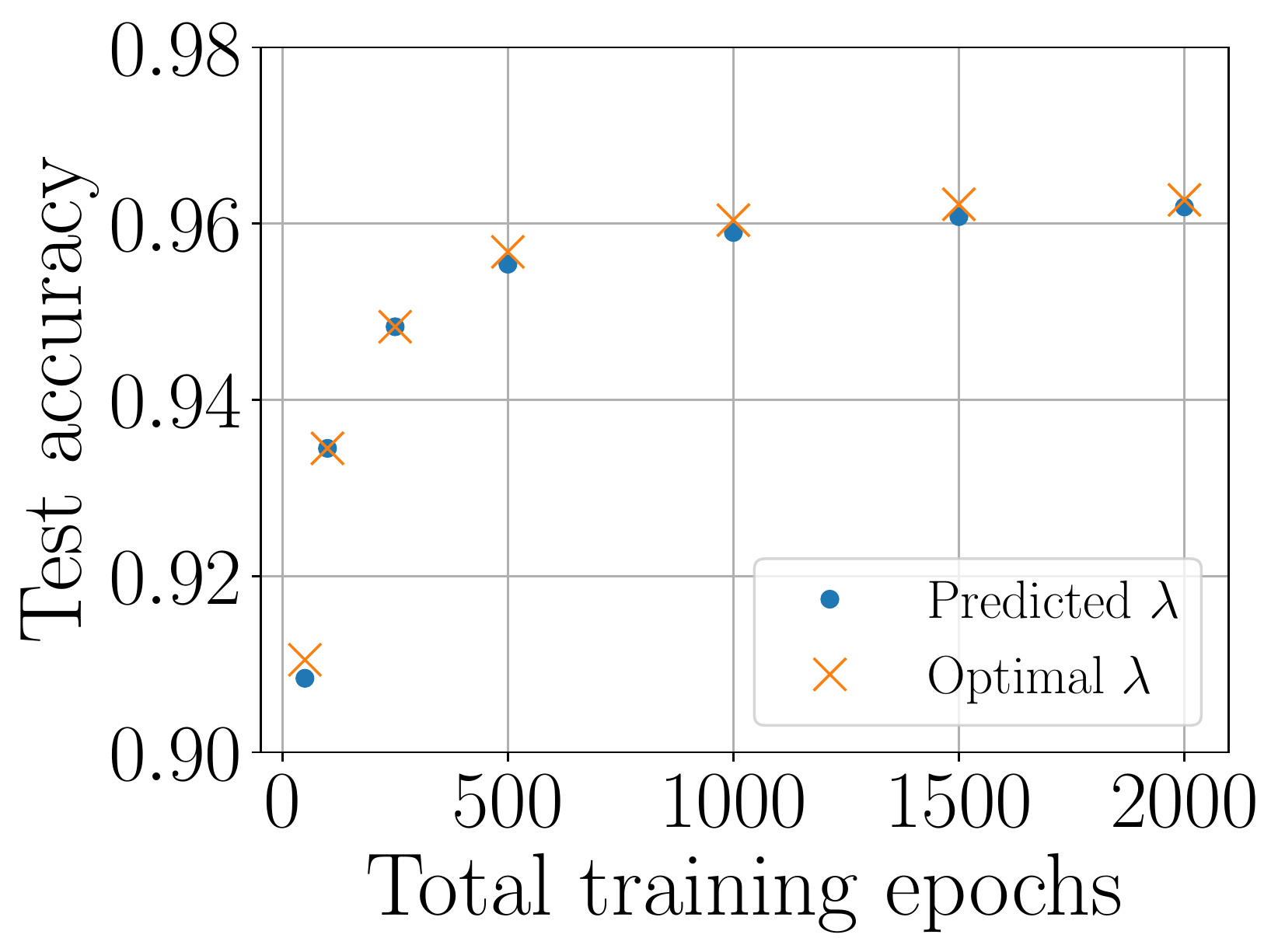}  }
  \subfloat[]{\includegraphics[width=0.33 \textwidth]{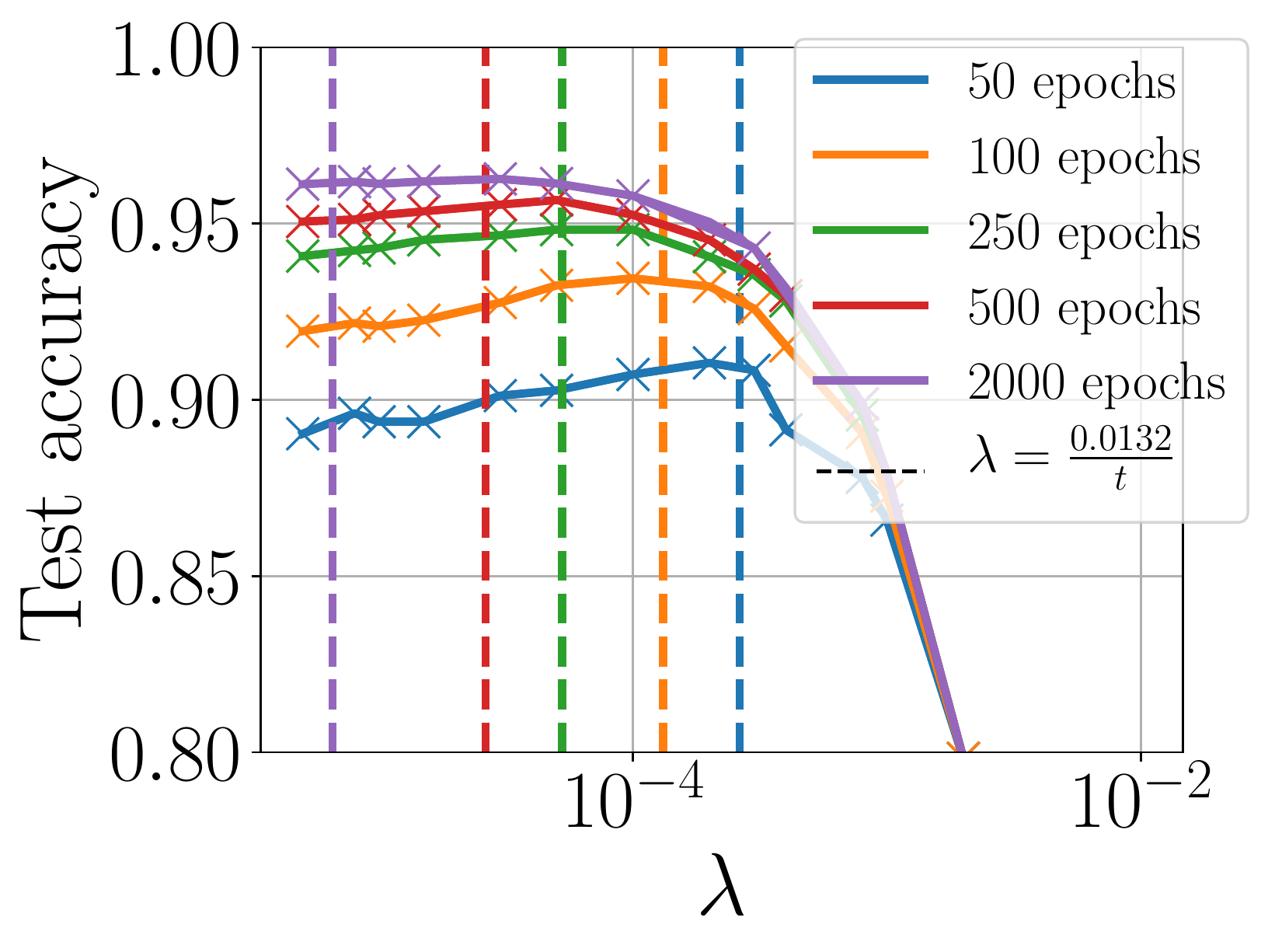} }
  \subfloat[]{\includegraphics[width=0.33\textwidth]{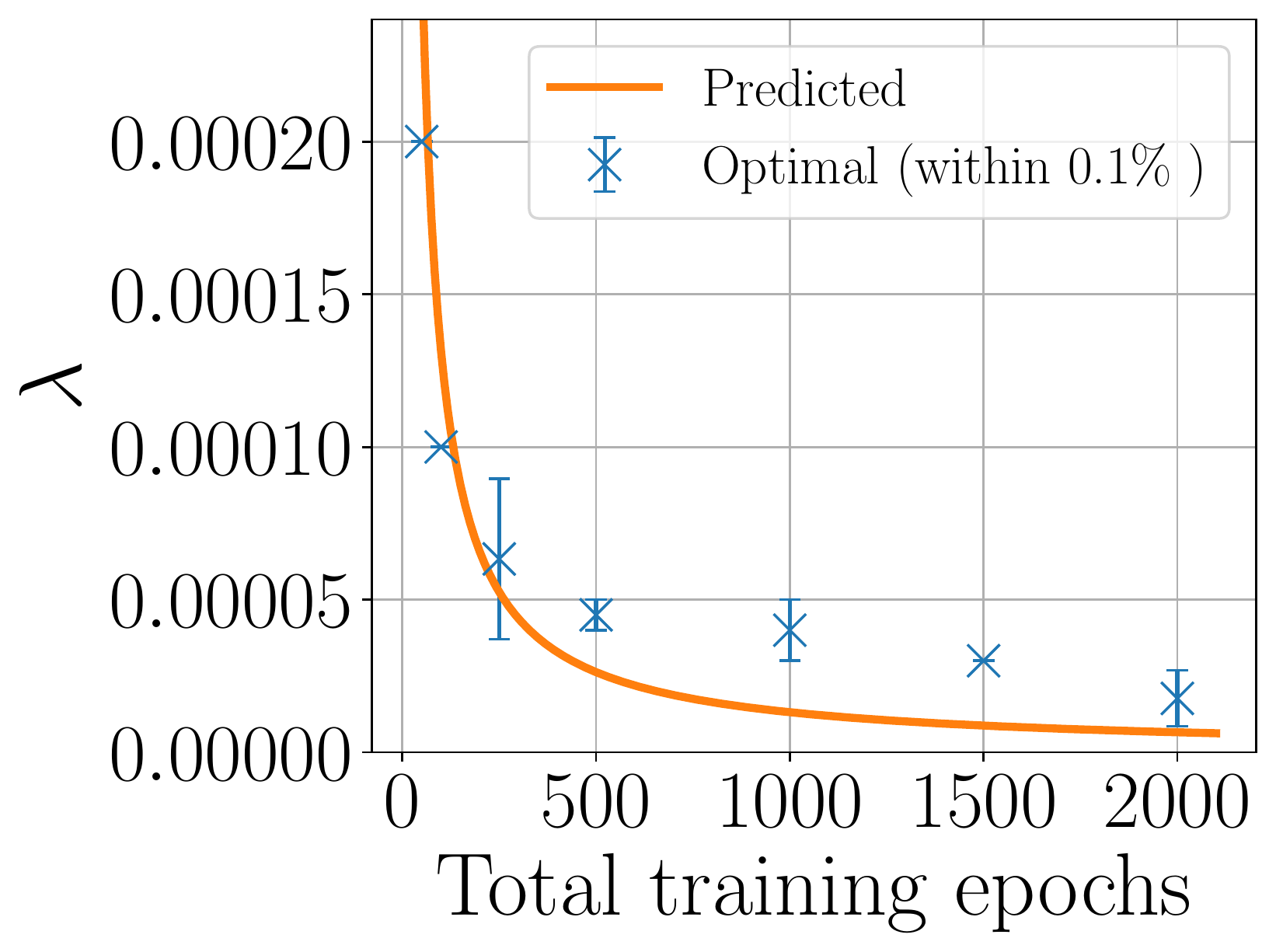}
 }
\caption{Same as previous figure with $c=2 c_{\text{large} \lambda}=0.0132$.
}
\label{fig:optimalL22c}
\end{figure}

\subsection{MSE and the catapult effect}
\label{sec:mse}
In \citet{lewkowycz2020large} it was argued that, in the absence of $L_2$ when training a network with SGD and MSE loss, high learning rates have a rather different final accuracy, due to the fact that at early times they undergo the "catapult effect". However, this seems to contradict with our story around \ref{sec:our_contr} where we argue that performance doesn't depend strongly on $\eta$. In figure \ref{fig:catapult}, we can see how, while when stopped at training accuracy $1$, performance depends strongly on the learning rate, this is no longer the case in the presence of $L_2$ if we evolve it for $t_{test}$. We also show how the training MSE loss has a minimum after which it increases.

\begin{figure}[]
\centering
  \subfloat[]
  {
    \includegraphics[width=0.33 \textwidth]{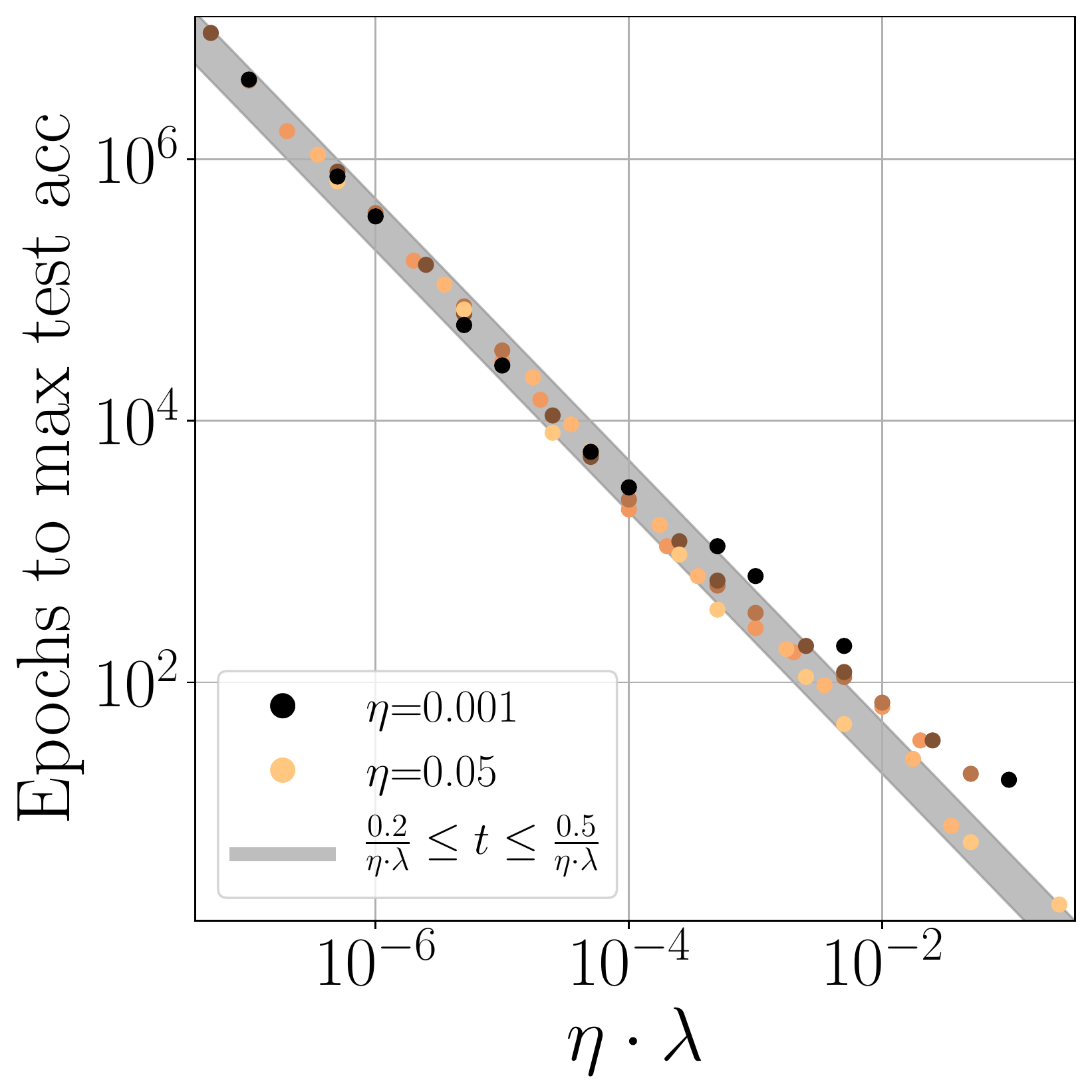}
     }
  \subfloat[]
  {
    \includegraphics[width=0.33 \textwidth]{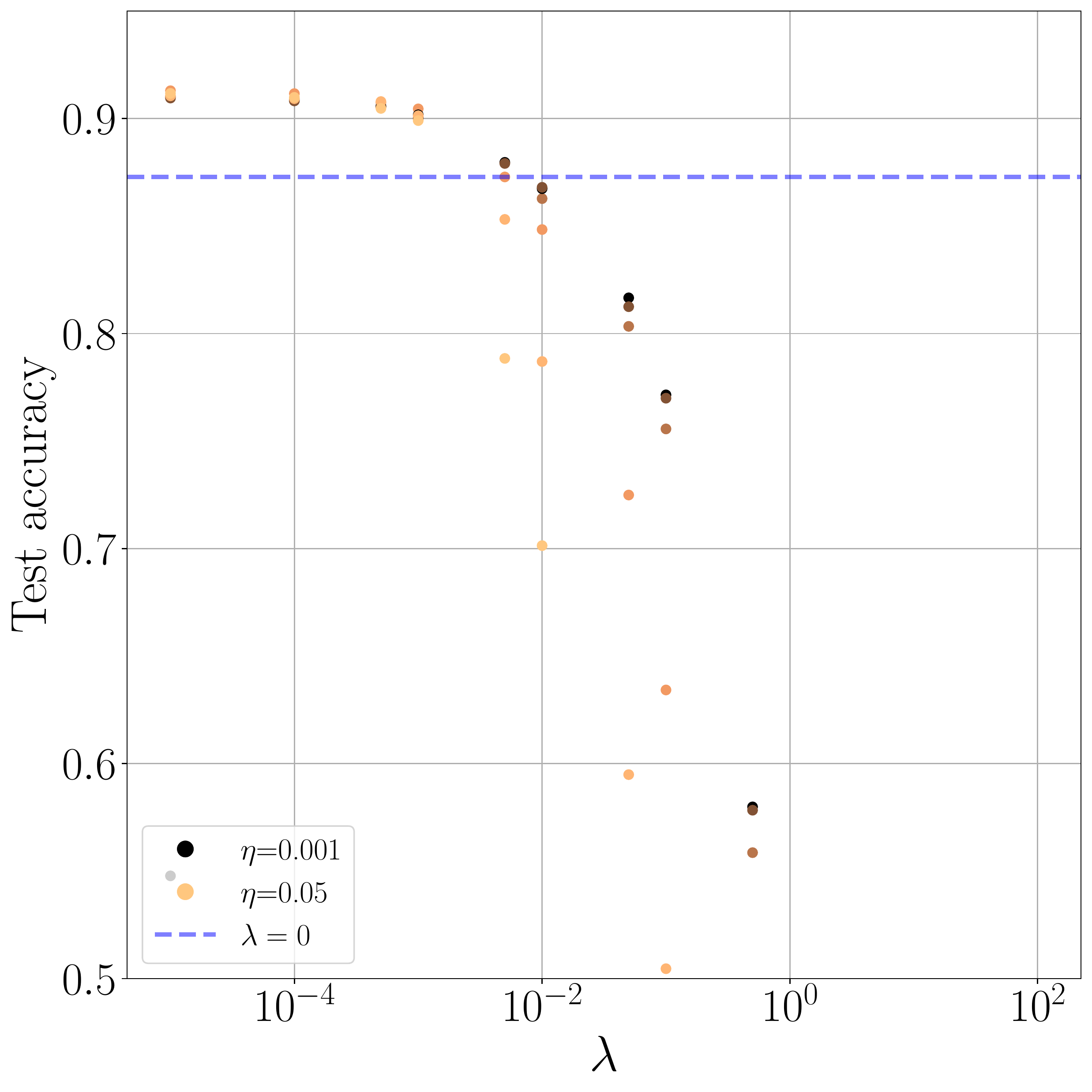}
     }
\\
   \subfloat[]
   {
    \includegraphics[width=0.33 \textwidth]{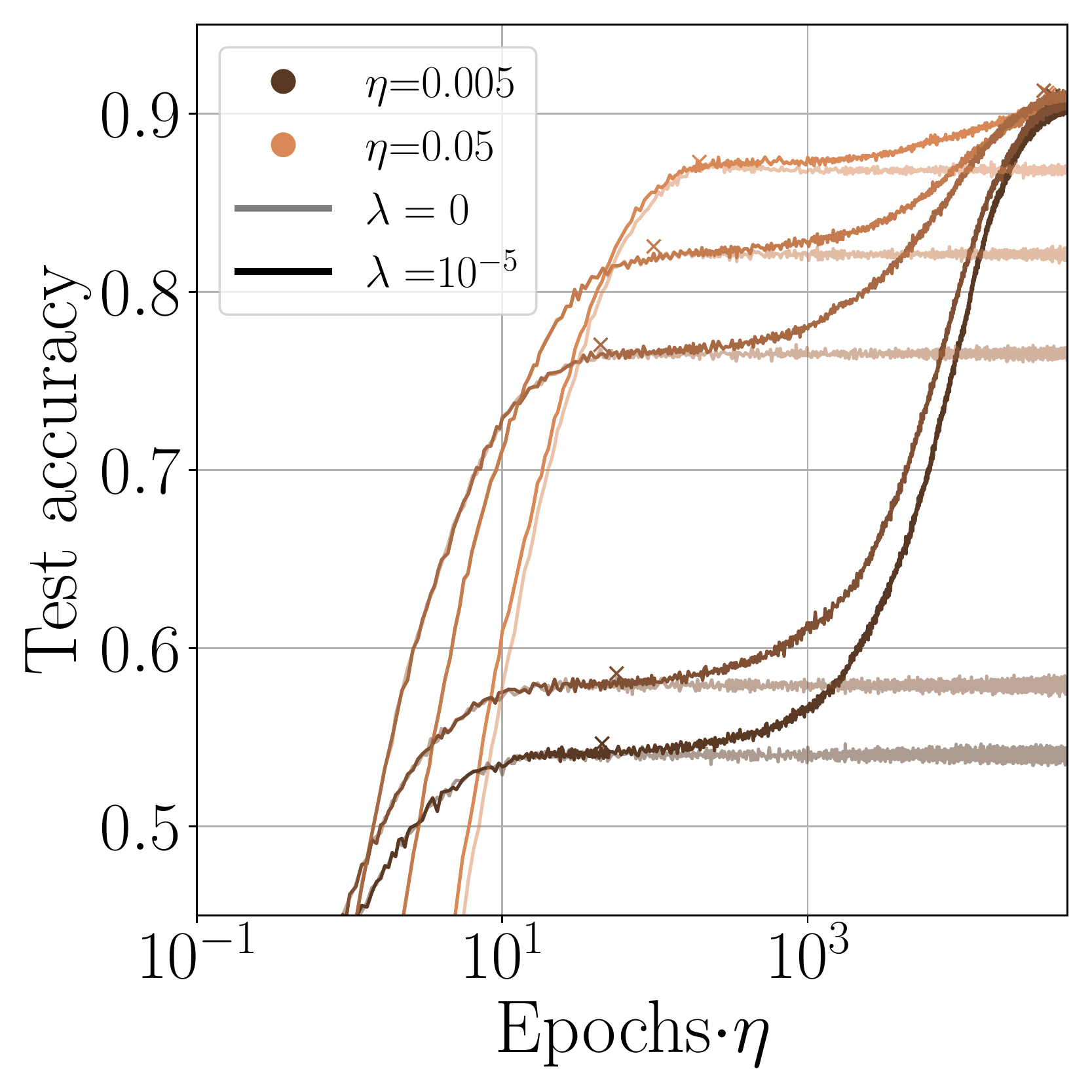}
     }
    \subfloat[]
   {
    \includegraphics[width=0.33 \textwidth]{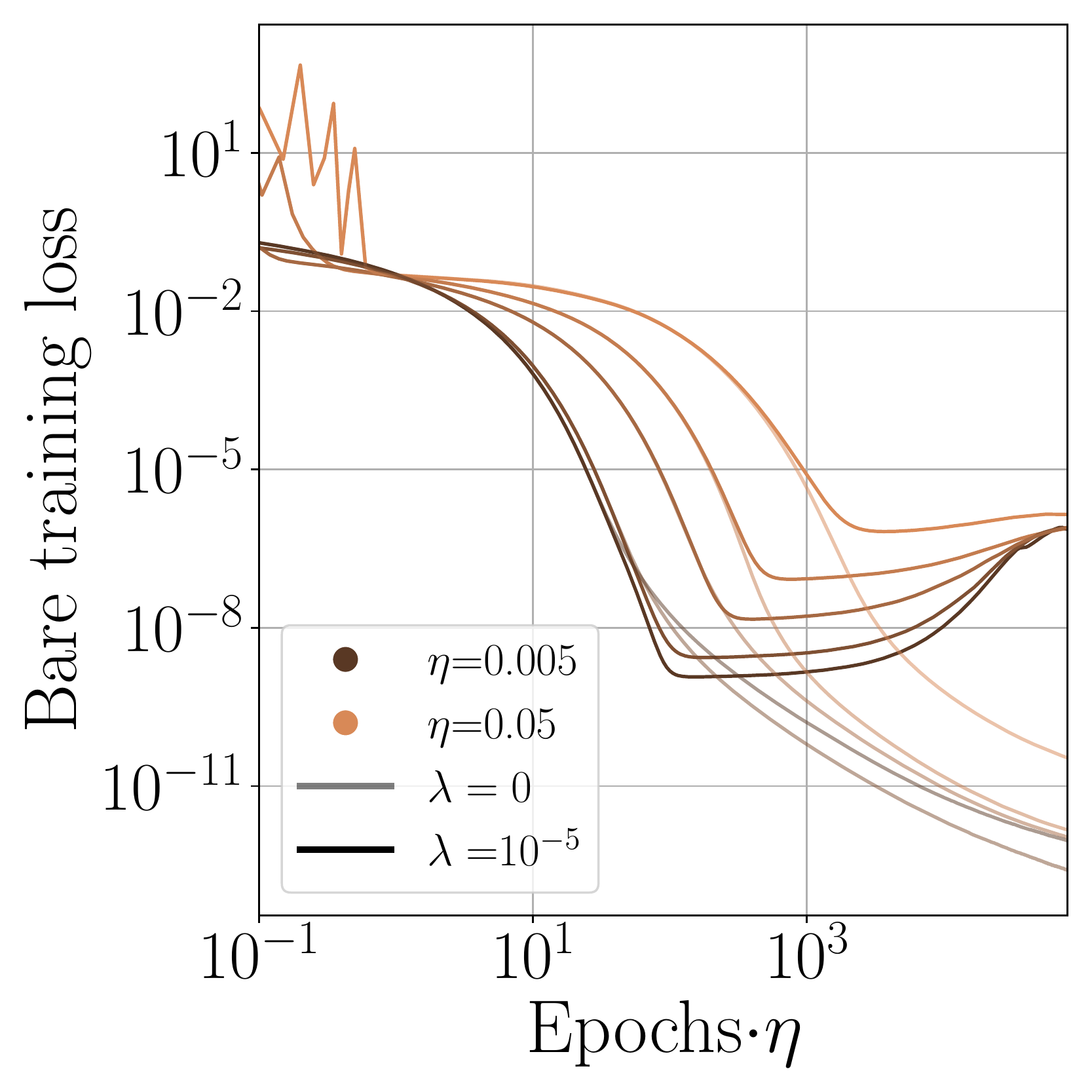}
     }
    \caption{MSE and catapult effect: we see how even if there is a strong dependence of the test accuracy on the learning rate when the training accuracy is $1$, this dependence flattens out when evolved until convergence in the presence of $\lambda$ . The specific values for the $\eta,\lambda$ sweeps are in \ref{sec:expdetails}.} 
    \label{fig:catapult}
\end{figure}

\clearpage
\newpage
\subsection{Dynamics of loss and accuracy}
In figure \ref{fig:SMdynamics} we illustrate the training curves of the experiments we have discussed in the main text and SM.

\begin{figure}[H]
\centering 
\subfloat[FC]{
  \includegraphics[width=0.29\textwidth]{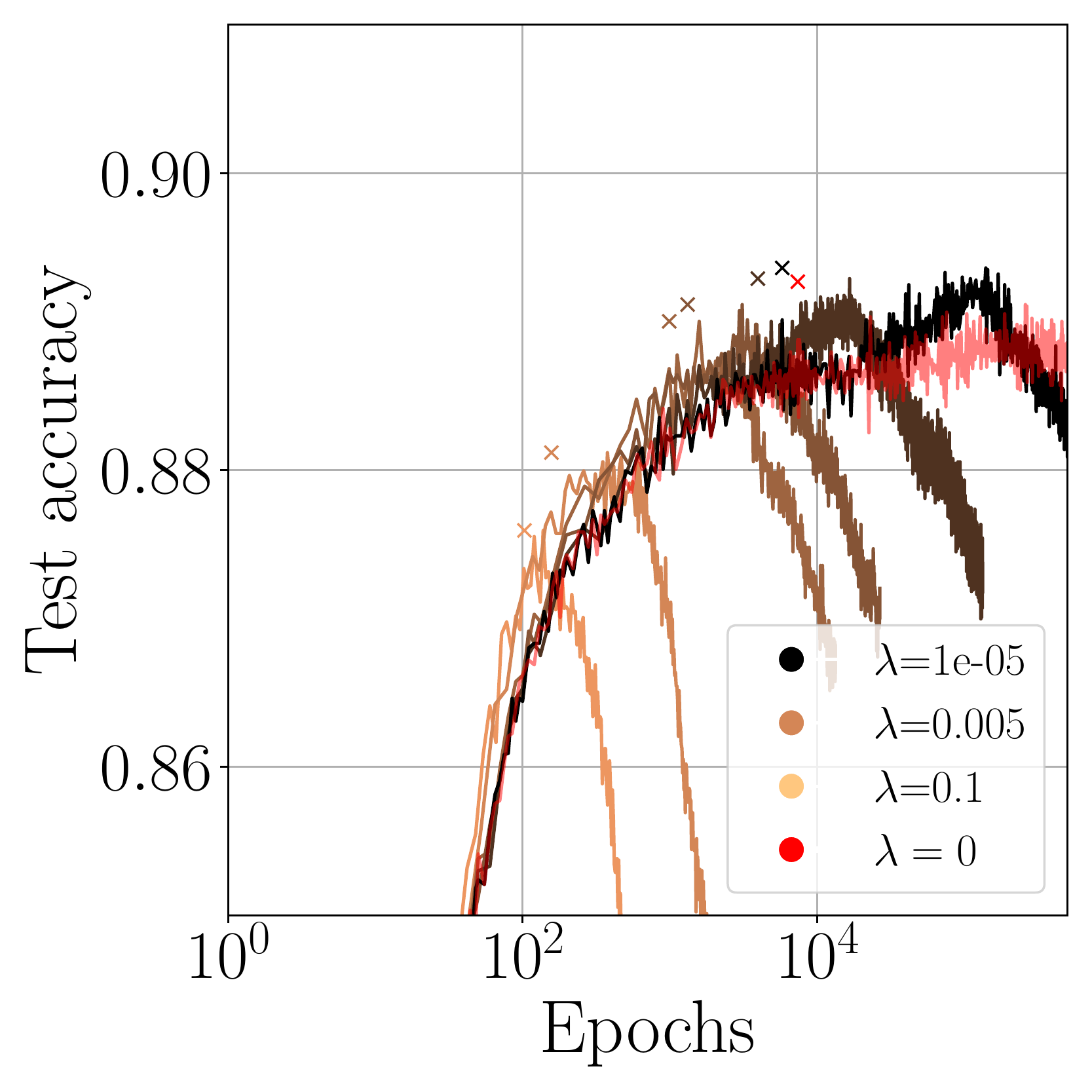}
  
} 
\subfloat[WRN]{
  \includegraphics[width=0.29\textwidth]{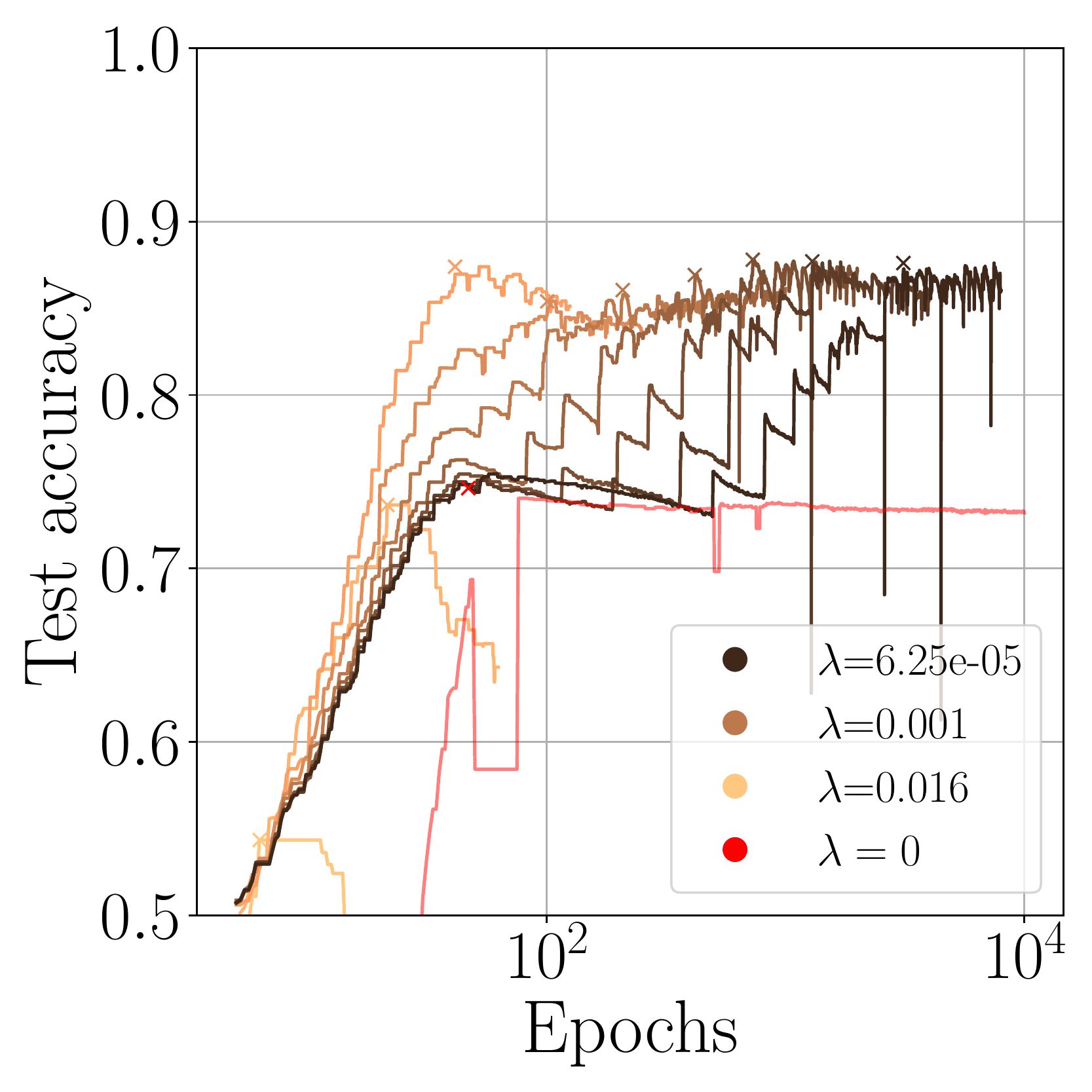}
  
} 
\subfloat[CNN no BN]{
  \includegraphics[width=0.29\textwidth]{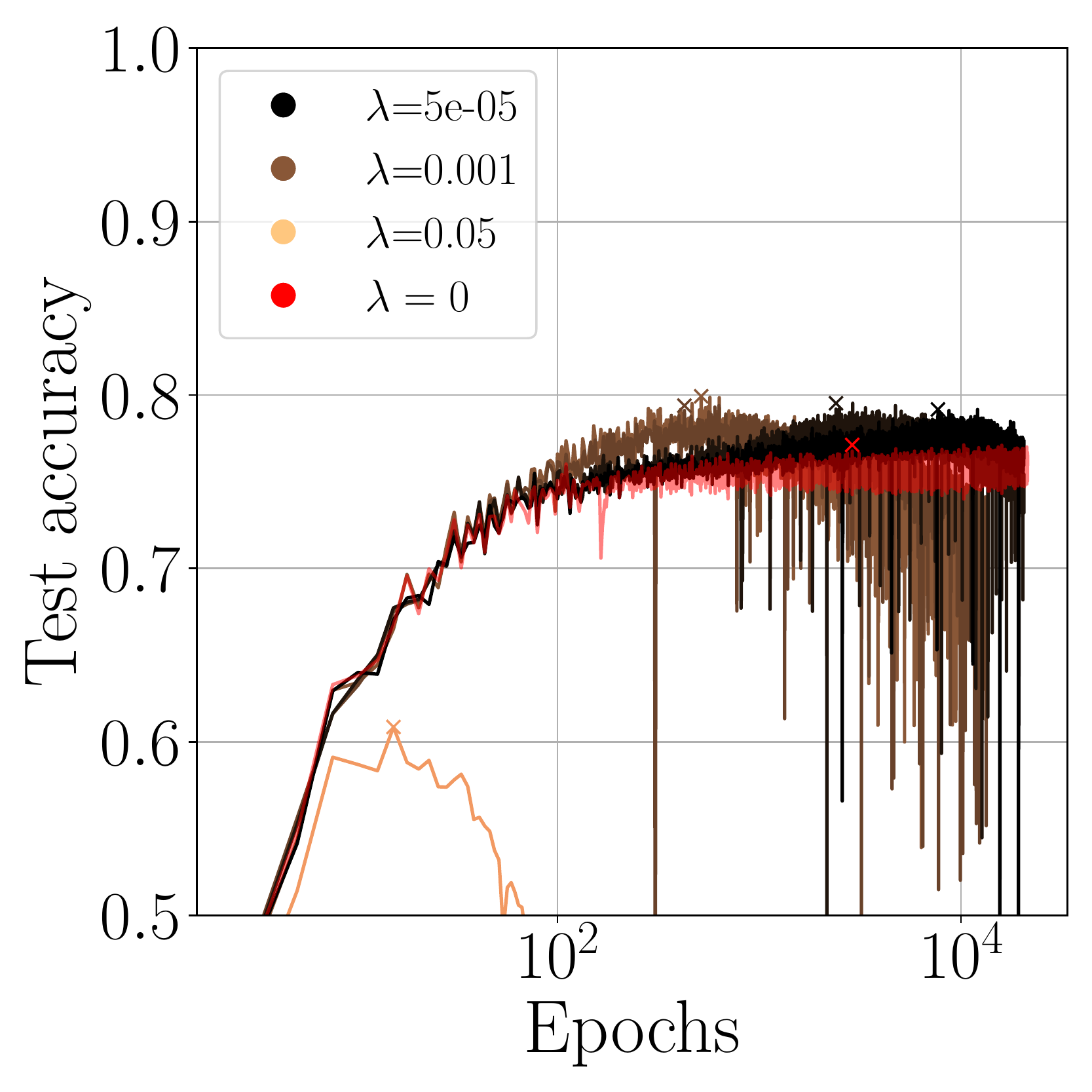}
} 
\newline
\subfloat[FC]{
  \includegraphics[width=0.29\textwidth]{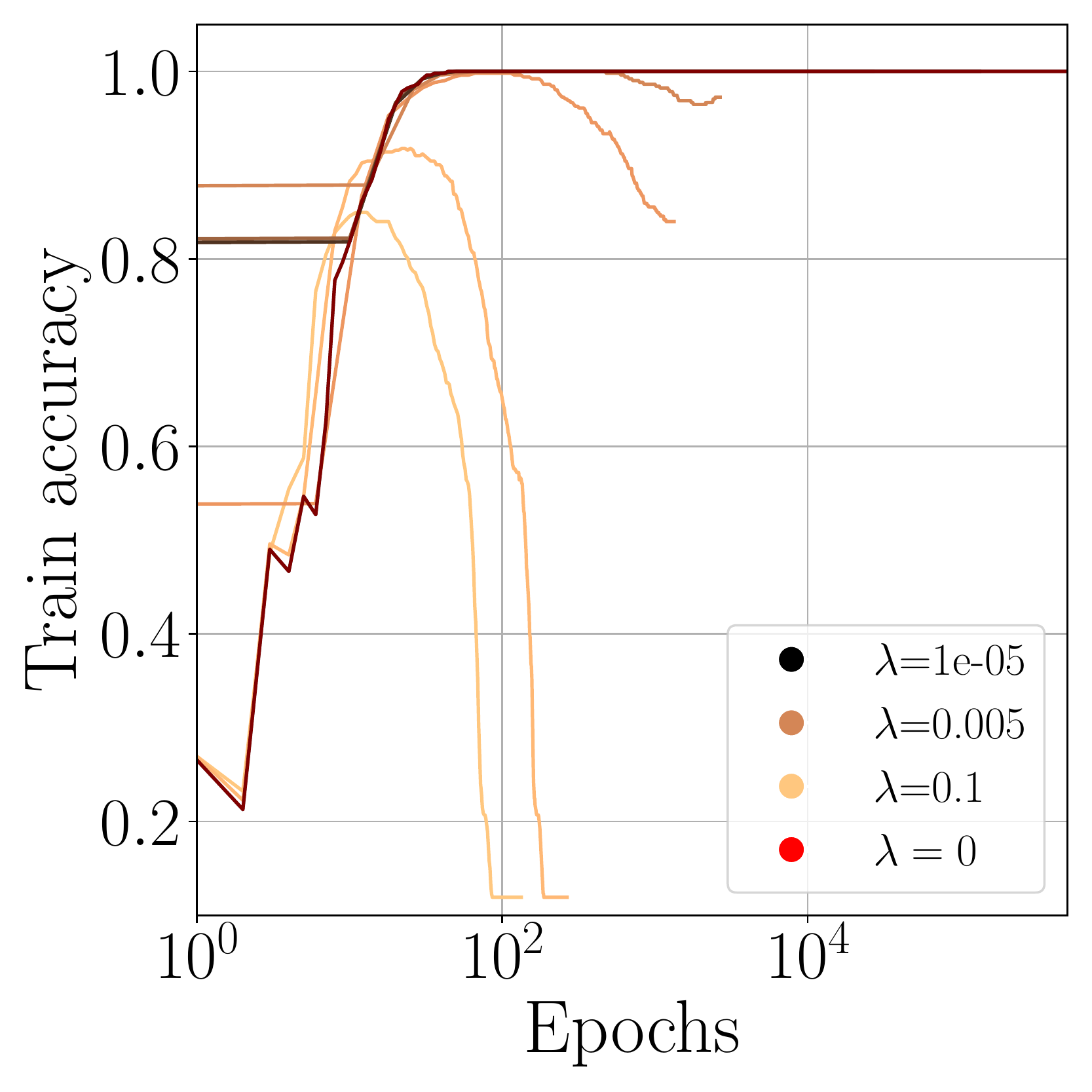}
  
} 
\subfloat[WRN]{
  \includegraphics[width=0.29\textwidth]{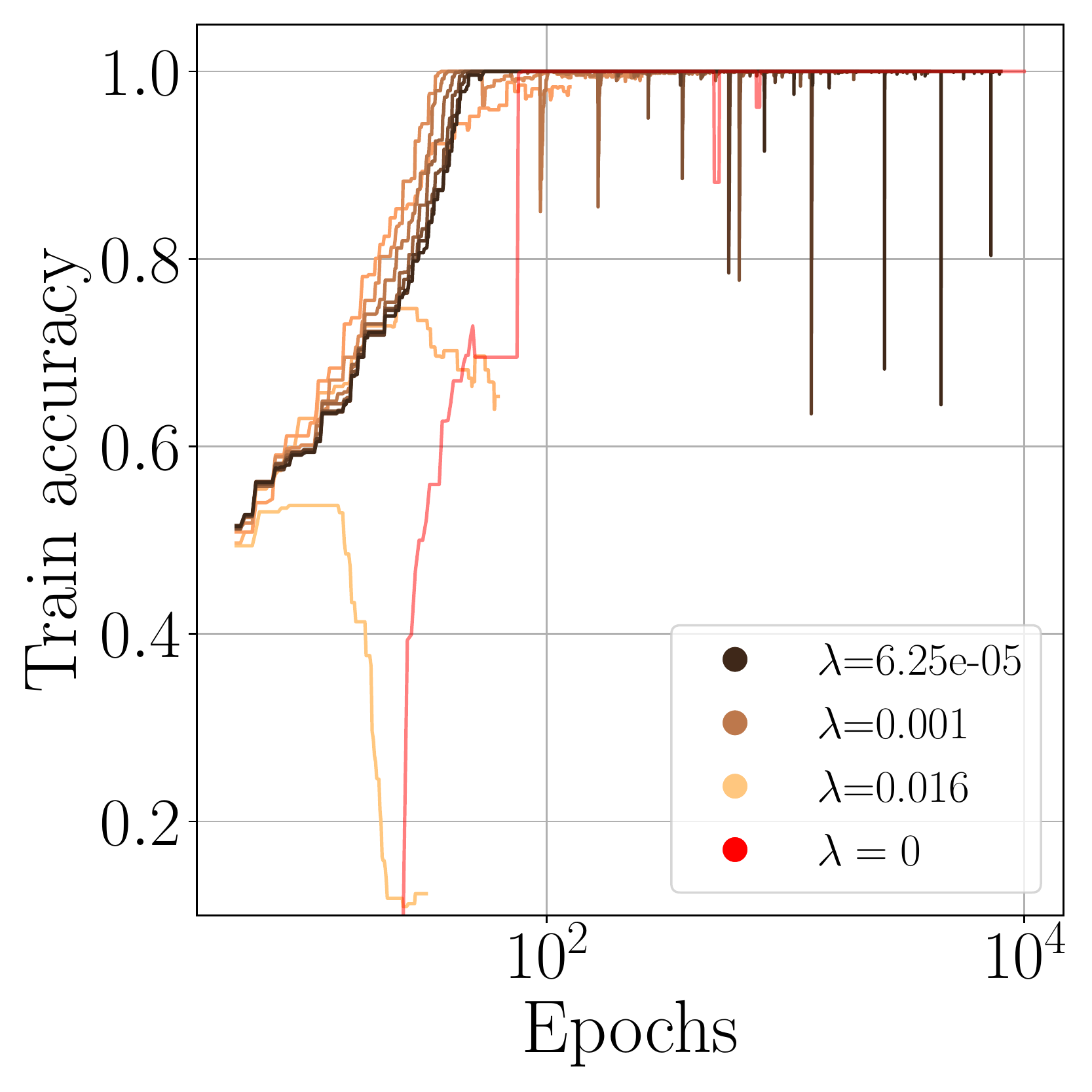}
  
} 
\subfloat[CNN no BN]{
  \includegraphics[width=0.29\textwidth]{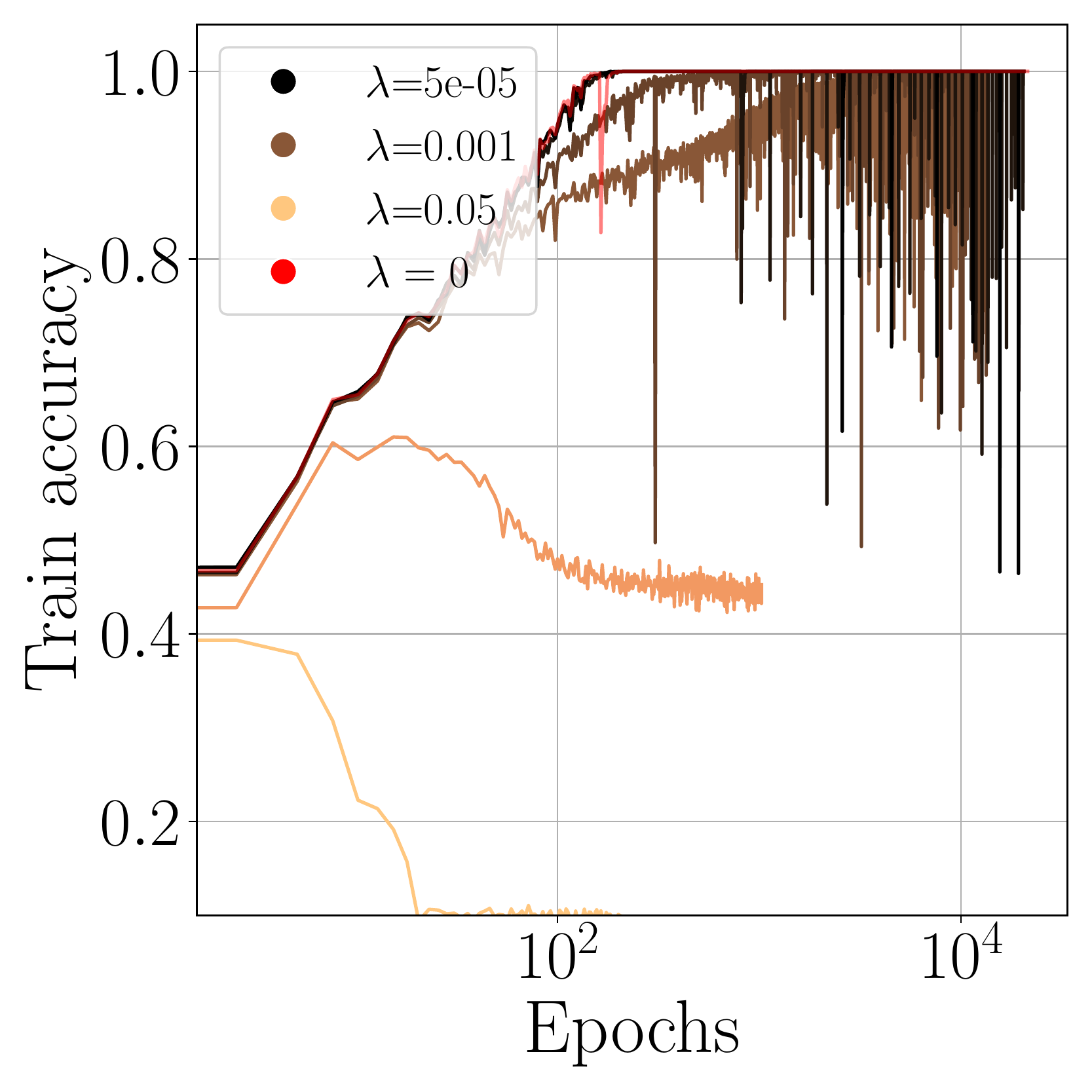}
} 
\newline
\subfloat[FC]{
  \includegraphics[width=0.29\textwidth]{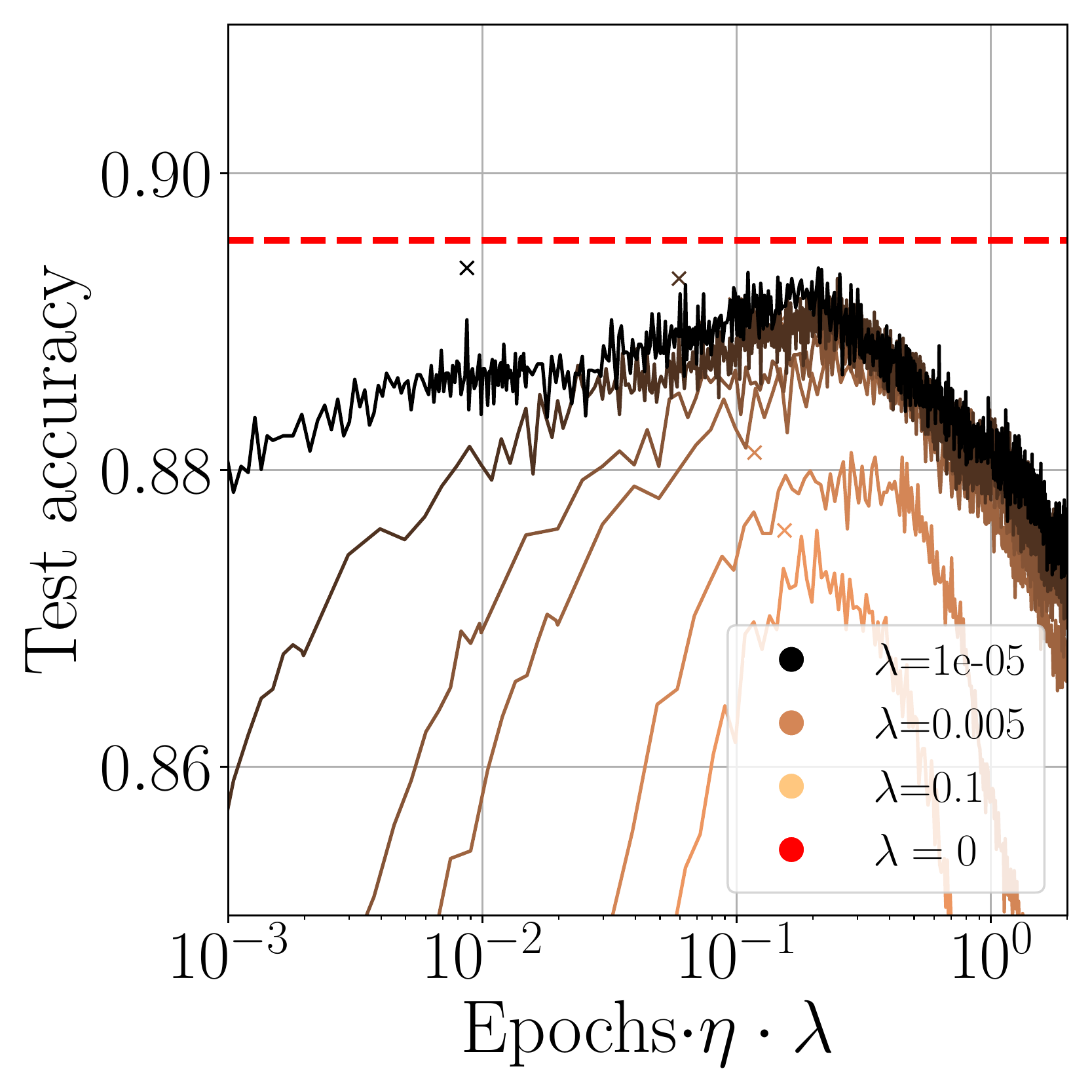}
  
} 
\subfloat[WRN]{
  \includegraphics[width=0.29\textwidth]{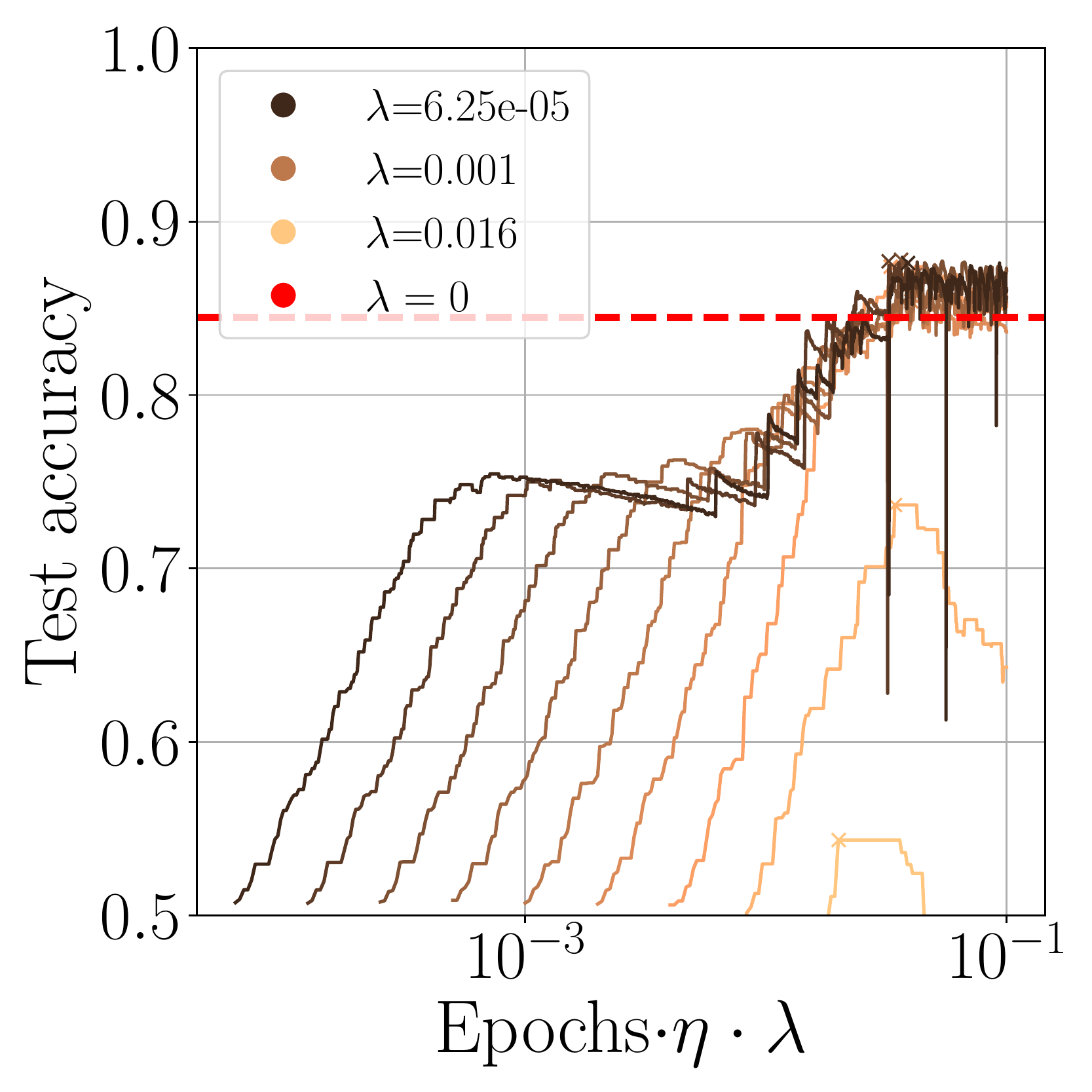}
  
} 
\subfloat[CNN no BN]{
  \includegraphics[width=0.29\textwidth]{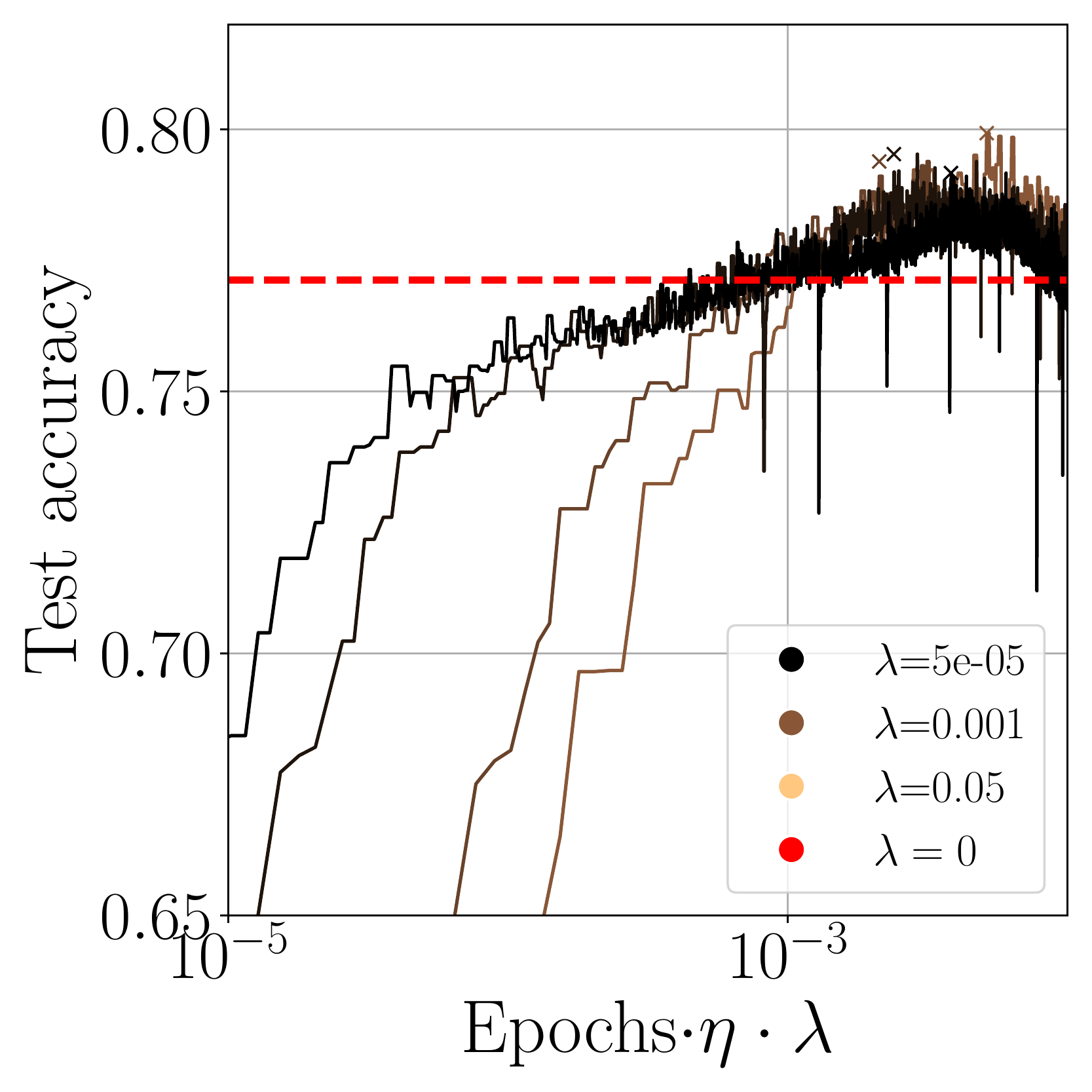}
} 
\newline
\subfloat[FC]{
  \includegraphics[width=0.29\textwidth]{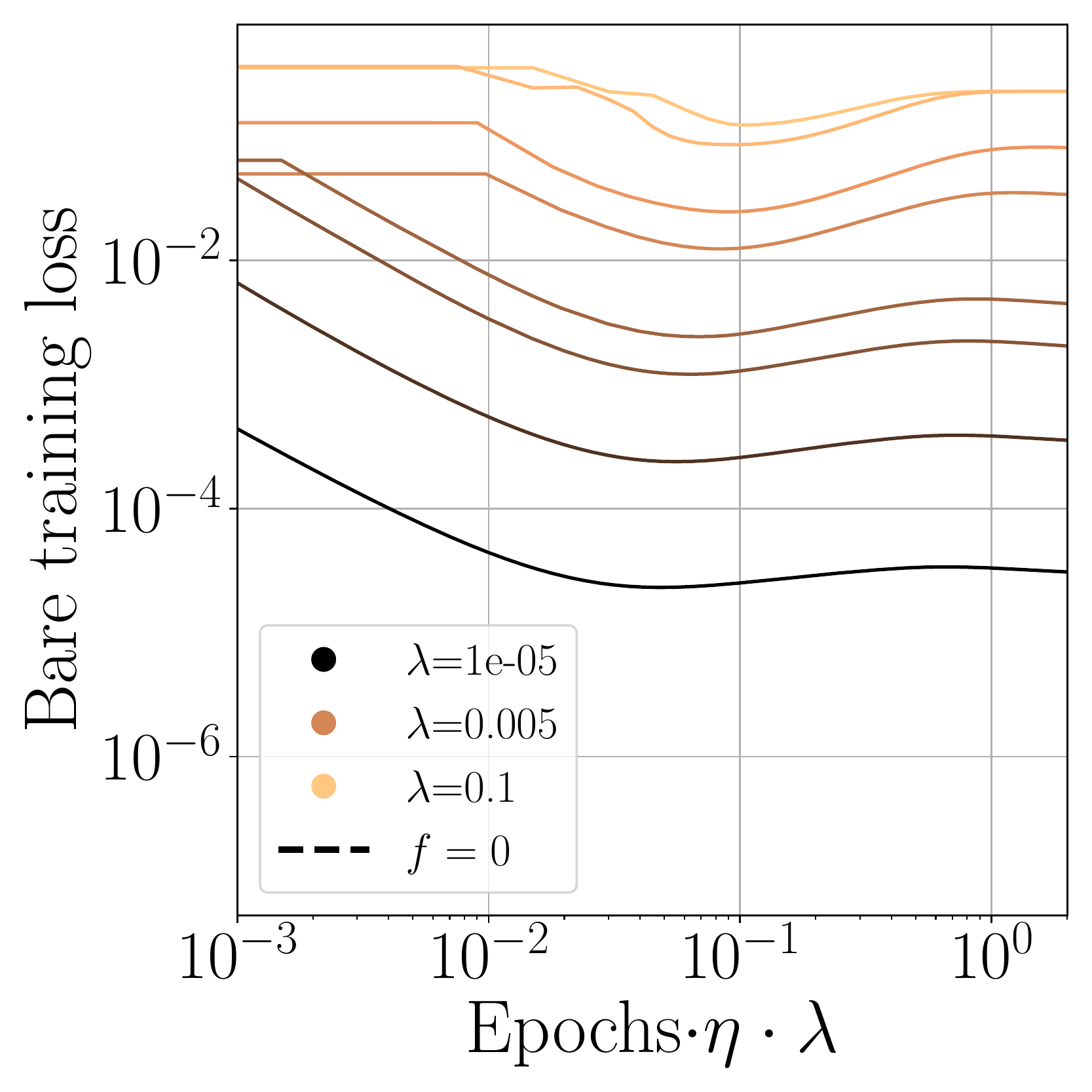}
  
} 
\subfloat[WRN]{
  \includegraphics[width=0.29\textwidth]{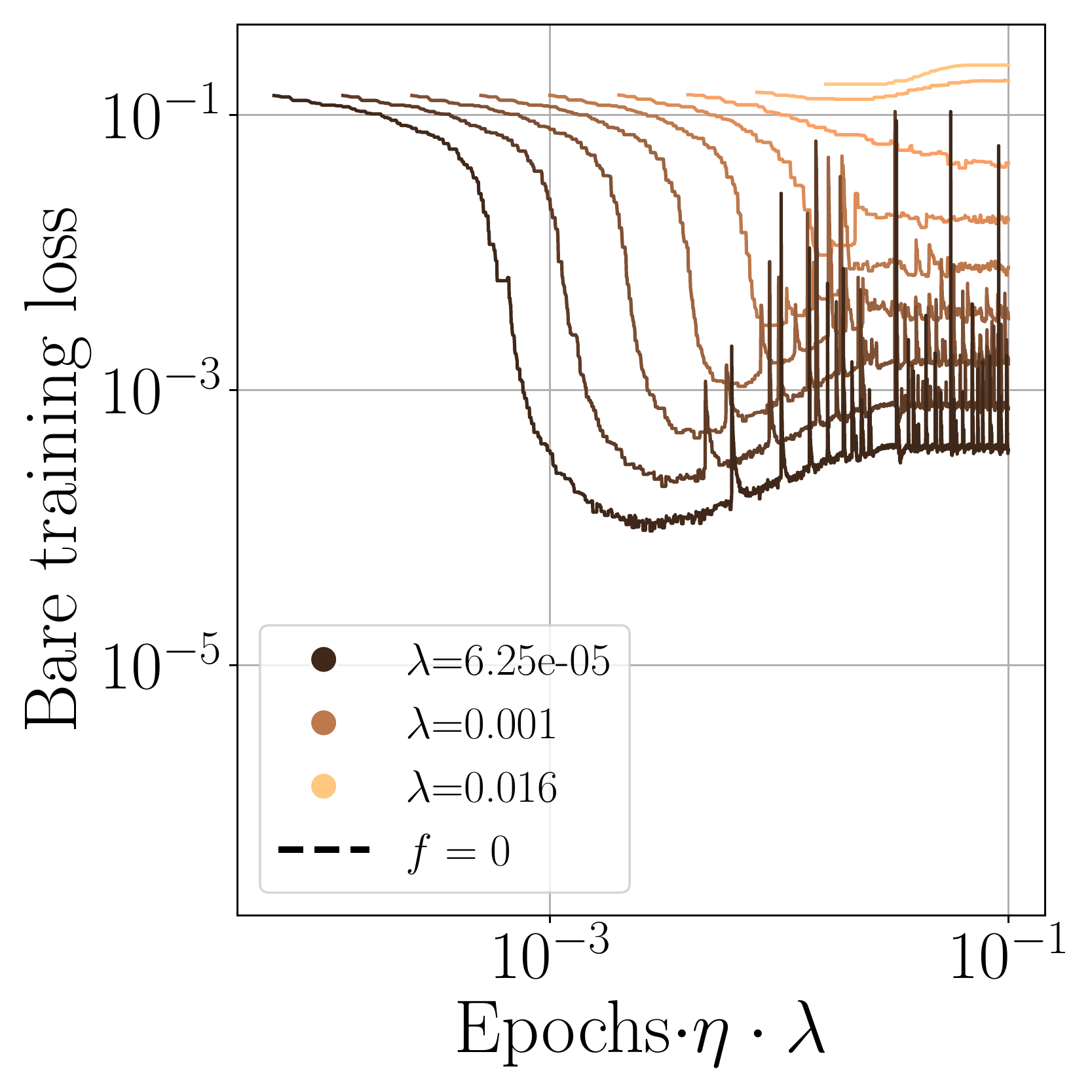}
  
} 
\subfloat[CNN no BN]{
  \includegraphics[width=0.29\textwidth]{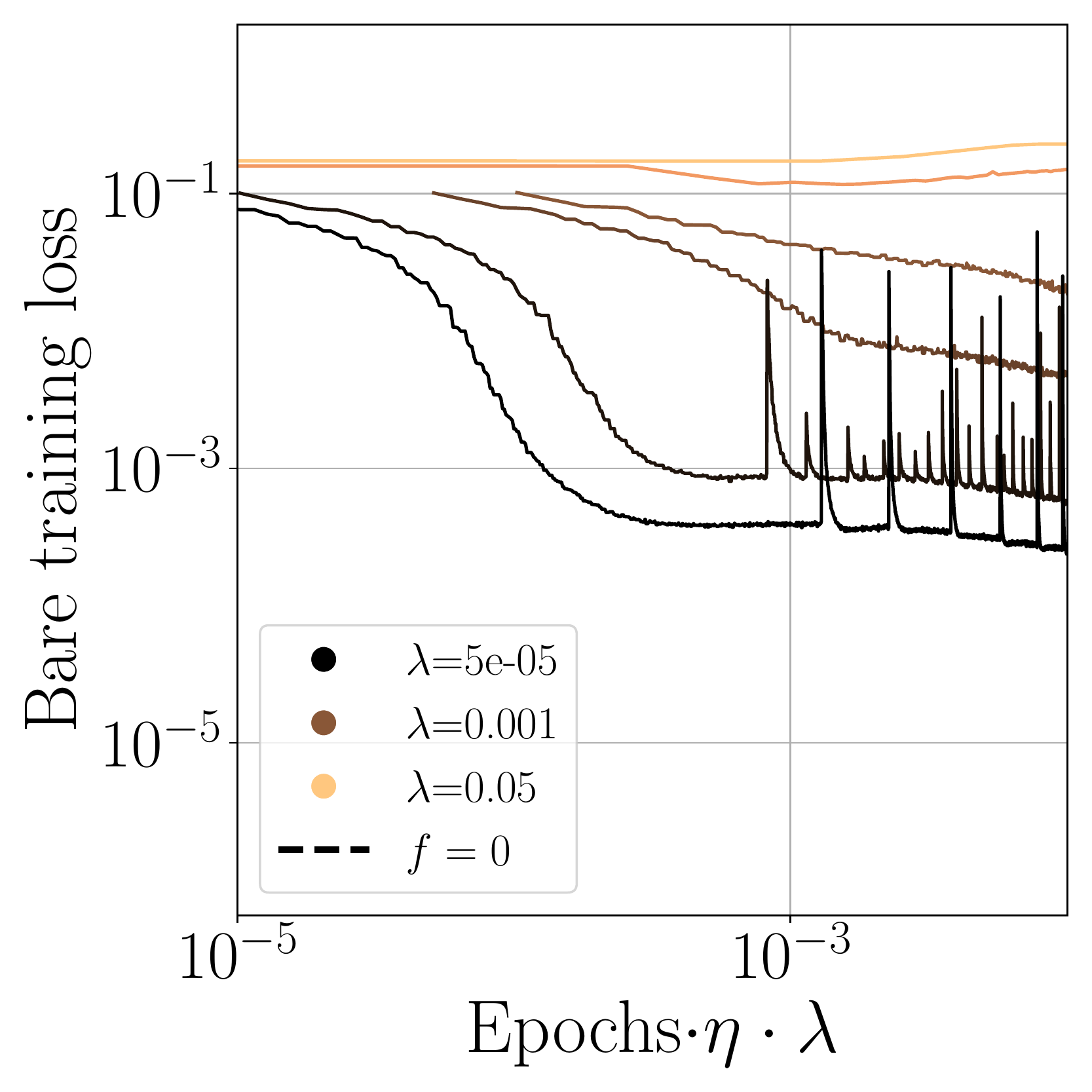}
} 
\caption{This shows the dynamics of experiments in figures \ref{fig:timedynamics}, \ref{fig:CNN}.The learning rates are $0.01$ for the CNN and $0.2$ for the WRN and $0.15$ for the FC.  Each legend has the min, max and median value of the logspace $\lambda$'s with its respective color.}
\label{fig:SMdynamics}

\end{figure} 
\section{Examples of setups where the scalings don't work}

We will consider a couple of setups which don't exhibit the behaviour described in the main text: a 3 hidden layer, width 64 fully-connected network trained on CIFAR-10 and ResNet-50 trained on ImageNet. We attribute this difference to deviations from the overparametrized/large width regime. 
In this situation, the optimal test accuracy with respect to $\lambda$ has a maximum at some $\lambda_{\rm opt} \not = 0$. 

For the FC experiment, the time it takes to reach this maximum accuracy scales like $1/\lambda$ for  $\lambda \gtrsim \lambda_{opt}$, but becomes constant (equal to the value for  $\lambda=0$) for $\lambda \lesssim \lambda_{\rm opt}$. This peak of the maximum test accuracy happens before the training accuracy reaches $1$. Generically, we don't observe that a network trained with cross-entropy and without regularization to have a peak in the test accuracy at a finite time. 

We do not have as clear an understanding of the ImageNet experimental results because they involve a learning rate schedule.
Performance for small $\lambda$s does not improve even if when evolving for a longer time. However, we do observe that performance is roughly constant when $\eta \cdot \lambda$ is held fixed.

\begin{figure}[ht!]
  \centering
    \subfloat[]{\includegraphics[width=0.33\textwidth]{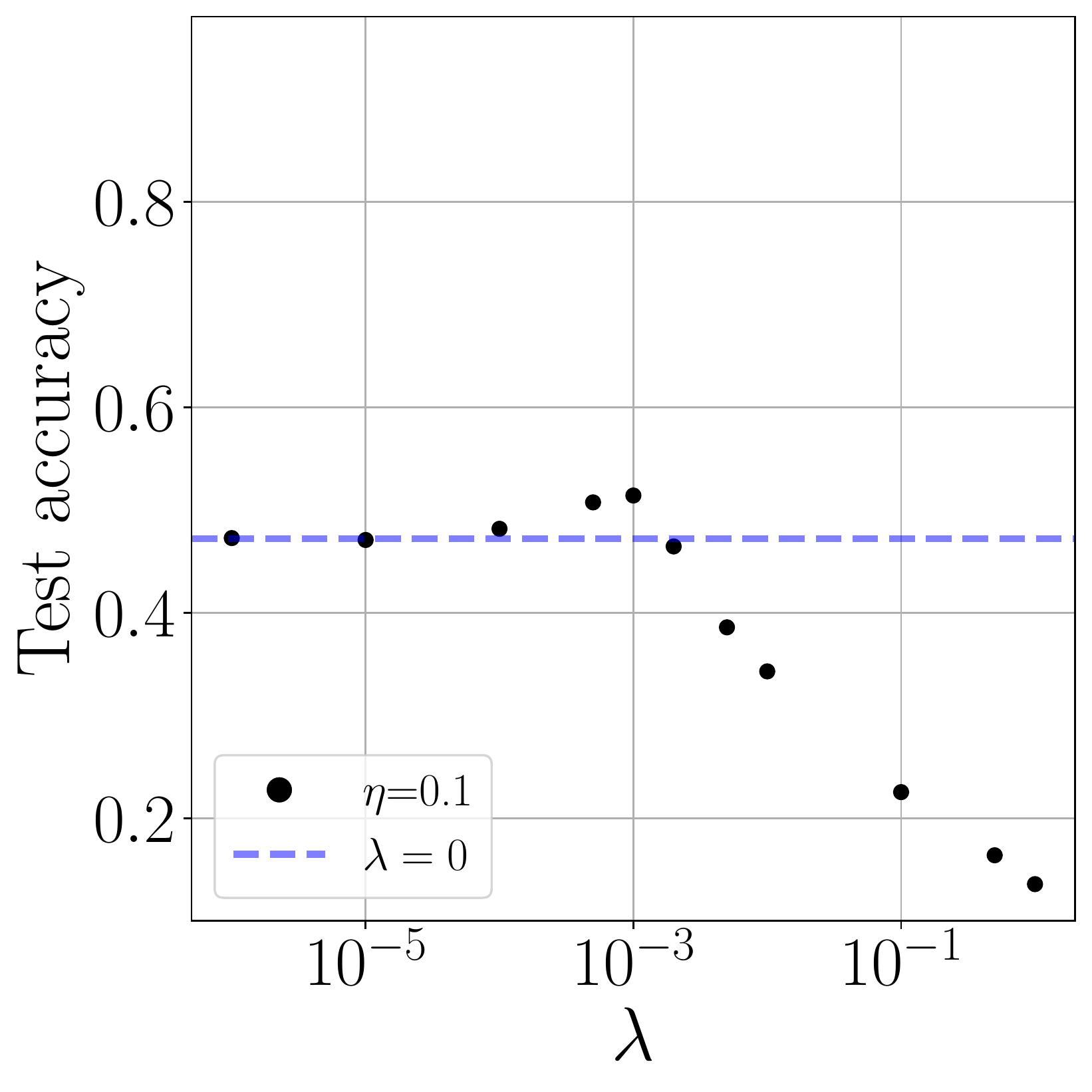}
    
}
  \subfloat[]{\includegraphics[width=0.33\textwidth]{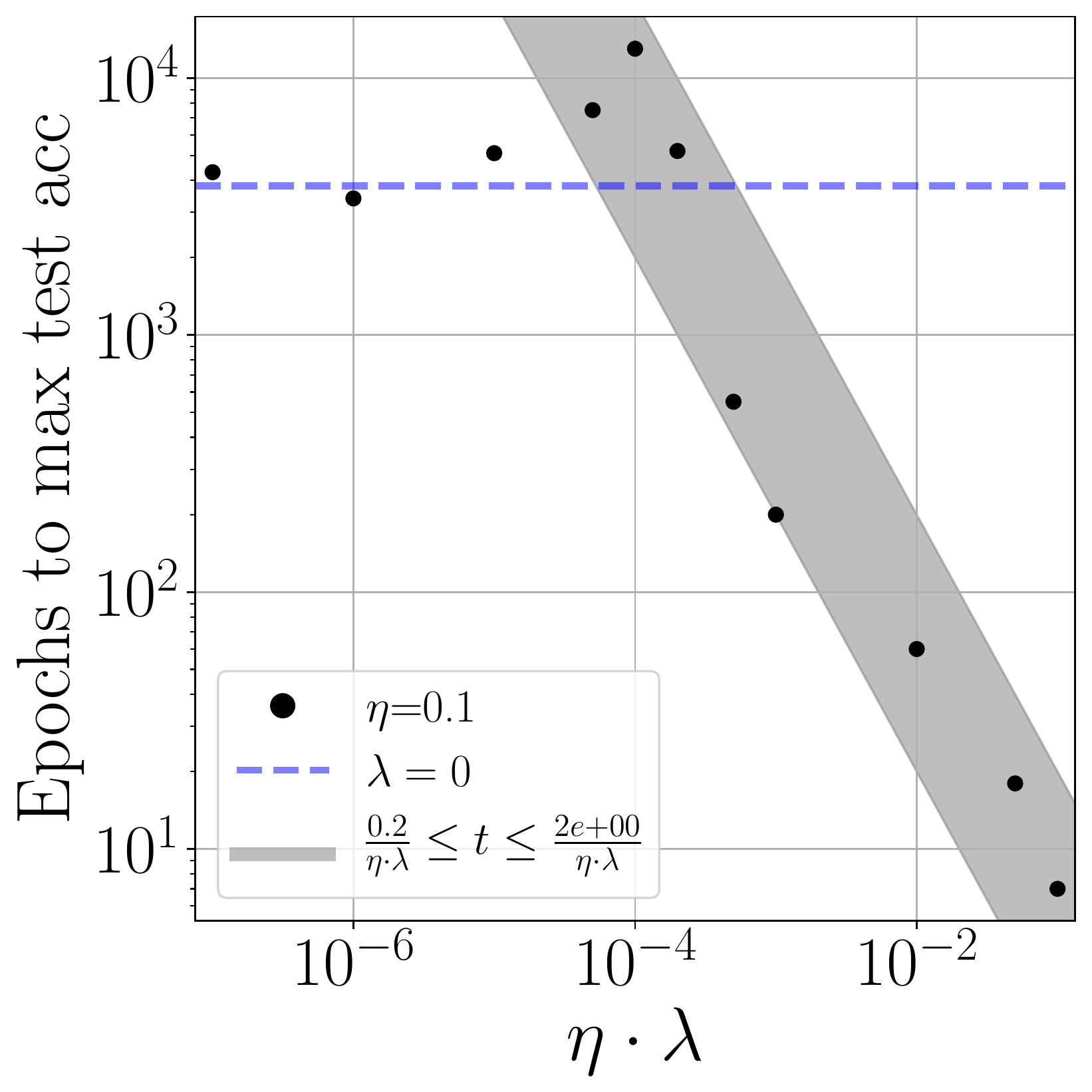} }
   \subfloat[]{\includegraphics[width=0.33\textwidth]{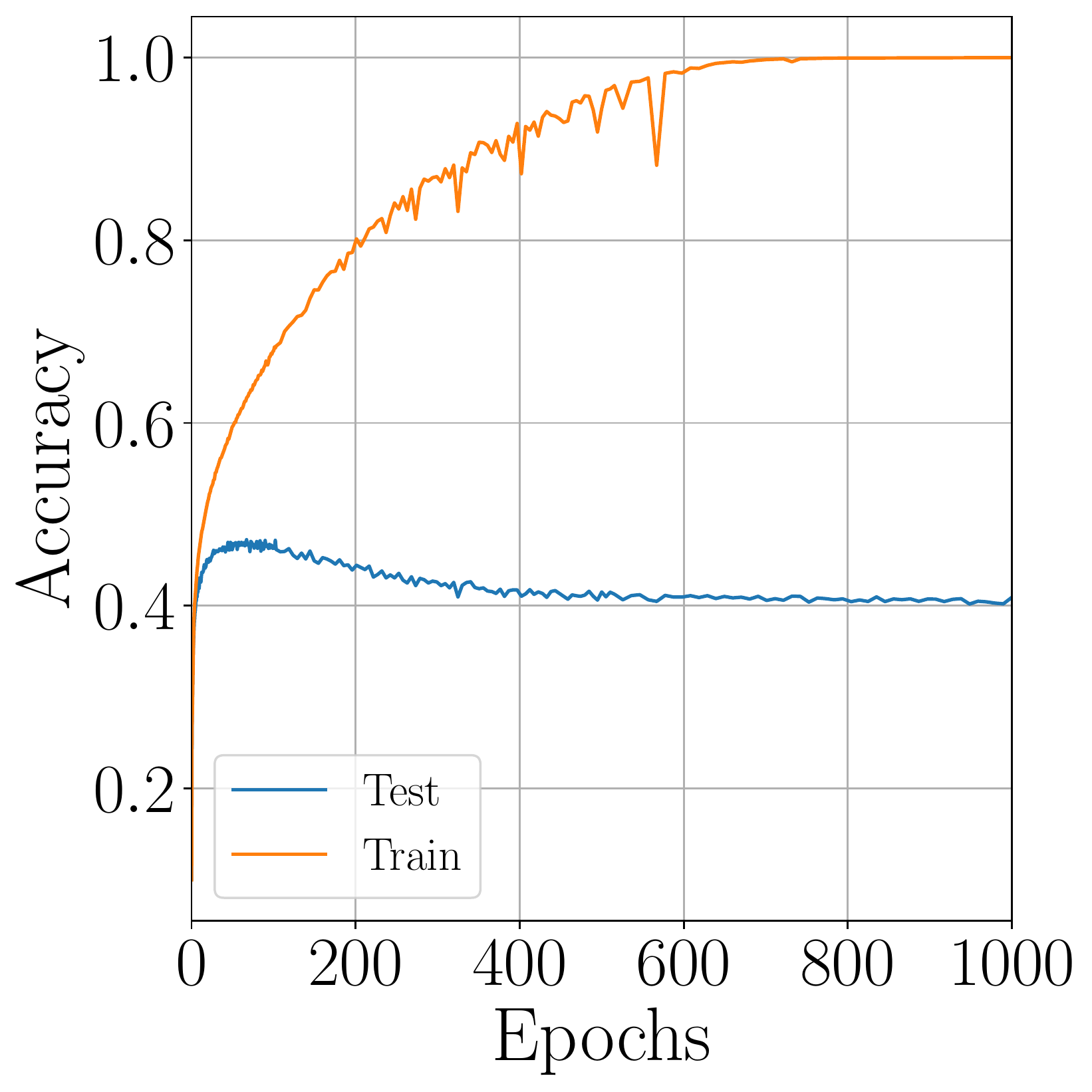} }
  \caption{Experiments with FC of width 64 and depth 3 trained on CIFAR-10.}
   \label{fig:cifarfc}

\end{figure}

\begin{figure}[ht!]
\centering
\subfloat[]{\includegraphics[width=0.5\textwidth]{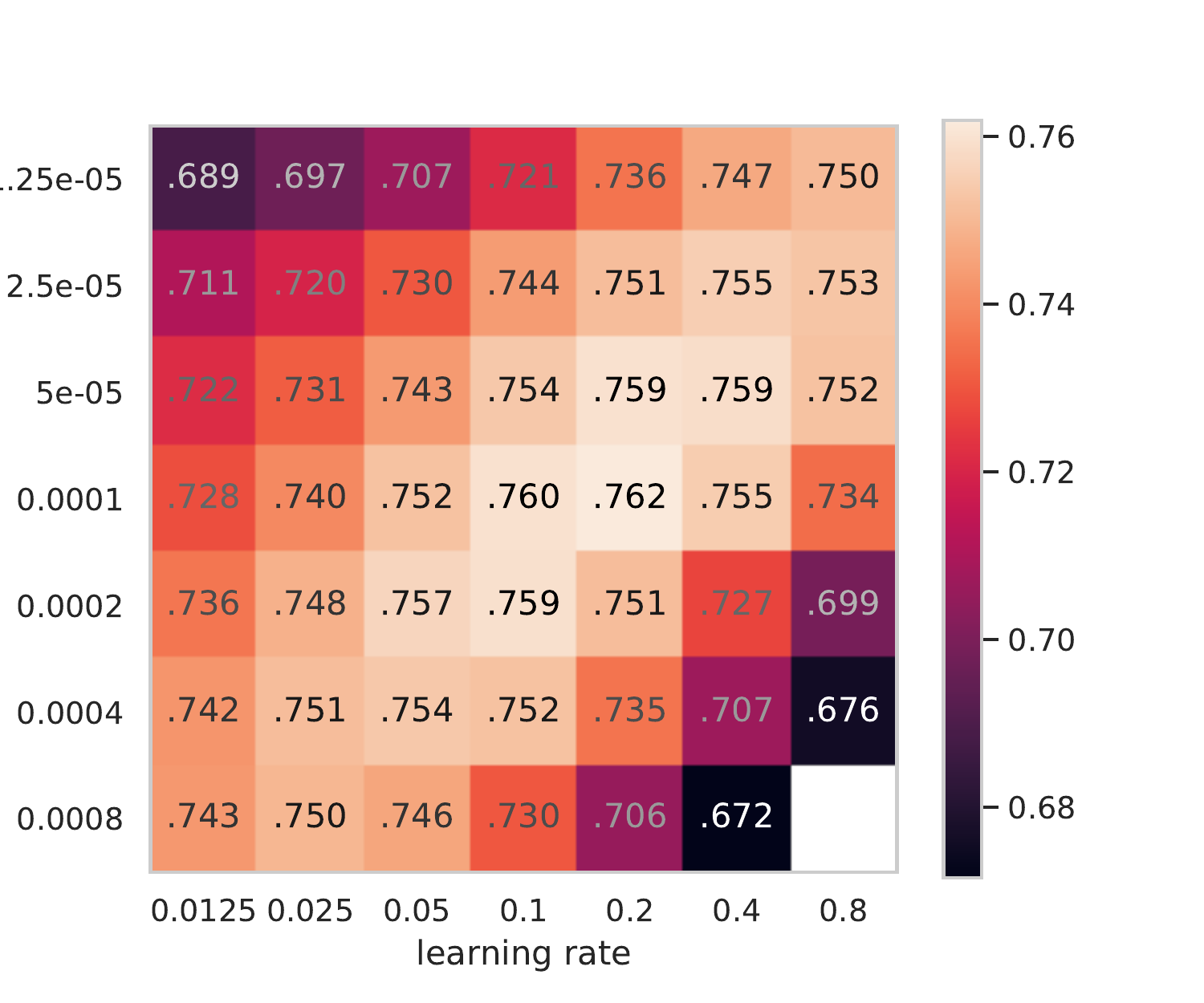} }
\subfloat[]{\includegraphics[width=0.5\textwidth]{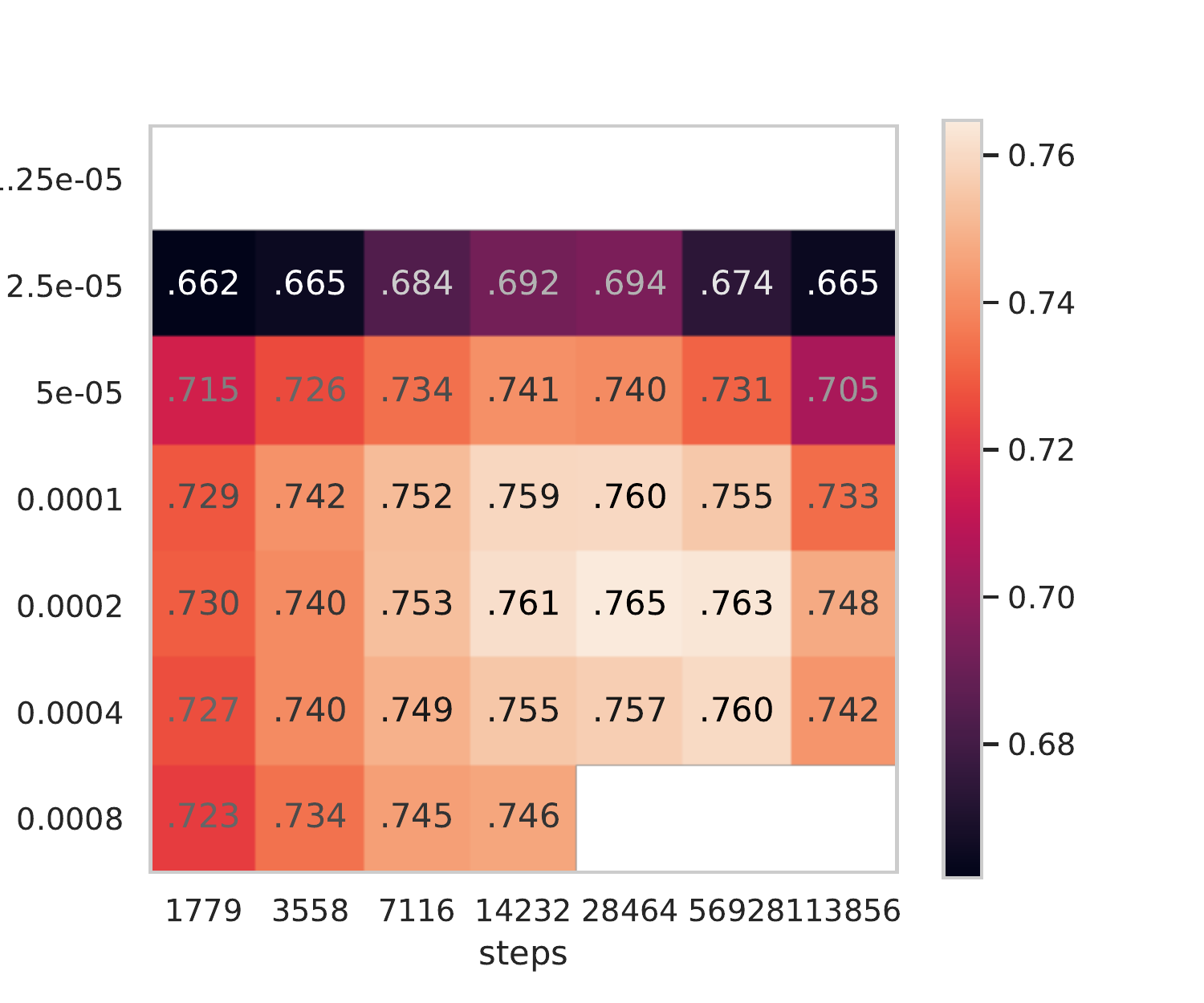} }
 \caption{ResNet-50 trained on ImageNet. (a) Evolved for 14,232 epochs for different $\eta,\lambda$. While changing $\eta$ or $\lambda$ independently has a strong effect of performance, we see that performance is rather similar along the diagonal. (b) Fixed $\eta=0.1$ and evolve for different $\lambda$ and number of epochs $T$ (rescaling the learning rate schedule to $T$). In constrast to the overparameterized case, we see that one cannot reach the same performance with a smaller $\lambda$ by increasing $T$.}
 \label{fig:imagenet}
\end{figure}
\section{More on \AutoLtwo}
\label{sec:AutoL2}
The algorithm is the following:

\begin{algorithm}[]

 \textsc{minloss},\textsc{minerror}=$\infty,\infty$
 
 \textsc{min\_step},L2=0,0.1
 
 \For{t in steps}{
  \textsc{update\_weights\;}
  \If{$t ~ \text{mod} ~ k = 0$}{
  \textsc{make\_measurements}\;
  \eIf{\textsc{error\_or\_loss\_increases and} t \textsc{> min\_step} }{
   \textsc{L2}=\textsc{L2}/10\;
   \textsc{min\_step}=$0.1/\textsc{L2}+t$\;
   }{
   \textsc{minloss},\textsc{minerror}=min($\textsc{loss}_{t-k}$,\textsc{minloss}),min($\textsc{error}_{t-k}$,\textsc{minerror})\;
  }
 }}
  \SetKwFunction{FMain}{\textsc{error\_or\_loss\_increases }}
  \SetKwProg{Fn}{Function}{:}{}
  \Fn{\FMain{\textsc{loss},\textsc{error},\textsc{minloss},\textsc{minerror}}}
  {
        \If{$\textsc{loss}_t$>\textsc{minloss} \textsc{and} $\textsc{loss}_{t-k}$>\textsc{minloss}} {
\KwRet True\;}
        \If{$\textsc{error}_t$>\textsc{error} \textsc{and} $\textsc{error}_{t-k}$>\textsc{error}} {
\KwRet True\;}

        \KwRet False\;
  }
 \caption{\textsc{AutoL2}}
\end{algorithm}

We require the loss/error to be bigger than its minimum value two measurements in a row (we make measurements every 5 steps), we do this to make sure that this increase is not due to a fluctuation. After decaying, we force $\lambda$ to stay constant for a time $0.1/\lambda$ steps, we choose the refractory period to scale with $1/\lambda$ because this is the physical scale of the system.

To complement the \AutoLtwo~discussion of section \ref{sec:L2sch} we have done another experiment where the learning rate is decayed using the schedule described in \ref{sec:lrsch}. Here we see how while \AutoLtwo~trains faster in the beginning, the optimal $\lambda=0.0005$ outperforms it. We have not hyperparameter tuned the possible parameters of \AutoLtwo. 

\begin{figure}[ht!]
  \centering
    \subfloat[]{\includegraphics[width=0.33\textwidth]{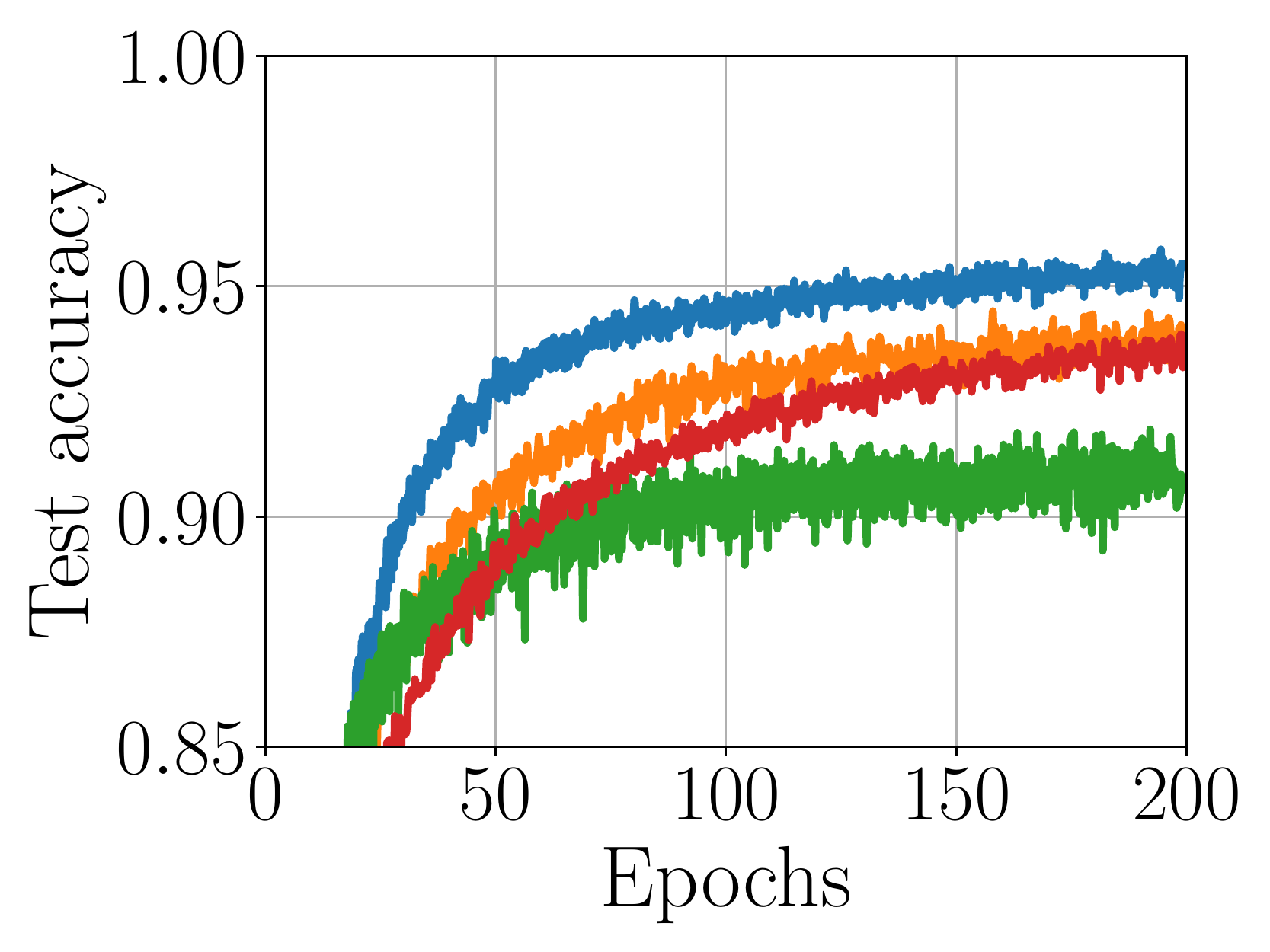}
  }
  \subfloat[]{\includegraphics[width=0.33\textwidth]{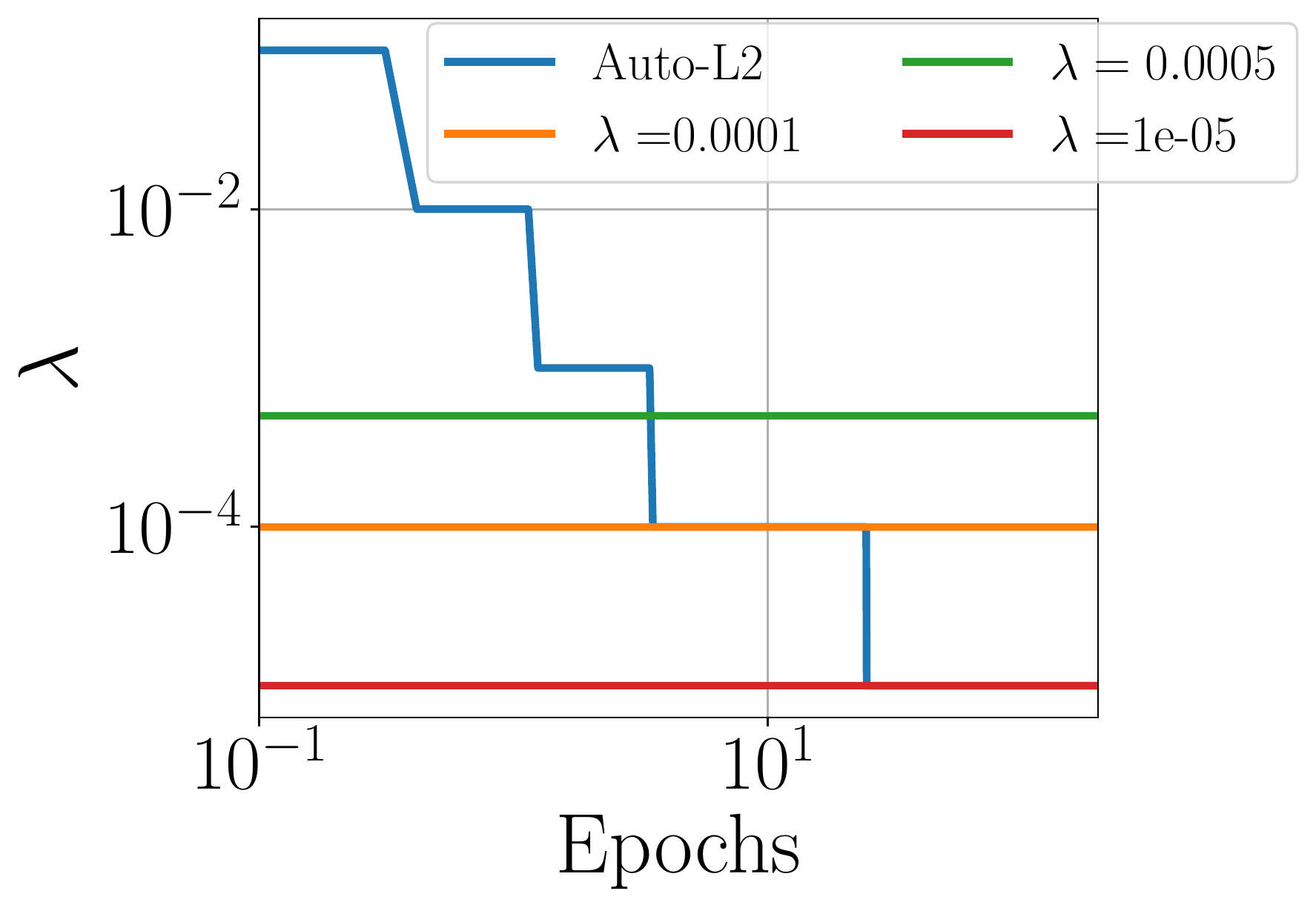}

}

\caption{Here we have a WRN trained with momentum and data augmentation for $200$ epochs. We compare the \AutoLtwo~ with different fixed $L_2$ parameters and we see how it trains faster and gets better accuracy. }
\label{fig:L2schedule}
\end{figure}

We have also applied \AutoLtwo~ to other setups, and in the absence of learning rate schedules beats the optimal $L_2$ parameter. See figure \ref{fig:L2other}.

\begin{figure}[ht!]
  \centering
    \subfloat[]{\includegraphics[width=0.33\textwidth]{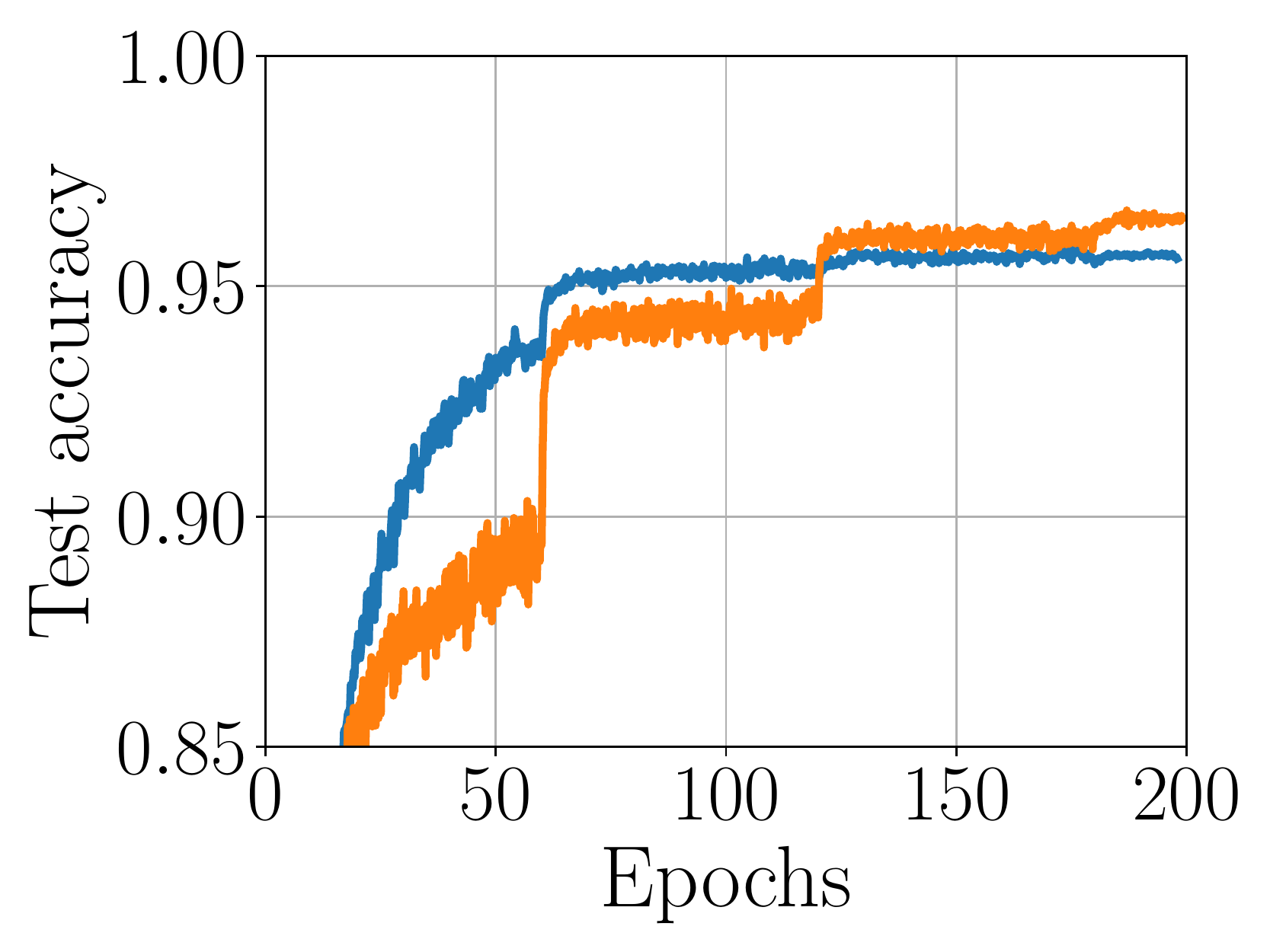}
    
}
  \subfloat[]{\includegraphics[width=0.33\textwidth]{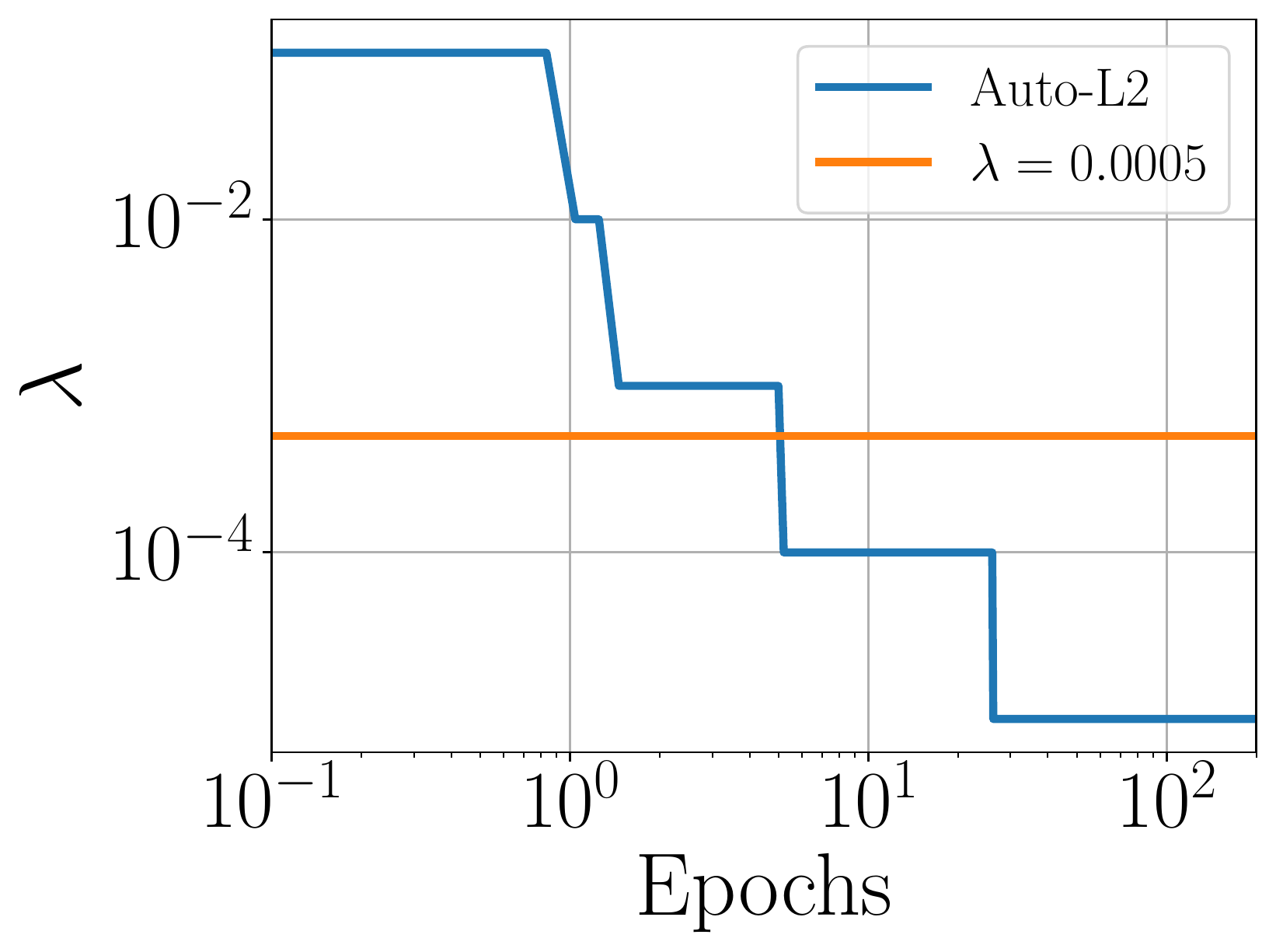} }
  \caption{Setup of \ref{fig:L2schedule} in the presence of a learning rate schedule. While \AutoLtwo~ trains faster and better in the beginning, it can't keep pace with the big jumps of constant $\lambda$.}
\end{figure}

\begin{figure}[ht!]
  \centering
{\includegraphics[width=0.75\textwidth]{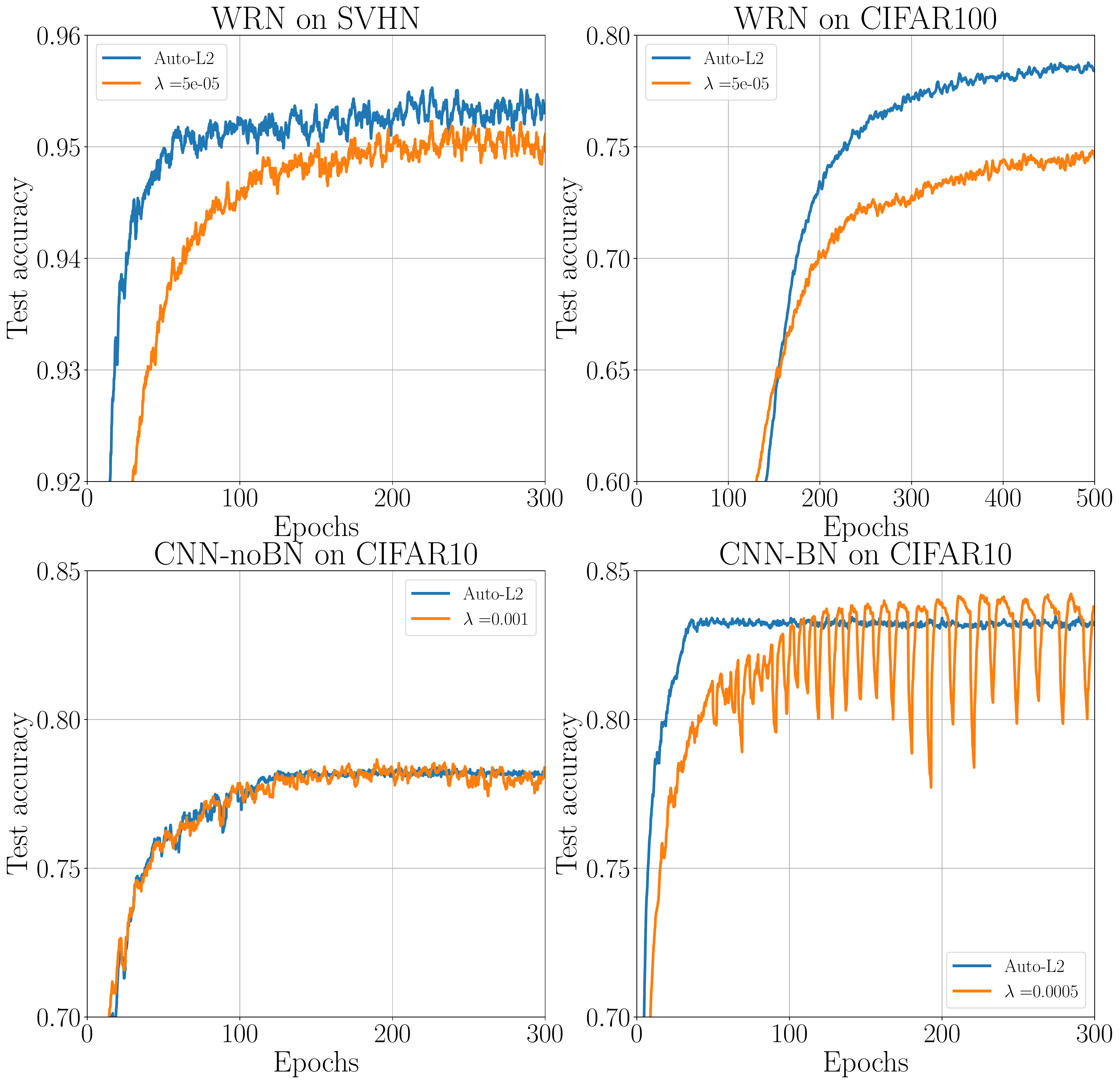}
    
}
  \caption{\AutoLtwo ~ for the other architectures considered in the main text compared with the optimal $L_2$ parameters.}
  \label{fig:L2other}
\end{figure}
\newpage
\section{Different initialization scales}
\label{sec:different_init}

We have discussed the dependence on the time to convergence in $\eta,\lambda$. Another quantity which is relevant for learning is $\sigma_w$ the scale of initialization, which for WRN we set to $1$. We can repeat the WRN experiment of figure \ref{fig:fig1a} for different $\sigma_w$. We see that the final performance depends very mildly on $\sigma_w$. This is what we expect when reaching equilibrium: the dependence of properties at initialization is eventually washed away. 

\begin{figure}[ht!]
  \centering
{\includegraphics[width=0.75\textwidth]{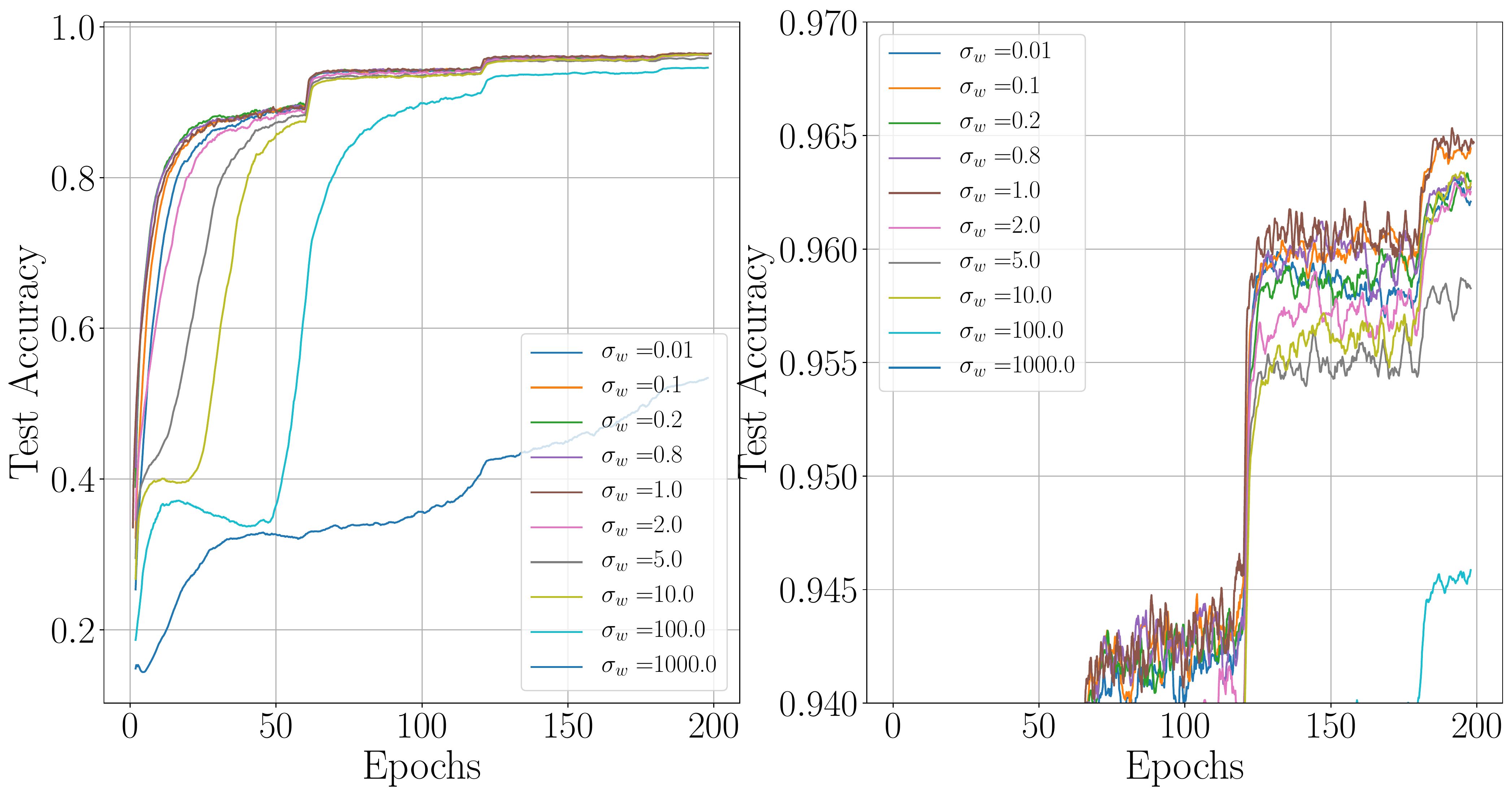}
    
}
  \caption{While models with different $\sigma_w$ behave rather differently at early times, at late times, this $\sigma_w$ dependence washes off for longer times.}
  \label{fig:difinit}
\end{figure}

\end{document}